\numberwithin{algorithm}{section}
\newtheorem{thm}{Theorem}[section]
\newtheorem{cor}{Corollary}[section]
\newtheorem{lem}{Lemma}[section]
\newtheorem{prop}{Proposition}[section]
\newtheorem{rem}{Remark}[section]
\newcommand{\sbLetter}[1]{\boldsymbol{\mathscr{#1}}}
\newcommand{\sbX}{\sbLetter{X}}
\newcommand{\sbXbar}{\bar{\sbX}}
\newcommand{\sbXtild}{\tilde{\sbX}}
\newcommand{\sbP}{\sbLetter{P}}
\newcommand{\sbPtild}{\tilde{\sbP}}
\newcommand{\sbY}{\sbLetter{Y}}
\newcommand{\sbYtild}{\tilde{\sbY}}
\newcommand{\sbZ}{\sbLetter{Z}}
\newcommand{\sbE}{\sbLetter{E}}
\newcommand{\sbB}{\sbLetter{B}}
\newcommand{\sbBtild}{\tilde{\sbB}}
\newcommand{\sbD}{\sbLetter{D}}
\newcommand{\sbG}{\sbLetter{G}}
\newcommand{\sbU}{\sbLetter{U}}
\newcommand{\sbV}{\sbLetter{V}}
\newcommand{\sbW}{\sbLetter{W}}
\newcommand{\sbLambda}{\sbLetter{\Lambda}}
\newcommand{\bA}{\mathbf{A}}
\newcommand{\bB}{\mathbf{B}}
\newcommand{\bD}{\mathbf{D}}
\newcommand{\bE}{\mathbf{E}}
\newcommand{\bI}{\mathbf{I}}
\newcommand{\bM}{\mathbf{M}}
\newcommand{\bN}{\mathbf{N}}
\newcommand{\bU}{\mathbf{U}}
\newcommand{\bV}{\mathbf{V}}
\newcommand{\bX}{\mathbf{X}}
\newcommand{\bY}{\mathbf{Y}}
\newcommand{\bZ}{\mathbf{Z}}
\newcommand{\ba}{\mathbf{a}}
\newcommand{\bd}{\mathbf{d}}
\newcommand{\bx}{\mathbf{x}}
\newcommand{\bzero}{\mathbf{0}}
\newcommand{\bLambda}{\boldsymbol{\Lambda}}
\newcommand{\modeLetter}[2]{#1_{(#2)}}
\newcommand{\modenX}{\modeLetter{\bX}{n}}
\newcommand{\modeiX}{\modeLetter{\bX}{i}}
\newcommand{\modenn}[3]{#1_{#2,(#3)}}
\newcommand{\modeiXi}{\modenn{\bX}{i}{i}}
\newcommand{\modeiLambdai}{\modenn{\bLambda}{i}{i}}
\newcommand{\modeiZ}{\modeLetter{\bZ}{i}}
\newcommand{\supInd}[1]{^{(#1)}}
\newcommand{\supIndk}{\supInd{k}}
\newcommand{\supIndkpone}{\supInd{k+1}}
\newcommand{\supIndK}{\supInd{K}}
\newcommand{\transp}{^\top}
\newcommand{\inv}{^{-1}}
\newcommand{\calA}{\mathcal{A}}
\newcommand{\calAc}{\calA_{\Omega}}
\newcommand{\calC}{\mathcal{C}}
\newcommand{\calH}{\mathcal{H}}
\newcommand{\calL}{\mathcal{L}}
\newcommand{\calO}{\mathcal{O}}
\newcommand{\calP}{\mathcal{P}}
\newcommand{\calQ}{\mathcal{Q}}
\newcommand{\calS}{\mathcal{S}}
\newcommand{\calT}{\mathcal{T}}
\newcommand{\calU}{\mathcal{U}}
\newcommand{\calSigma}{\mathcal{\Sigma}}
\newcommand{\bbR}{\mathbb{R}}
\newcommand{\setOneToN}[1]{\left\{#1\right\}_{i=1}^N}
\newcommand{\setkOneToInf}[1]{\left\{#1\right\}_{k=1}^\infty}
\newcommand{\setjOneToInf}[1]{\left\{#1\right\}_{j=1}^\infty}
\newcommand{\rmvec}{\textrm{vec}}
\newcommand{\xTimesToX}[2]{#1\times\cdots\times #2}
\newcommand{\IoneTimesToIN}{\xTimesToX{I_1}{I_N}}
\newcommand{\NIoneTimesToIN}{\xTimesToX{N\cdot I_1}{I_N}}
\newcommand{\sumOneToN}{\sum_{i=1}^N}
\newcommand{\vbar}{\;|\;}
\newcommand{\eqnBegin}{\begin{equation}}
\newcommand{\eqnEnd}{\end{equation}}
\newcommand{\eqnSBegin}{\begin{equation*}}
\newcommand{\eqnSEnd}{\end{equation*}}
\newcommand{\eqnarrayBegin}{\begin{eqnarray}}
\newcommand{\eqnarrayEnd}{\end{eqnarray}}
\newcommand{\eqnarraySBegin}{\begin{eqnarray*}}
\newcommand{\eqnarraySEnd}{\end{eqnarray*}}
\newcommand{\fold}{\textrm{fold}}
\newcommand{\unfold}{\textrm{unfold}}
\title{Robust Low-rank Tensor Recovery: Models and Algorithms}
\author{Donald Goldfarb$^*$ \and Zhiwei (Tony) Qin\thanks{Department of Industrial Engineering and Operations Research, Columbia University, New York, NY. Email: goldfarb@columbia.edu, zq2107@columbia.edu.}}
\begin{document}

\maketitle

\begin{abstract}
Robust tensor recovery plays an instrumental role in robustifying tensor decompositions for multilinear data analysis against outliers, gross corruptions and missing values  and has a diverse array of applications.  In this paper, we study the problem of robust low-rank tensor recovery in a convex optimization framework, drawing upon recent advances in robust Principal Component Analysis and tensor completion.  We propose tailored optimization algorithms with global convergence guarantees for solving both the constrained and the Lagrangian formulations of the problem.  These algorithms are based on the highly efficient alternating direction augmented Lagrangian and accelerated proximal gradient methods.  We also propose a nonconvex model that can often improve the recovery results from the convex models.  We investigate the empirical recoverability properties of the convex and nonconvex formulations and compare the computational performance of the algorithms on simulated data.  We demonstrate through a number of real applications the practical effectiveness of this convex optimization framework for robust low-rank tensor recovery.
\end{abstract}

\begin{keywords}
low-rank tensors, higher-order robust PCA, Tucker rank, tensor decompositions, alternating direction augmented Lagrangian, variable-splitting
\end{keywords}

\begin{AMS}
15A69, 90C25, 90C30, 65K10
\end{AMS}

\pagestyle{myheadings}
\thispagestyle{plain}
\markboth{D. GOLDFARB AND Z. QIN}{ROBUST LOW-RANK TENSOR RECOVERY}

\section{Introduction}\label{sec:intro}
The rapid advance in modern computer technology has given rise to the wide presence of multidimensional data (tensor data).  Traditional matrix-based data analysis is inherently two-dimensional, which limits its usefulness in extracting information from a multidimensional perspective.  On the other hand, tensor-based multilinear data analysis has shown that tensor models are capable of taking full advantage of the multilinear structures to provide better understanding and more precision.  At the core of multilinear data analysis lies tensor decomposition, which commonly takes two forms: CANDECOMP/PARAFAC (CP) decomposition \cite{carroll1970analysis,harshman1970foundations} and Tucker decomposition \cite{tucker1966some}.  Having originated in the fields of psychometrics and chemometrics, these decompositions are now widely used in other application areas such as computer vision \cite{vasilescu2003multilinear}, web data mining \cite{franz2009triplerank,sun2005cubesvd}, and signal processing \cite{abdallah2007mpeg}.

Tensor decomposition faces three major challenges: arbitrary outliers, missing data/partial observations, and computational efficiency.
Tensor decomposition resembles Principal Component Analysis (PCA) for matrices in many ways.  In fact, Tucker decomposition is also known as \emph{higher-order SVD} (HOSVD) \cite{de2000multilinear}.  It is well-known that PCA is sensitive to outliers and gross corruptions (non-Gaussian noise).  Since the CP and Tucker decompositions are also based on least-squares approximation, they are prone to these problems as well.  Algorithms based on non-convex formulations have been proposed to robustify tensor decompositions against outliers \cite{engelen2009fully,pravdova2001robust} and missing data \cite{acar2010scalable}.  However, they suffer from the lack of global optimality guarantees.

In practice, the underlying tensor data is often low-rank, even though the actual data may not be due to outliers and arbitrary errors.  In other words, the major part of the variation in the data is often governed by a relatively small number of latent factors.  It is thus possible to robustify tensor decompositions by reconstructing the low-rank part of the noisy data.  Besides its importance to tensor decompositions, robust low-rank tensor recovery also has many applications in its own right, e.g., shadow/occulsion removal in image processing.
Motivated by the aforementioned challenges and opportunities, we study the problem of low-rank tensor recovery that is robust to gross corruptions and missing values in a convex optimization framework.  Our work in this paper is built upon two major lines of previous work: Principal Component Pursuit (PCP) for Robust PCA \cite{candes2009robust} and Tensor Completion \cite{gandy2011tensor,liu2009tensor,tomioka2010estimation}.
The main advantages of the convex formulation that we use are that it removes outliers or corrects corruptions based on the global structure of the tensor, and its solution can be obtained by efficient convergent convex optimization algorithms that are easy to implement.  Moreover, this solution naturally leads to a robust Tucker decomposition, and the CP decomposition can also be obtained by applying a simple heuristic \cite{tomioka2010estimation}.


The structure of our paper is as follows.  In Section \ref{sec:notation} and \ref{sec:tensor_decomp}, we introduce notation and review some tensor basics.  We formulate robust tensor recovery problem as a convex program in Section \ref{sec:horpca}.  Two approaches for recovering a low-Tucker-rank tensor are considered.  We propose alternating direction augmented Lagrangian (ADAL) methods and accelerated proximal gradient (APG) methods, respectively, for solving the exact constrained version and the Lagrangian version of the robust tensor recovery problem.  All of the proposed algorithms have global convergence guarantees.  In Section \ref{sec:ncx}, we introduce a non-convex formulation which can nevertheless be solved approximately by an ADAL-based algorithm and can potentially take better advantage of more precise rank information.  In Section \ref{sec:exp}, we test the computational performance of the algorithms and analyze the empirical recovery properties of the models on synthetic data, and we demonstrate the practical effectiveness of our algorithms and models through a series of applications in 3D MRI recovery, fluorescence EEM data analysis, face representation, and foreground filtering/background reconstruction of a web game frame sequence.

\subsection{Mathematical Notation and Tensor Basics}\label{sec:notation}
As in \cite{kolda2009tensor}, we denote tensors by boldface Euler script letters, e.g., $\sbX$, matrices by boldface capital letters, e.g., $\bX$, vectors by boldface lowercase letters, e.g., $\bx$, and scalars by lowercase letters, e.g., x.  The \textit{order} of a tensor is the number of dimensions (a.k.a. \textit{ways} or \textit{modes}).  Let $\sbX$ be an $N$th-order tensor.  A \textit{fiber} of $\sbX$ is a column vector defined by fixing every index of $\sbX$ but one.  So for a matrix, each row is a mode-2 fiber.  The mode-$i$ \textit{unfolding} (matricization) of the tensor $\sbX$ is the matrix denoted by $\modeiX$ that is obtained by arranging (lexicographically in the indices other than the $i$-th index) the mode-$i$ fibers as the columns of the matrix.  The vectorization of $\sbX$ is denoted by $\rmvec(\sbX)$.
\begin{description}
  \item[Inner product and norms] The inner product of two tensors $\sbX,\sbY \in \bbR^{I_1\times\cdots\times I_N}$ is defined as $\langle\sbX,\sbY\rangle := \rmvec(\sbX)^T\rmvec(\sbY)$, and the Frobenius norm of $\sbX$ is denoted by $\|\sbX\| := \sqrt{\langle \sbX,\sbX \rangle}$.  The nuclear norm (or trace norm) $\|\bX\|_*$ of a matrix $\bX$ is the sum of its singular values, i.e. $\|\bX\|_* := \sum_i \sigma_i$, where the SVD of $\bX = \bU\textrm{diag}(\sigma)\bV^T$.  The $L_1$ norm of a vector $\bx$ is defined as $\|\bx\|_1 := \sum_i |x_i|$.  Likewise, for a matrix $\bX$ and a tensor $\sbX$, $\|\bX\|_1 := \|\rmvec(\bX)\|_1$, and $\|\sbX\|_1 := \|\rmvec(\sbX)\|_1$.
  \item[Vector outer product] The vector outer product is denoted by the symbol $\circ$.  The outer product of $N$ vectors, $\ba\supInd{n} \in \bbR^{I_n}, n = 1,\cdots,N$ is an $N$-th-order tensor, defined as
      \begin{equation*}
        \left( \ba\supInd{1}\circ\ba\supInd{2}\circ\cdots\circ\ba\supInd{N}\right)_{i_1 i_2\cdots i_N} := a_{i_1}\supInd{1}a_{i_2}\supInd{2}\cdots a_{i_N}\supInd{N}, \quad \forall 1\leq i_n \leq I_n, 1 \leq n \leq N.
      \end{equation*}

  \item[Tensor-matrix multiplication] The multiplication of a tensor $\sbX$ of size $I_1 \times I_2 \times \cdots \times I_N$ with a matrix $\bA \in \bbR^{J\times I_n}$ in mode $n$ is denoted by
    $\sbX\times_n\bA = \sbY \in \bbR^{I_1\times\cdots\times I_{n-1}\times J \times I_{n+1}\times\cdots\times I_N}$,
  and is defined in terms of mode-$n$ unfolding as $\modeLetter{\bY}{n} := \bA\modenX$.


  \item[Linear operator and adjoint] We denote linear operators by capital letters in calligraphic font, e.g. $\calA$, and $\calA(\sbX)$ denotes the result of applying the linear operator $\calA$ to the tensor $\sbX$.  The adjoint of $\calA$ is denoted by $\calA^*$.

  \item[Homogeneous tensor array] For the ease of notation and visualization, we define a \textit{homogeneous tensor array} (or \textit{tensor array} for short) as the tensor obtained by stacking a set of component tensors of the same size along the first mode.  An $N$-component tensor array
      $\sbXbar := \left(
         \begin{array}{c}
           \sbX_1 \\
           \vdots \\
           \sbX_N \\
         \end{array}
       \right) \in \bbR^{N\cdot I_1\times\cdots\times I_N}
      $ is a `vector' of homogeneous tensors, which we write as TArray($\sbX_1,\cdots,\sbX_N$).  A linear operator defined on a tensor array operates at the level of the component tensors.  As an example, consider the linear (summation) operator $\calA: \bbR^{\NIoneTimesToIN}\rightarrow\bbR^{\IoneTimesToIN}$ such that $\calA(\sbXbar) := \sumOneToN\sbX_i$.  Its adjoint is then the linear operator
        $\calA^*:\bbR^{I_1\times\cdots\times I_N}\rightarrow \bbR^{N\cdot I_1\times\cdots\times I_N}$ such that $\calA^*(\sbX) := \textrm{TArray}(\sbX,\cdots,\sbX)$.
      We use the non-calligraphic $\bA$ to denote the matrix corresponding to the equivalent operation carried out by $\calA$ on the mode-1 unfolding $\modeLetter{\bar{\bX}}{1}$ of $\sbXbar$, where $\modeLetter{\bar{\bX}}{1} = \textrm{TArray}(\modenn{\bX}{1}{1},\cdots,\modenn{\bX}{N}{1})$.  In this example, therefore, $\bA = \left(
                                                       \begin{array}{ccc}
                                                         \bI & \cdots & \bI \\
                                                       \end{array}
                                                     \right) \in \bbR^{I_1 \times N\cdot I_1}
      $.  

\end{description}

\subsection{Tensor decompositions and ranks}\label{sec:tensor_decomp}

The CP decomposition approximates $\sbX$ as
    $\sbX \approx \sum_{r=1}^R \lambda_r \ba_r\supInd{1} \circ \ba_r\supInd{2} \circ \cdots \circ \ba_r\supInd{N}$,
where $R > 0$ is a given integer, $\lambda_r \in \bbR$ and $\ba_r\supInd{n} \in \bbR^{I_n}$ for $r = 1,\cdots,R$.  The CP decomposition is formulated as a non-convex optimization problem and is usually computed via an alternating least-squares (ALS) algorithm; see, e.g., \cite{tomasi2006comparison}.
The rank of $\sbX$, denoted by rank($\sbX$), is defined as the smallest value of $R$ such that the approximation holds with equality.  Computing the rank of a specific given tensor is NP-hard in general \cite{hastad1990tensor}.

The Tucker decomposition approximates $\sbX$ as
    $\sbX \approx \sbG \times_1 \bU\supInd{1} \times_2 \bU\supInd{2} \cdots \times_N \bU\supInd{N}$,
where $\sbG \in \bbR^{r_1\times\cdots\times r_N}$ is called the \emph{core tensor}, and the factor matrices $\bU\supInd{n} \in \bbR^{I_n\times r_n}, n = 1,\cdots,N$ are all column-wise orthonormal.  $(r_1\times\cdots\times r_N)$ are given integers.
The $n$-rank (or mode-$n$ rank) of $\sbX$, denoted by $\textrm{rank}_n(\sbX)$, is the column rank of $\modenX$.  The set of $N$ $n$-ranks of a tensor is also called the Tucker rank.  If $\sbX$ is of rank-($r_1,\cdots,r_N$), then the approximation holds with equality, and for $n=1,\cdots,N$, $\bU\supInd{n}$ is the matrix of the left singular vectors of $\modenX$.  The Tucker decomposition is also posed as a non-convex optimization problem.  A widely-used approach for computing the factor matrices is called the \emph{higher-order orthogonal iteration} (HOOI) \cite{de2000best}, which is essentially an ALS method based on computing the dominant left singular vectors of each $\modenX$.  

\subsection{Robust PCA}
PCA gives the optimal low-dimensional estimate of a given matrix under additive i.i.d. Gaussian noise, but it is also known to be susceptible to gross corruptions and outliers.  Robust PCA (RPCA) is a family of methods that aims to make PCA robust to large errors and outliers.  Candes et. al. \cite{candes2009robust} proposed to approach RPCA via Principal Component Pursuit (PCP), which decomposes a given observation (noisy) matrix $\bB$ into a low-rank component $\bX$ and a sparse component $\bE$ by solving the optimization problem
  $\min_{\bX,\bE} \big\{ \textrm{rank}(\bX) + \lambda\|\bE\|_0 \;|\; \bX + \bE = \bB\big\}$.
This problem is NP-hard to solve, so \cite{candes2009robust} replaces the rank and cardinality ($\|\cdot\|_0$) functions with their convex surrogates, the nuclear norm and the $L_1$ norm respectively, and solves the following convex optimization problem:
\begin{equation}\label{eq:rpca}
  \min_{\bX,\bE} \big\{ \|\bX\|_* + \lambda\|\bE\|_1 \;|\; \bX + \bE = \bB \big\}.
\end{equation}
It has been shown that the optimal solution to problem \eqref{eq:rpca} exactly recovers the low-rank matrix from arbitrary corruptions as long as the errors $\bE$ are sufficiently sparse relative to the rank of $\bX$, or more precisely, when the following bounds hold \cite{candes2009robust}:
    $\textrm{rank}(\bX) \leq \frac{\rho_r\max(n,m)}{\mu(\log\min(n,m))^2}, \quad \|\bE\|_0 \leq \rho_s mn$,
where $\rho_r$ and $\rho_s$ are positive constants, and $\mu$ is the incoherence parameter.

\section{Higher-order RPCA (Robust Tensor Recovery)}\label{sec:horpca}
For noisy tensor data subject to outliers and arbitrary corruptions, it is desirable to be able to exploit the structure in all dimensions of the data.  A direct application of RPCA essentially considers the low-rank structure in only one of the unfoldings and is often insufficient.  Hence, we need a model that is directly based on tensors.  To generalize RPCA to tensors, we regularize the Tucker rank Trank($\sbX$) of a tensor $\sbX$, leading to the following \textit{tensor PCP} optimization problem:
  $\min_{\sbX,\sbE} \big\{\textrm{Trank}(\sbX) + \lambda\|\sbE\|_0, s.t. \quad \sbX + \sbE = \sbB\big\}$.  This problem is NP-hard to solve, so we replace Trank($\sbX$) by the convex surrogate CTrank$(\sbX)$, and $\|\sbE\|_0$ by $\|\sbE\|_1$ to make the problem tractable:
\begin{equation}\label{eq:ho-rpca}
  \min_{\sbX,\sbE} \big\{ \textrm{CTrank}(\sbX) + \lambda\|\sbE\|_1 \;|\; \sbX + \sbE = \sbB \big\}.
\end{equation}
We call the model \eqref{eq:ho-rpca} \textit{Higher-order RPCA} (HoRPCA).  In the subsequent sections, we consider variations of problem \eqref{eq:ho-rpca} and develop efficient algorithms for solving these models.

\subsection{Singleton Model}\label{sec:singleton}
In the Singleton model, the tensor rank regularization term is the sum of the $N$ nuclear norms $\|\modeiX\|_*$ of the mode-$i$ unfoldings, $i = 1,\cdots, N$ of $\sbX$, i.e. $\textrm{CTrank}(\sbX) := \sum_i \|\modeiX\|_*$.  HoRPCA with the Singleton low-rank tensor model is thus the convex optimization problem
\begin{eqnarray}\label{eq:ho-rpca-single}
  \min_{\sbX,\sbE} \left\{ \sum_{i=1}^N \|\modeiX\|_* + \lambda_1\|\sbE\|_1 \;|\; \sbX + \sbE = \sbB \right\}.
\end{eqnarray}
This form of $\textrm{CTrank}(\sbX)$ was also considered in \cite{liu2009tensor,gandy2011tensor,tomioka2010estimation} for recovering low-rank tensors from partial observations of the data.
To solve problem \eqref{eq:ho-rpca-single}, we develop an alternating direction augmented Lagrangian (ADAL) method \cite{glowinski1975adal, gabay1976dual}\footnote{This class of algorithms is also known as the alternating direction method of multipliers (ADMM) \cite{boyd2010distributed,eckstein1992douglas}.} to take advantage of the problem structure.  Specifically, by applying variable-splitting (e.g. see \cite{bertsekas1989parallel,boyd2010distributed}) to $\sbX$ and introducing $N$ auxiliary variables $\sbX_1,\cdots,\sbX_N$, problem \eqref{eq:ho-rpca-single} is reformulated as
\begin{eqnarray}\label{eq:ho-rpca-c}
  \min_{\sbX_1,\cdots,\sbX_N,\sbE} \left\{ \sumOneToN \|\modeiXi\|_* + \lambda_1\|\sbE\|_1 \vbar \sbX_i + \sbE = \sbB, \quad i = 1,\cdots,N \right\}.
\end{eqnarray}
Note that equality among the $\sbX_i$'s is enforced implicitly by the constraints, so that an additional auxiliary variable as in \cite{gandy2011tensor} is not required.

Before we develop the ADAL algorithm for solving problem \eqref{eq:ho-rpca-c} (see Algorithm \ref{alg:adal-horpca} below), we need to define several operations.  $\textrm{fold}_i(\bX)$ returns the tensor $\sbZ$ such that $\modeiZ = \bX$.  $\mathcal{T}_\mu(\bX)$ is the matrix singular value thresholding operator: $\calT_\mu(\bX) := \bU\textrm{diag}(\bar{\sigma})\bV^T$, where $\bX = \bU\textrm{diag}(\sigma)\bV^T$ is the SVD of $\bX$ and $\bar{\sigma} := \max(\sigma-\mu,0)$. We define $\calT_{i,\mu}(\sbX) := \textrm{fold}_i(\calT_\mu(\modeiX))$.  $\mathcal{S}_\mu(\sbX)$ is the shrinkage operator on vec($\sbX$) and returns the result as a tensor.  The vector shrinkage operator is defined as $\calS_\mu(\bx) := \textrm{sign}(\bx)\max(|\bx|-\mu,0)$, where the operations are all element-wise.

Problem \eqref{eq:ho-rpca-c} is in the generic form
\begin{eqnarray}\label{eq:adal_generic}
    \min_{x,y} \big\{ f(x) + g(y) \vbar Ax = y \big\},
\end{eqnarray}
which has the augmented Lagrangian
    $\calL(x,y,\gamma) := f(x) + g(y) + \gamma^T(y-Ax) + \frac{\mu}{2}\|Ax-y\|_2^2$.
\begin{algorithm}
\caption{ADAL}
\begin{algorithmic}[1]\label{alg:adal}
\small{
\STATE Choose $\gamma^0$.
\FOR{$k = 0,1,\cdots,K$}
    \STATE $x^{k+1} \gets \arg\min_{x}\mathcal{L}(x, y^k, \gamma^k )$
    \STATE $y^{k+1} \gets \arg\min_{y}\mathcal{L}(x^{k+1}, y, \gamma^k )$
    \STATE $\gamma^{k+1} \gets \gamma^k - \mu(Ax^{k+1} - y^{k+1})$
\ENDFOR
\RETURN $y^K$
}
\end{algorithmic}
\end{algorithm}
In \cite{eckstein1992douglas}, Eckstein and Bertsekas proved Theorem \ref{thm:admm_conv} below, establishing the convergence of the ADAL, Algorithm \ref{alg:adal}:
\begin{thm}\label{thm:admm_conv}
Consider problem \eqref{eq:adal_generic}, where both $f$ and $g$ are proper, closed, convex functions, and $A \in \mathbb{R}^{n\times m}$ has full column rank.  Then, starting with an arbitrary $\mu > 0$ and $x^0,y^0 \in \mathbb{R}^m$, the sequence $\{x^k,y^k,\gamma^k\}$ generated by Algorithm \ref{alg:adal} converges to a Kuhn-Tucker pair $\big((x^*,y^*),\gamma^*\big)$ of problem \eqref{eq:adal_generic}, if \eqref{eq:adal_generic} has one.  If \eqref{eq:adal_generic} does not have an optimal solution, then at least one of the sequences $\{(x^k,y^k)\}$ and $\{\gamma^k\}$ diverges.
\end{thm}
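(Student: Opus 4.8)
The plan is to follow Eckstein and Bertsekas and prove the result by showing that Algorithm~\ref{alg:adal} is an instance of Douglas--Rachford splitting applied to the dual of problem~\eqref{eq:adal_generic}, and then to invoke the convergence theory of the proximal point algorithm for maximal monotone operators. The advantage of this route is that it delivers \emph{both} the convergence-to-a-Kuhn--Tucker-pair statement and the divergence dichotomy from a single underlying fact: a proximal-point-type iteration either converges to a zero of a maximal monotone operator or produces an unbounded sequence.

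First I would dualize. Since $f$ and $g$ are proper, closed, and convex, the Lagrangian dual of~\eqref{eq:adal_generic} is $\min_\gamma \{ f^*(A^\top\gamma) + g^*(-\gamma) \}$, where $f^*,g^*$ denote the Fenchel conjugates. I would then introduce the two maximal monotone operators $\Phi := A\,\partial f^*(A^\top\,\cdot)$ and $\Psi := -\partial g^*(-\,\cdot)$; their sum $\Phi+\Psi$ is the monotone operator whose zeros are exactly the optimal dual multipliers, and such a zero exists precisely when~\eqref{eq:adal_generic} has a Kuhn--Tucker pair. The technical core is then to verify, step by step, that the three updates in Algorithm~\ref{alg:adal} reproduce one Douglas--Rachford sweep for $\Phi+\Psi$ with parameter $\mu$. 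Concretely, using the conjugacy identity $(\partial f)^{-1}=\partial f^*$ together with the optimality conditions of the two inner minimizations, I would show that the $x$-minimization realizes the resolvent $(I+\mu\Phi)^{-1}$ and that the $y$-minimization combined with the multiplier update realizes the resolvent $(I+\mu\Psi)^{-1}$, so that the sequence $\{\gamma^k\}$ coincides with the iterates of the firmly nonexpansive Douglas--Rachford operator.

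With that identification in hand, convergence follows from standard results: a firmly nonexpansive operator whose fixed-point set is nonempty generates iterates that converge to a fixed point, and a fixed point corresponds to a zero of $\Phi+\Psi$, i.e.\ to an optimal dual multiplier $\gamma^*$. To pass from the convergence of $\{\gamma^k\}$ back to the primal iterates I would use the full-column-rank hypothesis on $A$: injectivity of $A$ makes the $x$-subproblem uniquely solvable and $x$ determined by $Ax$, so convergence of the multipliers forces convergence of $\{x^k\}$, and the relation $\gamma^{k+1}-\gamma^k = -\mu(Ax^{k+1}-y^{k+1})\to 0$ then pins down the limit of $\{y^k\}$ and shows that $(x^*,y^*)$ satisfies $Ax^*=y^*$ together with the stationarity conditions, i.e.\ forms a Kuhn--Tucker pair. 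For the case in which~\eqref{eq:adal_generic} has no optimal solution, $\Phi+\Psi$ has no zero, the Douglas--Rachford operator has no fixed point, and the general theory then forces the generated sequence to be unbounded, yielding the claimed divergence of $\{(x^k,y^k)\}$ or $\{\gamma^k\}$.

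I expect the main obstacle to be the exact resolvent identification in the second step: matching the two inner minimizations and the dual update to the resolvents of $\Phi$ and $\Psi$ requires careful bookkeeping of the conjugacy relations and of how the scaled multiplier enters each resolvent. This is precisely where closedness of $f$ and $g$ (to guarantee $f^{**}=f$, $g^{**}=g$ and maximality of the operators) and the full column rank of $A$ (to guarantee well-definedness and single-valuedness of the $x$-resolvent, and hence primal recovery) are both indispensable.
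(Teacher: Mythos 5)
Your proposal is correct and is essentially the same argument the paper relies on: the paper does not prove Theorem~\ref{thm:admm_conv} itself but attributes it to Eckstein and Bertsekas \cite{eckstein1992douglas}, whose proof is precisely the identification of the ADAL iteration with Douglas--Rachford splitting applied to the dual problem, viewed as a proximal point iteration for a maximal monotone operator, with the convergence/divergence dichotomy inherited from the proximal point theory. Your resolvent bookkeeping, the role of closedness of $f$ and $g$, the use of full column rank of $A$ for recovering the primal iterates, and the fixed-point/unboundedness dichotomy all match that reference's strategy.
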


The augmented Lagrangian for problem \eqref{eq:ho-rpca-c} is
\begin{equation*}
    \sumOneToN \|\modeiXi\|_* + \lambda_1\|\sbE\|_1 + \sumOneToN \left(\frac{1}{2\mu}\|\sbX_i+\sbE-\sbB\|^2 - \langle \sbLambda_i, \sbX_i+\sbE-\sbB \rangle\right).
\end{equation*}
Observe that given $\sbE$, the $\sbX_i$'s can be solved for independently by the singular value thresholding operator.  Conversely, with fixed $\sbX_i$'s, the augmented Lagrangian subproblem with respect to $\sbE$ has a closed-form solution as we now show.  The subproblem under consideration is
    $\min_{\sbE} \left\{ \frac{1}{2}\sumOneToN\| \sbE + \sbX_i - \sbB - \mu\sbLambda_i \|^2 + \mu\lambda_1\|\sbE\|_1 \right\}$,
which can be written concisely as
\begin{equation}\label{eq:horpca-Esubprob}
    \min_{\sbE} \quad \frac{1}{2}\| \calC(\sbE) + \sbD \|^2 + \mu\lambda_1\|\sbE\|_1,
\end{equation}
where $\calC(\sbE) := \textrm{TArray}(\sbE,\cdots,\sbE), \textrm{and} \; \sbD = \textrm{TArray}(\sbX_1-\sbB-\mu\sbLambda_1,\cdots,\sbX_N-\sbB-\mu\sbLambda_N)$.
The following proposition shows that Problem \eqref{eq:horpca-Esubprob} has a closed-form solution.

\begin{prop}\label{prop:horpca-Esub-sol}
Problem \eqref{eq:horpca-Esubprob} is equivalent to
\begin{equation}\label{eq:horpca-Esubprob-easy}
    \min_{\sbE} \quad \frac{1}{2}\| \sbE + \frac{\calC^*(\sbD)}{N} \|^2 + \frac{\mu\lambda_1}{N}\|\sbE\|_1,
\end{equation}
which has a closed-form solution $\sbE = \calS_{\frac{\mu\lambda_1}{N}}\left(-\frac{1}{N}\sumOneToN(\sbX_i-\sbB-\mu\sbLambda_i)\right)$.
\end{prop}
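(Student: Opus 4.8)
The plan is to collapse the composite objective in \eqref{eq:horpca-Esubprob} to a single standard $L_1$-proximal problem by completing the square, and then to invoke the fact that such a problem is solved in closed form by the shrinkage operator $\calS$.

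First I would make the action of the adjoint $\calC^*$ explicit. Writing $\sbD_i := \sbX_i-\sbB-\mu\sbLambda_i$ so that $\sbD = \textrm{TArray}(\sbD_1,\cdots,\sbD_N)$, and noting that $\calC(\sbE) = \textrm{TArray}(\sbE,\cdots,\sbE)$ simply stacks $N$ copies of $\sbE$, the inner product factors as $\langle\calC(\sbE),\sbD\rangle = \sumOneToN\langle\sbE,\sbD_i\rangle = \langle\sbE,\sumOneToN\sbD_i\rangle$. Hence $\calC^*(\sbD) = \sumOneToN\sbD_i = \sumOneToN(\sbX_i-\sbB-\mu\sbLambda_i)$. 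The same stacking structure gives $\|\calC(\sbE)+\sbD\|^2 = \sumOneToN\|\sbE+\sbD_i\|^2$.

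Next I would expand this sum and complete the square in $\sbE$:
\begin{equation*}
  \sumOneToN\|\sbE+\sbD_i\|^2 = N\|\sbE\|^2 + 2\langle\sbE,\calC^*(\sbD)\rangle + \sumOneToN\|\sbD_i\|^2 = N\left\|\sbE+\tfrac{\calC^*(\sbD)}{N}\right\|^2 + c,
\end{equation*}
where $c$ gathers all terms independent of $\sbE$. Substituting into \eqref{eq:horpca-Esubprob}, discarding the constant $c$, and dividing the objective by the positive scalar $N$ --- neither of which alters the minimizer --- yields precisely \eqref{eq:horpca-Esubprob-easy}, establishing the claimed equivalence.

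Finally, I would recognize \eqref{eq:horpca-Esubprob-easy} as the prox problem $\min_{\sbE}\frac{1}{2}\|\sbE-\sbW\|^2+\tau\|\sbE\|_1$ with $\sbW := -\calC^*(\sbD)/N$ and threshold $\tau := \mu\lambda_1/N$. This objective separates over the entries of $\sbE$, so it can be minimized coordinatewise; the scalar optimality condition $0\in(e-w)+\tau\,\partial|e|$ gives $e=\textrm{sign}(w)\max(|w|-\tau,0)$, i.e. $\sbE=\calS_\tau(\sbW)$. Substituting $\sbW$ and $\tau$ and recalling the formula for $\calC^*(\sbD)$ produces the stated closed-form solution. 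There is no genuine obstacle here; the only point needing care is the bookkeeping for $\calC^*$ and the completion of the square, after which the result reduces to the routine $L_1$-proximal computation.
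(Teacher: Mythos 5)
Your proof is correct, but it reaches the conclusion by a different route than the paper. The paper establishes the equivalence of \eqref{eq:horpca-Esubprob} and \eqref{eq:horpca-Esubprob-easy} by comparing first-order optimality conditions: using the identity $\calC^*\calC(\sbE) = N\sbE$, it writes the subdifferential condition $0 \in N\sbE + \calC^*(\sbD) + \mu\lambda_1\partial\|\sbE\|_1$ for \eqref{eq:horpca-Esubprob}, divides by $N$, and observes that the result is precisely the optimality condition for \eqref{eq:horpca-Esubprob-easy}; the shrinkage formula for the solution is then taken as standard. You instead work on the objective function itself: writing $\sbD_i$ for the $i$th component of $\sbD$, you expand $\|\calC(\sbE)+\sbD\|^2 = \sumOneToN\|\sbE+\sbD_i\|^2$, complete the square, and show that the two objectives agree up to a positive scalar factor and an additive constant, after which you derive the closed form coordinatewise from the scalar condition $0\in(e-w)+\tau\,\partial|e|$. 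The paper's argument is more compact but leans on convexity (subdifferential conditions characterize minimizers only for convex problems); yours is more elementary and establishes a slightly stronger form of equivalence --- the objectives are affinely related, so the two problems have identical minimizer sets independent of any convexity appeal --- and it makes the derivation of the shrinkage solution explicit rather than citing it. Both arguments ultimately rest on the same structural fact, namely that stacking $N$ copies of $\sbE$ makes $\calC^*\calC$ act as $N$ times the identity.
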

\begin{proof}
Since $\calC^*\calC(\sbE) = N\sbE$, the first-order optimality conditions for \eqref{eq:horpca-Esubprob} are
  $0 \in N\sbE + \calC^*(\sbD) + \mu\lambda_1\partial\|\sbE\|_1
  \Leftrightarrow 0 \in \sbE + \frac{\calC^*(\sbD)}{N} + \frac{\mu\lambda_1}{N}\partial\|\sbE\|_1$,
which are the optimality conditions for \eqref{eq:horpca-Esubprob-easy}.
\end{proof}

In Algorithm \ref{alg:adal-horpca}, we state our ADAL algorithm that alternates between two blocks of variables, $\{\sbX_1,\cdots,\sbX_N\}$ and $\sbE$.  By defining $f(\sbX_1,\cdots,\sbX_N) := \sumOneToN\|\modeiXi\|_*$ and $g(\sbE) := \lambda_1\|\sbE\|_1$, it is easy to verify that problem \eqref{eq:ho-rpca-c} and Algorithm \ref{alg:adal-horpca} satisfy the conditions in Theorem \ref{thm:admm_conv}.  Hence, the convergence of Algorithm \ref{alg:adal-horpca} follows from Theorem \ref{thm:admm_conv}:
\begin{cor}\label{thm:horpca_conv}
The sequence $\{\sbX_1\supIndk,\cdots,\sbX_N\supIndk,\sbE\supIndk\}$ generated by Algorithm \ref{alg:adal-horpca} converges to an optimal solution $(\sbX_1^*,\cdots,\sbX_N^*,\sbE^*)$ of problem \eqref{eq:ho-rpca-c}.  Hence, the sequence $\{\frac{1}{N}(\sumOneToN\sbX_i\supIndk),\sbE\supIndk\}$ converges to an optimal solution of HoRPCA with the Singleton model \eqref{eq:ho-rpca-single}.
\end{cor}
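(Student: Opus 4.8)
The statement is a corollary of Theorem \ref{thm:admm_conv}, so the plan is to exhibit problem \eqref{eq:ho-rpca-c} as an instance of the generic two-block program \eqref{eq:adal_generic}, identify Algorithm \ref{alg:adal-horpca} as the corresponding specialization of Algorithm \ref{alg:adal}, verify the three hypotheses of Theorem \ref{thm:admm_conv}, and then read off the two conclusions. Following the text, one block is $(\sbX_1,\cdots,\sbX_N)$ carrying $f(\sbX_1,\cdots,\sbX_N):=\sumOneToN\|\modeiXi\|_*$ and the other is $\sbE$ carrying $g(\sbE):=\lambda_1\|\sbE\|_1$. The $N$ constraints $\sbX_i+\sbE=\sbB$ are encoded by the replication operator $\calC$ defined before Proposition \ref{prop:horpca-Esub-sol}: up to relabeling which block plays the role of $x$, a sign flip, and an affine shift absorbing the constant tensor $\textrm{TArray}(\sbB,\cdots,\sbB)$, they take the form $Ax=y$ with $A=-\calC$ required by \eqref{eq:adal_generic}.

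First I would check that $f$ and $g$ are proper, closed, and convex. Each $\|\modeiXi\|_*$ is a matrix norm precomposed with the linear mode-$i$ unfolding, hence convex, finite, and continuous, so a finite sum of such terms is proper, closed (lower semicontinuous), and convex; the same holds for $g=\lambda_1\|\cdot\|_1$ since $\lambda_1>0$. The crux is the full-column-rank condition on the linear constraint operator: the operator linking the two blocks is $\calC$, and the identity $\calC^*\calC(\sbE)=N\sbE$ already established in the proof of Proposition \ref{prop:horpca-Esub-sol} shows that $\tfrac1N\calC^*$ is a left inverse of $\calC$, so $\calC$ is injective, i.e. has full column rank. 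Finally I would confirm that \eqref{eq:ho-rpca-c} actually has an optimal solution, so that we land in the convergent branch of Theorem \ref{thm:admm_conv} rather than its divergent one: the feasible set $\{\sbX_i+\sbE=\sbB\}$ is a nonempty affine set (take $\sbE=\bzero$, $\sbX_i=\sbB$), on which the objective reduces to $\sumOneToN\|\modeLetter{(\bB-\bE)}{i}\|_*+\lambda_1\|\sbE\|_1$, a closed function that is coercive in $\sbE$, so Weierstrass guarantees a minimizer.

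With all hypotheses in place, Theorem \ref{thm:admm_conv} yields that $\{\sbX_1\supIndk,\cdots,\sbX_N\supIndk,\sbE\supIndk\}$ converges to an optimal (Kuhn--Tucker) solution $(\sbX_1^*,\cdots,\sbX_N^*,\sbE^*)$ of \eqref{eq:ho-rpca-c}, which is the first assertion. For the second assertion I would exploit that the $N$ equality constraints force $\sbX_1=\cdots=\sbX_N=\sbB-\sbE$ at every feasible point, in particular at the limit, so that $\sbX_1^*=\cdots=\sbX_N^*=:\sbX^*$ and consequently $\tfrac1N\sumOneToN\sbX_i\supIndk\to\sbX^*$. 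Because the map $\sbX\mapsto(\sbX,\cdots,\sbX)$ is a bijection between the feasible set of the Singleton model \eqref{eq:ho-rpca-single} and that of \eqref{eq:ho-rpca-c} that preserves the objective value, optimality transfers and $(\sbX^*,\sbE^*)$ solves \eqref{eq:ho-rpca-single}.

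The only genuinely delicate point is the full-column-rank verification, together with the bookkeeping needed to match the literal $Ax=y$ template of \eqref{eq:adal_generic}; once the replication operator $\calC$ is identified as the relevant linear map and the identity $\calC^*\calC(\sbE)=N\sbE$ is invoked, the remaining steps are routine convex-analysis checks and an elementary argument that the constraints collapse the auxiliary copies $\sbX_i$ to a single tensor.
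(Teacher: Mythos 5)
Your overall strategy is the same as the paper's---cast \eqref{eq:ho-rpca-c} as an instance of \eqref{eq:adal_generic} and invoke Theorem \ref{thm:admm_conv}---and you supply rigor the paper omits (injectivity of $\calC$ via $\calC^*\calC(\sbE)=N\sbE$, existence of a minimizer via coercivity on the feasible set, and the explicit collapse $\sbX_1^*=\cdots=\sbX_N^*$ behind the word ``Hence''). However, your specific reduction has a genuine flaw: you make $\sbE$ the variable $x$ (the block multiplied by $A=-\calC$) and the $\sbX_i$'s the variable $y$. In Algorithm \ref{alg:adal} the $x$-block is minimized \emph{first} and the $y$-block \emph{second}, so the specialization of Algorithm \ref{alg:adal} under your identification updates $\sbE$ before the $\sbX_i$'s, whereas Algorithm \ref{alg:adal-horpca} updates $\sbX_1,\cdots,\sbX_N$ first and then $\sbE$. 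These two orderings are \emph{not} the same algorithm up to relabeling or index shifting: within each cycle the multiplier $\sbLambda_i\supIndk$ is used by both block updates and refreshed only once, so in Algorithm \ref{alg:adal-horpca} the $\sbE$-step sees the fresh $\sbX_i\supIndkpone$, while in your scheme the $\sbX$-step would see the fresh $\sbE\supIndkpone$; one cannot shift indices to turn one sequence into the other. Consequently, Theorem \ref{thm:admm_conv} applied to your reduction certifies convergence of an order-swapped variant, not of Algorithm \ref{alg:adal-horpca} as stated.

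The fix is to use the identification that the paper's assignment of $f$ and $g$ implicitly dictates: make the $\sbX$-block the variable $x$. Take $x=\sbXbar=\textrm{TArray}(\sbX_1,\cdots,\sbX_N)$ with $f(\sbXbar)=\sumOneToN\|\modeiXi\|_*$, take $A=\bI$ (full column rank trivially), and lift $\sbE$ into the $y$-variable by setting $y:=\sbBtild-\calC(\sbE)$ with
$g(y):=\lambda_1\left\|\tfrac{1}{N}\calC^*(\sbBtild-y)\right\|_1$ if $\sbBtild-y$ lies in the range of $\calC$, and $g(y):=+\infty$ otherwise;
this $g$ is proper, closed and convex precisely because $\calC$ is injective with closed range, so your observation $\calC^*\calC(\sbE)=N\sbE$ is still the key fact, just deployed for a different purpose. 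The constraint $Ax=y$ then reads $\sbX_i+\sbE=\sbB$ for all $i$, and the specialization of Algorithm \ref{alg:adal} is now literally Algorithm \ref{alg:adal-horpca}: the $x$-subproblem separates into the $N$ singular value thresholding steps, the $y$-subproblem is \eqref{eq:horpca-Esubprob} and is solved in closed form by Proposition \ref{prop:horpca-Esub-sol}, and the multiplier updates coincide. Your remaining checks (properness/closedness/convexity of $f$, existence of an optimal solution by coercivity---which, together with the affineness of the constraints, also yields the Kuhn--Tucker \emph{pair} that Theorem \ref{thm:admm_conv} actually requires---and the transfer of optimality to \eqref{eq:ho-rpca-single}) then carry over verbatim.
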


\begin{algorithm}
\caption{HoRPCA-S (ADAL)}
\begin{algorithmic}[1]\label{alg:adal-horpca}
\small{
\STATE Given $\sbB, \lambda, \mu$.  Initialize $\sbX_i\supInd{0} = \sbE\supInd{0} = \sbLambda_i\supInd{0} = \mathbf{0}, \forall i \in \{1,\cdots,N\}$.
\FOR{$k = 0,1,\cdots$}
    \FOR{$i = 1:N$}
        \STATE $\sbX_i\supIndkpone \gets \calT_{i,\mu}(\sbB + \mu\sbLambda_i\supIndk - \sbE\supIndk)$
    \ENDFOR
    \STATE $\sbE\supIndkpone \gets \mathcal{S}_{\frac{\mu\lambda_1}{N}}(-\frac{\calC^*(\sbD)}{N})$
    \FOR{$i = 1:N$}
        \STATE $\sbLambda_i\supIndkpone \gets \sbLambda_i\supIndk - \frac{1}{\mu}(\sbX_i\supIndkpone + \sbE\supIndkpone - \sbB)$
    \ENDFOR
\ENDFOR
\RETURN $\left(\frac{1}{N}(\sumOneToN\sbX_i\supIndK), \sbE\supIndK\right)$
}
\end{algorithmic}
\end{algorithm}

\subsection{Mixture Model}
The Mixture model for a low-rank tensor was introduced in \cite{tomioka2010estimation}, which only requires that the tensor be the sum of a set of component tensors, each of which is low-rank in the corresponding mode, i.e. $\sbX = \sumOneToN \sbX_i$, where $\modeiXi$ is a low-rank matrix for each $i$.  This is a relaxed version of the Singleton model, which requires that the tensor be low-rank in all modes simultaneously.  It is shown in \cite{tomioka2010estimation} that the Mixture model is able to automatically detect the rank-deficient modes and yields better recovery performance than the Singleton model for tensor completion tasks when the original tensor is low-rank only in certain modes.

For robust tensor recovery, the Mixture model is equally applicable to represent the low-rank component of a corrupted tensor.  Specifically, we solve the convex optimization problem
\begin{eqnarray}\label{eq:horpca-mix}
  \min_{\sbX,\sbE} \left\{ \sum_{i=1}^N \|\modeiXi\|_* + \lambda_1\|\sbE\|_1 \vbar \sumOneToN\sbX_i + \sbE = \sbB \right\}.
\end{eqnarray}
This is a more difficult problem to solve than \eqref{eq:ho-rpca-single}; while the subproblem with respect to $\sbE$ still has a closed-form solution involving the shrinkage operator $\calS$, the $\sbX_i$ variables are coupled in the constraint, and it is hard to develop an efficient ADAL algorithm with two-block updates that satisfies the conditions in Theorem \ref{thm:admm_conv}.  Motivated by the approximation technique used in \cite{yang2009alternating}, we propose an inexact ADAL algorithm to solve problem \eqref{eq:horpca-mix} with global convergence guarantee.

Consider the augmented Lagrangian subproblem of \eqref{eq:horpca-mix} with respect to $\setOneToN{\sbX_i}$, which can be simplified to
\begin{equation}\label{eq:horpca-mix-Xsubprob}
    \min_{\sbX_1,\cdots,\sbX_N} \quad \frac{1}{2}\| \sumOneToN\sbX_i + \sbE - \sbB - \mu\sbLambda \|^2 + \mu\sumOneToN\|\modeiXi\|_*.
\end{equation}
Let $\calSigma\equiv\calC^*:\bbR^{N\cdot I_1\times\cdots\times I_N} \rightarrow \bbR^{I_1\times\cdots\times I_N}
$ be the summation operator, $\sbD = \sbE - \sbB - \mu\sbLambda$, and $f(\sbXbar) := \frac{1}{2}\|\calSigma(\sbXbar) + \sbD\|^2$, with $\nabla f(\sbXbar) = \calSigma^*(\calSigma(\sbXbar) + \sbD)$.  Then, problem \eqref{eq:horpca-mix-Xsubprob} can be written as
    $\min_{\sbX_1,\cdots,\sbX_N} \left\{ f(\sbXbar) + \mu\sumOneToN\|\modeiXi\|_* \right\}$.
This problem is not separable in $\setOneToN{\sbX_i}$, and an iterative method (e.g. \cite{ma2009fixed}) has to be used to solve this problem.  Instead, we consider the proximal approximation of the problem and solve, given $\sbXbar\supIndk \equiv \textrm{TArray}(\sbX_1\supIndk,\cdots,\sbX_N\supIndk)$,
\begin{equation}\label{eq:horpca-mix-Xproximal}
    \min_{\sbXbar} \quad f(\sbXbar\supIndk) + \nabla f(\sbXbar\supIndk)^T(\sbXbar - \sbXbar\supIndk) + \frac{1}{2\eta}\|\sbXbar - \sbXbar\supIndk\|^2 + \mu\sumOneToN\|\modeiXi\|_*,
\end{equation}
where $\eta$ is a user-supplied parameter which is less than $\lambda_{\textrm{max}}(\calSigma^*\calSigma)$ (equal to $N$ in this case).
It is easy to see that problem \eqref{eq:horpca-mix-Xproximal} is separable in $\setOneToN{\sbX_i}$, and the optimal solution has a closed-form solution $\sbX_i = \calT_{i,\eta\mu}(p(\sbXbar\supIndk)_i), \quad i = 1,\cdots,N$, where $p(\sbXbar\supIndk) = \sbXbar\supIndk - \eta\nabla f(\sbXbar\supIndk)$.  We state this inexact ADAL method below in Algorithm \ref{alg:adal-horpca-mix}.  The convergence of the algorithm is guaranteed by:
\begin{thm}\label{thm:horpca_m_conv}
For any $\eta$ satisfying $0 < \eta < \frac{1}{N}$, the sequence $\{\sbXbar\supIndk,\sbE\supIndk\}$ generated by Algorithm \ref{alg:adal-horpca-mix} converges to the optimal solution $\{\sbXbar^*,\sbE^*\}$ of problem \eqref{eq:horpca-mix}.
\end{thm}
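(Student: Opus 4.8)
The plan is to recognize Algorithm \ref{alg:adal-horpca-mix} as an \emph{exact} proximal (prox-linear) variant of the two-block ADAL applied to \eqref{eq:horpca-mix}, and then to prove convergence by a Fej\'er-monotone contraction argument rather than by a direct appeal to Theorem \ref{thm:admm_conv}. Note that Theorem \ref{thm:admm_conv} cannot be invoked verbatim here for two reasons: the summation operator $\calSigma$ is \emph{not} injective (it fails to have full column rank, which is precisely why the $\sbXbar$-subproblem does not decouple), and the $\sbXbar$-block is not minimized exactly but only by a single prox-linear step. Both defects will be absorbed by the added proximal term, provided $\eta$ is small enough.

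First I would make the equivalence exact. Writing the quadratic $f(\sbXbar)=\tfrac12\|\calSigma(\sbXbar)+\sbD\|^2$ with $\sbD=\sbE-\sbB-\mu\sbLambda$ by its (exact) second-order expansion $f(\sbXbar)=f(\sbXbar\supIndk)+\nabla f(\sbXbar\supIndk)^T(\sbXbar-\sbXbar\supIndk)+\tfrac12\|\calSigma(\sbXbar-\sbXbar\supIndk)\|^2$, one sees that the prox-linearized objective in \eqref{eq:horpca-mix-Xproximal} equals the true subproblem objective \eqref{eq:horpca-mix-Xsubprob} plus the quadratic penalty $\tfrac12\|\sbXbar-\sbXbar\supIndk\|_{\bP}^2$, where $\bP:=\tfrac1\eta\bI-\calSigma^*\calSigma$. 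Since $\lambda_{\max}(\calSigma^*\calSigma)=N$, the stated condition $0<\eta<\tfrac1N$ is exactly $\tfrac1\eta>N$, i.e.\ $\bP\succ 0$. Hence Algorithm \ref{alg:adal-horpca-mix} is an exact proximal-ADAL with a positive-definite proximal metric on the $\sbXbar$-block, and the non-injectivity of $\calSigma$ is now harmless because the proximal term makes each subproblem strongly convex.

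Second, I would cast the optimality conditions of \eqref{eq:horpca-mix} as a monotone variational inequality in $w=(\sbXbar,\sbE,\sbLambda)$: with $\theta(\sbXbar,\sbE):=\sumOneToN\|\modeiXi\|_*+\lambda_1\|\sbE\|_1$ and the affine map $F(w):=(-\calSigma^*\sbLambda,\,-\sbLambda,\,\calSigma(\sbXbar)+\sbE-\sbB)$ whose linear part is skew-symmetric (hence $F$ is monotone), a point $w^*$ is a Kuhn--Tucker point iff $\theta(u)-\theta(u^*)+(w-w^*)^TF(w^*)\ge 0$ for all $w$. The core of the proof is to read off from the three updates — the prox-linear $\sbXbar$-step, the shrinkage $\sbE$-step, and the multiplier step — the subgradient inclusions characterizing $w\supIndkpone$, following the linearized-ADAL template of \cite{yang2009alternating}, and to assemble them into a contraction inequality $\|w\supIndkpone-w^*\|_H^2 \;\le\; \|w\supIndk-w^*\|_H^2-\|w\supIndk-w\supIndkpone\|_G^2$ for symmetric matrices $H\succeq 0$ and $G\succ 0$ built from $\bP$, $\mu$, and $\calSigma$, with $G\succ 0$ inherited directly from $\bP\succ 0$.

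Granting this inequality, the remaining conclusions are standard: $\{w\supIndk\}$ is Fej\'er-monotone in the $H$-seminorm hence bounded; the residuals $\|w\supIndk-w\supIndkpone\|_G\to 0$, so consecutive iterates become asymptotically equal and every accumulation point satisfies the variational inequality, i.e.\ is optimal for \eqref{eq:horpca-mix}; and an Opial-type argument upgrades this to convergence of the entire sequence to a single optimal pair $\{\sbXbar^*,\sbE^*\}$. I expect the one genuine obstacle to be the derivation of the contraction inequality: because the $\sbXbar$-subproblem is solved only inexactly, the cross terms that appear cleanly in the exact-ADAL analysis are here contaminated by the linearization error $f(\sbXbar\supIndkpone)-[\text{its linearization at }\sbXbar\supIndk]$. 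Controlling this error is exactly what consumes the proximal term $\tfrac12\|\cdot\|_{\bP}^2$ and the strict bound $\eta<\tfrac1N$; matching the terms so that what survives is a bona fide $G$-norm with $G\succ 0$ is the crux, and everything after it is bookkeeping.
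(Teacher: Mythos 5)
Your proposal is correct and takes essentially the same route as the paper's own proof: the paper likewise treats Algorithm \ref{alg:adal-horpca-mix} as an inexact (prox-linearized) two-block ADAL, invokes the Yang--Zhang/Ma-et-al.\ template you cite to obtain a Fej\'er-type contraction inequality in a weighted norm whose positivity rests on $\eta < \tfrac{1}{N} = \tfrac{1}{\lambda_{\max}(\calSigma^*\calSigma)}$, deduces boundedness and vanishing successive differences, and concludes that accumulation points satisfy the KKT (equivalently, variational inequality) conditions of \eqref{eq:horpca-mix}. Your repackaging of the linearized step as an exact proximal step in the metric $\bP = \tfrac{1}{\eta}\bI - \calSigma^*\calSigma \succ 0$ is a clean but mathematically equivalent formulation of the same argument.
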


\begin{proof}
See Appendix \ref{sec:app_conv_iadal}.
\end{proof}

\begin{algorithm}
\caption{HoRPCA-M (I-ADAL)}
\begin{algorithmic}[1]\label{alg:adal-horpca-mix}
\small{
\STATE Given $\sbB, \lambda, \mu$. Set $\eta = \frac{1}{N+1}$.  Initialize $\sbX_i\supInd{0} = \sbE\supInd{0} = \sbLambda\supInd{0} = \mathbf{0}, \forall i \in \{1,\cdots,N\}$.
\FOR{$k = 0,1,\cdots$}
    \FOR{$i = 1:N$}
        \STATE $\sbX_i\supIndkpone \gets \calT_{i,\eta\mu}(p(\sbXbar\supIndk)_i)$
    \ENDFOR
    \STATE $\sbE\supIndkpone \gets \mathcal{S}_{\mu\lambda_1}(\sbB + \mu\sbLambda\supIndk - (\sumOneToN\sbX_i\supIndkpone))$
\ENDFOR
\STATE $\sbLambda\supIndkpone \gets \sbLambda\supIndk - \frac{1}{\mu}(\sumOneToN\sbX_i\supIndkpone + \sbE\supIndkpone - \sbB)$
\RETURN $(\sumOneToN\sbX_i\supIndK, \sbE\supIndK)$
}
\end{algorithmic}
\end{algorithm}

\begin{rem}\label{rem:adaptive_single}
If we have prior knowledge of which modes of $\sbX$ are low-rank and which modes are not, we can also use an adaptive version of the Singleton model which allows for different regularization weights to be applied to the nuclear norms:
\begin{eqnarray}\label{eq:horpca-single-adapt}
  \min_{\sbX,\sbE} \left\{ \sumOneToN \lambda_{*i}\|\modeiX\|_* + \lambda\|\sbE\|_1 \vbar \sbX + \sbE = \sbB \right\}.
\end{eqnarray}
The downside of using this more general version of the Singleton model over the Mixture model is that there are more parameters to tune, and in general, we do not have the rank information \textit{a priori}.  In our experiments in Section \ref{sec:exp}, we examined the results from the non-adaptive Singleton model, which usually shed some light on the rank of the tensor modes.  We then adjusted the $\lambda_{*i}$'s accordingly.  Note that RPCA applied to the mode-$n$ unfolding of a given tensor is a special case of \eqref{eq:horpca-single-adapt} where $\lambda_{*i} = 0$ for all $i \neq n$.  When the tensor is low-rank in only one mode, RPCA applied to that particular mode should be able to achieve good recovery results, but it may be necessary to apply RPCA to every unfolding of the tensor to discover the best mode, resulting in a similar number of SVD operations as HoRPCA.
\end{rem}

\subsection{Unconstrained (Lagrangian) Version}
In the Lagrangian version of HoRPCA, the consistency constraints are relaxed and appear as quadratic penalty terms in the objective function.  For the Singleton model, we solve the optimization problem
\begin{equation}\label{eq:horpca-unc}
    \min_{\sbX_1,\cdots,\sbX_N,\sbE} \quad \frac{1}{2}\sumOneToN\|\sbX_i + \sbE - \sbB\|^2 + \lambda_*\sumOneToN \|\modeiXi\|_* + \lambda_1\|\sbE\|_1.
\end{equation}
Similarly, the Lagrangian optimization problem for the Mixture model is
\begin{equation}\label{eq:horpca-mix-unc}
    \min_{\sbX_1,\cdots,\sbX_N,\sbE} \quad \frac{1}{2}\|\sumOneToN\sbX_i + \sbE - \sbB\|^2 + \lambda_*\sumOneToN \|\modeiXi\|_* + \lambda_1\|\sbE\|_1.
\end{equation}
Taking advantage of properties of these problems, we develop an accelerated proximal gradient (APG) algorithm by applying FISTA \cite{beckteboulle} with continuation to solve them.  Before stating this algorithm below in Algorithm \ref{alg:fista-horpca}, we define the following notation.
$\sbXtild :=  \textrm{TArray}(\sbX_1,\cdots,\sbX_N,\sbE)$, and
    $\calA(\sbXtild) := \textrm{TArray}(\sbX_1 + \sbE,\cdots,\sbX_N + \sbE)$, $\calA^*(\sbXbar) :=  \textrm{TArray}(\sbX_1,\cdots,\sbX_N,\sumOneToN\sbX_i)$ and $\tilde{\sbB} = \textrm{TArray}(\sbB,\cdots,\sbB)$, for the Singleton model, and
$\calA := \calSigma$, the summation operator defined on $\sbXtild$, and $\tilde{\sbB} = \sbB$, for the Mixture model.  The global convergence rate is discussed in Appendix \ref{sec:app_fista}.

\begin{algorithm}
\caption{HoRPCA-SP/HoRPCA-MP (FISTA with continuation)}
\begin{algorithmic}[1]\label{alg:fista-horpca}
\small{
\STATE Given $\sbXtild^{(0)}, \lambda_*^0, \bar{\lambda_*}, \eta$ and $r$. Set $t_0 = 0, \sbYtild^{(0)} = \sbXtild^{(0)}$, and $\mu = \frac{1}{N+1}$.
\FOR{$k = 0,1,\cdots$ until convergence}
    \STATE $\lambda_1 \gets r\lambda_*$
    \STATE $\sbPtild \gets \sbYtild\supIndk - \mu\calA^*(\calA(\sbYtild\supIndk - \sbBtild))$
    \FOR{$i = 1:N$}
        \STATE $\sbXtild_i\supIndkpone \gets \calT_{i,\mu\lambda_*}(\sbP_i)$
    \ENDFOR
    \STATE $\sbE\supIndkpone \gets \mathcal{S}_{\mu\lambda_1}(\sbP_E)$
    \STATE $t_{k+1} \gets \frac{1 + \sqrt{1+4t_k^2}}{2}$
    \STATE $\sbYtild\supIndkpone \gets \sbXtild\supIndk + \frac{t_k - 1}{t_{k+1}}(\sbXtild\supIndkpone - \sbXtild\supIndk)$
    \STATE $\lambda_* \gets \max(\eta\lambda_*, \bar{\lambda_*})$
\ENDFOR
}
\end{algorithmic}
\end{algorithm}

\subsection{Partial Observations}\label{sec:missing-data}
When some of the data is missing, we can enforce the consistency on the observable data through the projection operator $\calAc: \bbR^{I_1\times\cdots\times I_N} \rightarrow \bbR^m$ that selects the set of $m$ observable elements ($\Omega$) from the data tensor.  Assuming that the support of the true error tensor $\sbE_0$, $\Pi$ is a subset of $\Omega$, which implies that $[\sbE_0]_{\bar{\Omega}} = 0$, we essentially have to solve the same optimization problems as above, with a minor modification to the consistency constraint.  The optimization problem corresponding to the Singleton model becomes
\begin{eqnarray}\label{eq:horpca-tc}
  \min_{\sbX_1,\cdots,\sbX_N,\sbE} &\;& \sumOneToN \|\modeiXi\|_* + \lambda_1\|\sbE\|_1 \\
   \nonumber s.t. &\;& \calAc(\sbX_i + \sbE) = \bB_\Omega, \; i = 1,\cdots,N, \\
   \nonumber &\;& \sbX_1=\sbX_2=\cdots=\sbX_N ,
\end{eqnarray}
and that corresponding to the Mixture model is
\begin{eqnarray}\label{eq:horpca-mix-tc}
  \min_{\sbX,\sbE} \left\{ \sum_{i=1}^N \|\modeiXi\|_* + \lambda_1\|\sbE\|_1 \vbar \calAc(\sumOneToN\sbX_i + \sbE) = \bB_\Omega \right\}.
\end{eqnarray}
Note that without the additional assumption that $[\sbE_0]_{\bar{\Omega}} = 0$, it is impossible to recover $\sbE_0$ since some of corrupted tensor elements are simply not observed.  The $L_1$-regularization of $\sbE$ in the objective function forces any element of an optimal solution $\sbE^*$ in $\bar{\Omega}$ to be zero.  In the two-dimensional case, this corresponds to the ``Matrix Completion with Corruption'' model in \cite{candes2009robust,li2011compressed}, and exact recoverability through solving a two-dimensional version of \eqref{eq:horpca-tc} is guaranteed under some assumptions on the rank, the sparsity, and the fraction of data observed \cite{li2011compressed}.

For the Mixture model, we can apply Algorithm \ref{alg:adal-horpca-mix} with only minor modifications.  Specifically, we replace $\calSigma$ with $\calAc\circ\calSigma$ and $\sbD$ with $\calAc(\sbD)$ in the definition of $\nabla f(\cdot)$ and hence $p(\cdot)$.\footnote{Note that the symbol $\circ$ here denotes the pipeline of two linear operators instead of the vector outer product defined in Section \ref{sec:notation}.}  The subproblem with respect to $\sbE$ becomes (after simplification)
\begin{equation}\label{eq:horpca-mix-tc-Esubprob}
    \min_{\sbE}\quad \frac{1}{2}\|\calAc(\sbE) + \calAc(\sumOneToN\sbX_i - \sbB - \mu\sbLambda)\|^2 + \mu\lambda_1\|\sbE\|_1.
\end{equation}
The following proposition shows that the above problem admits the closed-form solution \\
    $\sbE = \mathcal{S}_{\mu\lambda_1}\left(\calAc^*\calAc(\sbB + \mu\sbLambda - (\sumOneToN\sbX_i))\right)$.
\begin{prop}
The optimal solution of the optimization problem
\begin{equation}\label{eq:horpca-mix-tc-Esubprob-short}
    \min_{\sbE}\quad \frac{1}{2}\|\calAc(\sbE) - \bd\|^2 + \lambda_1\|\sbE\|_1
\end{equation}
is $\sbE^* = \calS_{\lambda_1}(\calAc^*(\bd))$.
\end{prop}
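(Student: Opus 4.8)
The plan is to verify the first-order optimality conditions directly, mirroring the argument used in the proof of Proposition \ref{prop:horpca-Esub-sol}. The objective in \eqref{eq:horpca-mix-tc-Esubprob-short} is proper, closed, and convex, so a tensor $\sbE^*$ is optimal if and only if $0 \in \partial F(\sbE^*)$, where
\begin{equation*}
  \partial F(\sbE) = \calAc^*\left(\calAc(\sbE) - \bd\right) + \lambda_1 \partial\|\sbE\|_1 .
\end{equation*}
First I would record the two structural facts about the selection operator $\calAc$ that drive the whole argument: since $\calAc$ simply reads off the $m$ entries indexed by $\Omega$ without repetition, its adjoint $\calAc^*$ writes a vector of $\bbR^m$ into the positions $\Omega$ and fills $\bar{\Omega}$ with zeros. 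Consequently $\calAc\calAc^*$ is the identity on $\bbR^m$, while $\calAc^*\calAc$ is the coordinate projection of a tensor onto its entries in $\Omega$. In particular $\bz := \calAc^*(\bd)$ is supported on $\Omega$.

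Next I would exploit the fact that element-wise soft-thresholding preserves the zero pattern: because $\calS_{\lambda_1}$ acts coordinatewise and $\calS_{\lambda_1}(0)=0$, the candidate $\sbE^* = \calS_{\lambda_1}(\bz)$ vanishes on $\bar{\Omega}$, hence $\calAc^*\calAc(\sbE^*) = \sbE^*$. Substituting this into the optimality condition collapses it to $\bz - \sbE^* \in \lambda_1\,\partial\|\sbE^*\|_1$, which is exactly the condition characterizing $\sbE^* = \calS_{\lambda_1}(\bz)$ as the unique minimizer of $\tfrac12\|\sbE - \bz\|^2 + \lambda_1\|\sbE\|_1$ — the standard shrinkage/proximal identity applied entry by entry. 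This closes the verification.

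The only step requiring any care is the reduction $\calAc^*\calAc(\sbE^*) = \sbE^*$; once the support of the candidate is pinned to $\Omega$ this is immediate, and everything else is the textbook characterization of shrinkage. An equivalent and perhaps more transparent route, which I would mention as an alternative, is to observe that $\tfrac12\|\calAc(\sbE)-\bd\|^2$ depends on $\sbE$ only through its entries in $\Omega$, whereas $\lambda_1\|\sbE\|_1$ is fully separable across all coordinates. The problem therefore decouples into independent scalar minimizations: each $\bar{\Omega}$-entry is driven to $0$, and each $\Omega$-entry minimizes $\tfrac12(E_i - d_i)^2 + \lambda_1|E_i|$, whose solution $\calS_{\lambda_1}(d_i)$ assembles precisely into $\calS_{\lambda_1}(\calAc^*(\bd))$.
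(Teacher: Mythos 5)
Your proof is correct, but your primary route differs from the paper's. The paper proves this proposition by the decomposition argument you relegate to an "alternative": it splits $\sbE$ into $\sbE_\Omega = \calAc(\sbE)$ and the complementary entries $\sbE_{\bar{\Omega}}$, rewrites the objective as $\frac{1}{2}\|\sbE_\Omega - \bd\|^2 + \lambda_1(\|\sbE_\Omega\|_1 + \|\sbE_{\bar{\Omega}}\|_1)$, observes it is decomposable so that $\sbE_{\bar{\Omega}}^* = 0$ and $\sbE_\Omega^* = \calS_{\lambda_1}(\bd)$, and assembles the answer via $\calAc^*$. Your main argument instead verifies the global first-order optimality condition at the candidate $\calS_{\lambda_1}(\calAc^*(\bd))$, using the two structural facts that $\calAc^*\calAc$ is the coordinate projection onto $\Omega$ and that soft-thresholding preserves the zero pattern, so the condition collapses to the standard proximal identity; this mirrors the style of the paper's proof of Proposition \ref{prop:horpca-Esub-sol} rather than its proof of this statement. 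The trade-off is mild: the paper's separability argument is constructive and makes uniqueness of each coordinate's minimizer transparent, while your verification is slightly less self-contained (it presupposes the candidate and invokes the shrinkage characterization as a known identity) but generalizes more readily to situations where the problem does not fully decouple, provided one can still establish $\calAc^*\calAc(\sbE^*) = \sbE^*$. Since you also state the decoupling argument explicitly at the end, your proposal in fact contains the paper's proof verbatim in substance.
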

\begin{proof}
Let $\sbE_\Omega = \calAc(\sbE)$, the vector of the elements of $\sbE$ whose indices are in the observation set $\Omega$ and $\sbE_{\bar{\Omega}}$ be the vector of the remaining elements.  We can then write problem \eqref{eq:horpca-mix-tc-Esubprob-short} as
\begin{equation}\label{eq:horpca-mix-tc-Esubprob-decomp}
    \min_{\sbE}\quad \frac{1}{2}\|\sbE_\Omega - \bd\|^2 + \lambda_1(\|\sbE_\Omega\|_1 + \|\sbE_{\bar{\Omega}}\|_1),
\end{equation}
which is decomposable.  Obviously, $\sbE_{\bar{\Omega}}^* = 0$.  The optimal solution to problem \eqref{eq:horpca-mix-tc-Esubprob-decomp} with respect to $\sbE_\Omega$ is given by the shrinkage operator, $\sbE_\Omega^* = \calS_{\lambda_1}(\bd)$.  Hence, $\sbE^* = \calS_{\lambda_1}(\calAc^*(\bd))$, and we obtain the optimal solution to problem \eqref{eq:horpca-mix-tc-Esubprob} by setting $\bd = -\calAc(\sumOneToN\sbX_i - \sbB - \mu\sbLambda)$.
\end{proof}

The proximal approximation to the subproblem with respect to $\setOneToN{\sbX_i}$ is still separable, and each $\sbX_i$ can be solved for by applying the singular value thresholding operator.

For the Singleton model, we introduce an auxiliary variable $\sbY$ and reformulate problem \eqref{eq:horpca-mix} as
\begin{eqnarray}\label{eq:horpca-tc-split}
  \min_{\sbX_1,\cdots,\sbX_N,\sbY,\sbE} &\;\;& \sum_{i=1}^N \|\modeiXi\|_* + \lambda_1\|\sbE\|_1 \\
  \nonumber s.t. &\;\;& \sbX_i = \sbY, \quad i = 1,\cdots,N,\\
  \nonumber     &\;\;& \calAc(\sbY + \sbE) = \sbB_\Omega.
\end{eqnarray}
We can then develop an ADAL algorithm that employs two-block updates between $\{\sbX_1,\cdots,\sbX_N,\sbE\}$ and $\sbY$. The solutions to $\setOneToN{\sbX_i}$ still admit closed-form expressions by applying the singular value thresholding operator.  The solution to $\sbE$ is a similar form as the one for the Mixture model.  The augmented Lagrangian subproblem with respect to $\sbY$ involves solving a linear system which also has a closed-form solution.

For the two Lagrangian versions of HoRPCA, the required modifications are minimal.  We simply need to redefine the linear operator $\calA$ in Algorithm \ref{alg:fista-horpca} to be $\calAc\circ\calA$ and apply $\calAc$ to $\sbBtild$.  It can also be verified that the Lipschitz constant of $\nabla l(\cdot)$ is still $N+1$ for both models.


\subsection{Related Work}
Several methods have proposed for solving the RPCA problem, including the Iterative Thresholding algorithm \cite{wright2009robust}, the Accelerated Proximal Gradient (APG/FISTA) algorithm with continuation \cite{lin2009fast} for the Lagrangian formulation of \eqref{eq:rpca}, a gradient algorithm applied to the dual problem of \eqref{eq:rpca}, and the Inexact Augmented Lagrangian method (IALM) in \cite{lin2010augmented}.  It is reported in \cite{lin2010augmented} that IALM was faster than APG on simulated data sets.

For the unconstrained formulation of Tensor Completion with the Singleton model,
\begin{equation}\label{eq:tc_unc}
    \min_{\sbX} \quad \lambda_*\sumOneToN \|\modeiX\|_* + \frac{1}{2}\|\calAc(\sbX) - \sbB_\Omega)\|^2,
\end{equation}
\cite{gandy2011tensor} and \cite{tomioka2010estimation} both proposed an ADAL algorithm based on applying variable-splitting on $\sbX$.  For the Mixture model version of \eqref{eq:tc_unc}, \cite{tomioka2010estimation} also proposed an ADAL method applied to the dual problem.

There have been some attempts to tackle the HoRPCA problem \eqref{eq:ho-rpca-single} with applications in computer vision and image processing.  The RSTD algorithm proposed in \cite{li2010optimum} uses a vanilla Block Coordinate Descent (BCD) approach to solve the unconstrained problem
\begin{equation*}
    \min_{\sbX,\sbE,\{\bM_i,\bN_i\}_i} \; \frac{1}{2}\sumOneToN(\alpha_i\|\bM_i - \modeiX\|^2 + \beta_i\|\bN_i - \modeLetter{\bE}{i}\|^2 + \gamma_i\|\bM_i+\bN_i-\modeLetter{\bB}{i}\|^2) + \sumOneToN( \lambda_i\|\bM_i\|_* + \eta\|\bE\|_1 ),
\end{equation*}
which applies variable-splitting to both $\sbX$ and $\sbE$ and relaxes all the constraints as quadratic penalty terms.  Compared to HoRPCA-SP, RSTD has many more parameters to tune. Moreover, the BCD algorithm used in RSTD does not have a iteration complexity guarantee enjoyed by FISTA.  (More sophisticated variants of BCD do have iteration complexity results, e.g. \cite{nesterov2012efficiency,richtarik2012iteration,beck2011convergence}.)

The TR-MALM algorithm proposed in \cite{tan2011tensor} is also an ADAL method and solves a relaxed version of \eqref{eq:ho-rpca-c}:
\begin{eqnarray}\label{eq:horpca_trmalm}
    \min_{\{\sbX_i,\sbE_i\}_i} \left\{ \sumOneToN\left(\|\modeiXi\|_* + \lambda_i\|\modenn{\bE}{i}{i}\|_1\right) \vbar \modeiXi + \modenn{\bE}{i}{i} = \modeLetter{\bB}{i}, \quad i = 1,\cdots,N \right\}.
\end{eqnarray}
The final solution is given by $\left( \frac{1}{N}\sumOneToN\sbX_i, \frac{1}{N}\sumOneToN\sbE_i \right)$.
Compared to problem \eqref{eq:ho-rpca-c}, problem \eqref{eq:horpca_trmalm} does not require equality among the auxiliary variables $\sbX_i$'s and $\sbE_i$'s.  This relaxation allows the problem to be decomposed into $N$ independent RPCA instances, each of which solvable by IALM mentioned above.  However, this relaxation also makes the final solution hard to interpret since consistency among the auxiliary variables is not guaranteed.

The $l_1$-regularization used in HoRPCA is a special instance of the exponential family used in \cite{hayashi2010exponential}.  That approach is, nevertheless, very different from ours. Before learning the model proposed in \cite{hayashi2010exponential}, the (heterogeneous) distributions of parts of the data have to be specified, making the method hard to apply.  In our case, the locations of the corrupted entries are not known in advance.

Another related regularization technique for robustifying tensor factorization is the $R_1$ norm used in \cite{huang2008robust}.  In fact, the $R_1$ norm is the group Lasso \cite{yuanlin} regularization with each slice (in the case of a 3D tensor) defined as a group, and the regularization is applied to the error term of the tensor factorization.  Hence, it is most effective on data in which dense corruptions are concentrated in a small number of slices, i.e. outlying samples.

\subsection{Relationship with the Tucker Decomposition}\label{sec:relationship-Tucker}


We can also view HoRPCA with the Singleton model as a method for \textit{Robust Tensor Decomposition}, because from the auxiliary variables $\sbX_i$'s, we can reconstruct the core tensor $\sbG$ of $\sbX$ by
    $\sbG = \frac{1}{N}(\sumOneToN\sbX_i) \times_1 (\bU\supInd{1})^T \times_2 (\bU\supInd{2})^T \cdots \times_N (\bU\supInd{N})^T$,
where $\bU\supInd{i}$ is the left factor matrix from the SVD of $\modeiX$.  Note that the $\bU\supInd{i}$ matrices are by-products of Algorithms \ref{alg:adal-horpca} and \ref{alg:fista-horpca}.  Hence, we recover the Tucker decomposition of $\sbX$ containing sparse arbitrary corruptions without the need to specify the target Tucker rank.  The CP decomposition of the low-rank tensor can also be obtained from the output of Algorithms \ref{alg:adal-horpca} and \ref{alg:fista-horpca} by applying the classical CP to the core tensor \cite{tomioka2010estimation}.  

\section{Constrained Nonconvex Model}\label{sec:ncx}
We consider a robust model that has explicit constraints on the Tucker rank of the tensor to be recovered.  Specifically, we solve the (nonconvex) optimization problem
\begin{equation}\label{eq:horpca-ncx}
  \min_{\sbX,\sbE} \;\; \bigg\{ \|\sbE\|_1 \;|\;  \sbX + \sbE = \sbB, \;\; \textrm{rank}(\modeiX) \leq r_i, \quad i = 1,\cdots,N \bigg\}.
\end{equation}
The global optimal solution to problem \eqref{eq:horpca-ncx} is generally NP-hard to find.  Here, we develop an efficient algorithm based on ADAL that has the same per-iteration complexity as Algorithm \ref{alg:adal-horpca} and finds a reasonably good solution.

\subsection{Algorithm}
Applying the variable-splitting technique that we used for the Singleton model, we reformulate problem \eqref{eq:horpca-ncx} as
\begin{equation}\label{eq:horpca-ncx-split}
  \min_{\sbX,\sbE} \;\; \bigg\{ \|\sbE\|_1 \;|\; \sbX_i + \sbE = \sbB, \;\; \textrm{rank}(\modeiXi) \leq r_i, \quad i = 1, \cdots, N \bigg\},
\end{equation}
where $\sbX_i$'s are auxiliary variables.  Although this is a nonconvex problem, applying ADAL still generates well-defined subproblems which have closed-form global optimal solutions.

The augmented Lagrangian for problem \eqref{eq:horpca-ncx-split}, dualizing only the consistency constraints, is equivalent to (up to some constants)
$\mu\|\sbE\|_1 + \frac{1}{2}\|\sbX_i -\sbZ_i\|^2$, where $\sbZ_i = \sbB + \mu\sbLambda_i - \sbE$.
For $i = 1,\cdots,N$, the augmented Lagrangian subproblem associated with $\sbX_i$ is
\begin{equation}\label{eq:horpca-ncx-Xsubprob}
  \min_{\sbX_i} \;\; \bigg\{ \|\sbX_i - \sbZ_i\|^2 \;|\; \textrm{rank}(\modeiXi) \leq r_i \bigg\}.
\end{equation}
We can interpret this problem as finding the Euclidean projection of $\sbZ_i$ onto the set of matrices whose ranks are at most $r_i$. The global optimal solution is given by \begin{thm}
(Eckart-Young \cite{eckart1936approximation}) For any positive integer $k \leq \textrm{rank}(\bZ)$, where $\bZ$ is an arbitrary matrix, the best rank-$k$ approximation of $\bZ$, i.e. the global optimal solution to the problem $\min_{\bX} \;\; \bigg\{ \|\bX-\bZ\|_F^2 \;|\; \textrm{rank}(\bX) \leq k \bigg\}$
is given by $\bX^* = \calP_k(\bZ) := \bU\bD_k\bV\transp$, where $\bD_k$ is a diagonal matrix consisting of the $k$ largest diagonal entries of $\bD$, given the SVD of $\bZ = \bU\bD\bV\transp$.
\end{thm}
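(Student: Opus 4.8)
The plan is to split the extremal statement into an achievability half and a lower-bound half, treating each by reduction to singular values. Write $r=\textrm{rank}(\bZ)$ and let $\sigma_1\geq\cdots\geq\sigma_r>0$ denote the singular values of $\bZ$. Achievability is immediate from unitary invariance of the Frobenius norm: since $\calP_k(\bZ)=\bU\bD_k\bV\transp$, we have $\|\calP_k(\bZ)-\bZ\|_F=\|\bU(\bD_k-\bD)\bV\transp\|_F=\|\bD_k-\bD\|_F$, and because $\bD_k$ agrees with $\bD$ in its top $k$ diagonal entries and vanishes elsewhere, this equals $\big(\sum_{i=k+1}^{r}\sigma_i^2\big)^{1/2}$. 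So the proposed solution attains the candidate optimal value $\sum_{i=k+1}^{r}\sigma_i^2$, and it remains only to show no feasible competitor does better.

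For the lower bound I would show that every $\bX$ with $\textrm{rank}(\bX)\leq k$ satisfies $\|\bX-\bZ\|_F^2\geq\sum_{i=k+1}^{r}\sigma_i^2$. Writing $\sigma_j(\bM)$ for the $j$-th largest singular value of a matrix $\bM$, the key tool is Weyl's subadditivity inequality for singular values,
\begin{equation*}
    \sigma_{i+j-1}(\bM+\bN)\leq \sigma_i(\bM)+\sigma_j(\bN),
\end{equation*}
applied with $\bM=\bX$ and $\bN=\bZ-\bX$, so that $\bM+\bN=\bZ$. Since $\textrm{rank}(\bX)\leq k$ forces $\sigma_{k+1}(\bX)=0$, taking $i=k+1$ yields $\sigma_{k+j}(\bZ)\leq\sigma_j(\bZ-\bX)$ for every $j\geq 1$; that is, the $j$-th singular value of the residual $\bZ-\bX$ dominates the $(k+j)$-th singular value of $\bZ$. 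Squaring and summing over $j$, together with the identity $\|\bZ-\bX\|_F^2=\sum_j\sigma_j(\bZ-\bX)^2$, gives $\|\bZ-\bX\|_F^2\geq\sum_{j\geq 1}\sigma_{k+j}(\bZ)^2=\sum_{i=k+1}^{r}\sigma_i^2$, matching the value attained by $\calP_k(\bZ)$ and closing the argument.

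The main obstacle is Weyl's inequality itself, which carries all the real content; the rest is bookkeeping. I would obtain it from the Courant--Fischer min--max characterization $\sigma_\ell(\bM)=\min_{\dim S=\ell-1}\max_{\bx\perp S,\,\|\bx\|=1}\|\bM\bx\|$, by taking $S$ to be the sum of the optimal $(i-1)$- and $(j-1)$-dimensional subspaces for $\bM$ and $\bN$ and using $\|(\bM+\bN)\bx\|\leq\|\bM\bx\|+\|\bN\bx\|$ on its orthogonal complement. As this is a standard fact of matrix analysis, in the write-up I would cite it and keep the emphasis on the singular-value reduction above; note that no separability or first-order optimality argument is needed, and that uniqueness of the minimizer is neither claimed nor required, since ties among singular values ($\sigma_k=\sigma_{k+1}$) can produce several optima.
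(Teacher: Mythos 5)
Your proof is correct, but note that the paper itself does not prove this statement at all: it is stated as the classical Eckart--Young theorem and justified solely by the citation to \cite{eckart1936approximation}, since its only role in the paper is to supply the closed-form solution of the rank-constrained projection subproblem \eqref{eq:horpca-ncx-Xsubprob} used in Algorithm \ref{alg:horpca-ncx} (HoRPCA-C). Your argument is therefore a genuine addition rather than a parallel of anything in the paper, and it is one of the standard modern proofs: unitary invariance of the Frobenius norm handles achievability, and Weyl's subadditivity inequality $\sigma_{i+j-1}(\bM+\bN)\leq\sigma_i(\bM)+\sigma_j(\bN)$, applied with $\bM=\bX$, $\bN=\bZ-\bX$ and $i=k+1$ so that $\sigma_{k+1}(\bX)=0$, gives the interlacing bound $\sigma_{k+j}(\bZ)\leq\sigma_j(\bZ-\bX)$ and hence the matching lower bound $\|\bZ-\bX\|_F^2\geq\sum_{i=k+1}^{r}\sigma_i(\bZ)^2$ for every feasible competitor. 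All steps check out, including the reduction of Weyl's inequality to the Courant--Fischer characterization of singular values, and your closing remark about non-uniqueness under ties $\sigma_k=\sigma_{k+1}$ is a worthwhile caveat to the theorem's wording (``the global optimal solution'' should strictly be ``a global optimal solution''). What the citation buys the paper is brevity for a genuinely classical result; what your proof buys is self-containedness, with all of the real content isolated in a single standard inequality of matrix analysis.
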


Hence, we obtain the global minimizer of problem \eqref{eq:horpca-ncx-Xsubprob} by the truncated SVD of $\modeiZ$, $\calP_{k_i}(\modeiZ)$, which can be computed
efficiently if $k_i$ is small relative to rank($\modeiZ$).

The augmented Lagrangian subproblem associated with $\sbE$ has the same form as it has for the Singleton model.  By Proposition \ref{prop:horpca-Esub-sol}, we obtain the solution by $\frac{1}{N}\calS_\mu(\sumOneToN\sbB+\mu\sbLambda_i-\sbX_i)$.  Now, we summarize the main steps in Algorithm \ref{alg:horpca-ncx} (HoRPCA-C).  For the extension to the partial data case, we can apply variable-splitting as in \eqref{eq:horpca-tc-split} to derive a similar ADAL algorithm.

\begin{algorithm}
\caption{HoRPCA-C}
\begin{algorithmic}[1]\label{alg:horpca-ncx}
\small{
\STATE Given $\sbB, \mu$.  Initialize $\sbX_i\supInd{0} = \sbE\supInd{0} = \sbLambda_i\supInd{0} = \mathbf{0}, \forall i \in \{1,\cdots,N\}$.
\FOR{$k = 0,1,\cdots$}
    \FOR{$i = 1:N$}
        \STATE $\bZ_i\supIndk \gets \unfold_i(\sbB + \mu\sbLambda_i\supIndk - \sbE\supIndk)$
        \STATE $\sbX_i\supIndkpone \gets \fold_i(\calP_{k_i}(\bZ_i\supIndk))$ \label{line:horpca-c-X}
    \ENDFOR
    \STATE $\sbE\supIndkpone \gets \frac{1}{N}\calS_\mu(\sumOneToN\sbB+\mu\sbLambda_i\supIndk-\sbX_i\supIndkpone)$ \label{line:horpca-c-E}
    \FOR{$i = 1:N$}
        \STATE $\sbLambda_i\supIndkpone \gets \sbLambda_i\supIndk - \frac{1}{\mu}(\sbX_i\supIndkpone + \sbE\supIndkpone - \sbB)$ \label{line:horpca-c-Lambda}
    \ENDFOR
\ENDFOR
\RETURN $\left(\frac{1}{N}(\sumOneToN\sbX_i\supIndK), \sbE\supIndK\right)$
}
\end{algorithmic}
\end{algorithm}

\subsection{Convergence Analysis}
As we mentioned before, ADAL algorithms applied to a convex optimization problem that alternately minimizes two blocks of variables enjoy global convergence results.  Since problem \eqref{eq:horpca-ncx} is nonconvex, the existing convergence results no longer apply.  Here, we give a weak convergence result, that guarantees that whenever the iterates of Algorithm \ref{alg:horpca-ncx} converge, they converge to a KKT point of problem \eqref{eq:horpca-ncx}.  Although we do not have a proof for the convergence of the iterates, our experiments show strong indication of convergence.  To derive the KKT conditions, we first rewrite problem \eqref{eq:horpca-ncx} as
\begin{equation}\label{eq:horpca-ncx-factor}
  \min_{\sbE,\bU_i,\bV_i} \;\; \bigg\{ \|\sbE\|_1 \;|\; \fold_i(\bU_i\bV_i) + \sbE = \sbB, \quad i = 1,\cdots,N \bigg\},
\end{equation}
where $\bU_i$'s and $\bV_i$'s are matrices of appropriate dimensions with $r_i$ columns and rows respectively.  Let $\sbLambda_i, i = 1,\cdots,N,$ be the Lagrange multipliers associated with the $N$ constraints.  The KKT conditions for problem \eqref{eq:horpca-ncx-factor} are
\begin{eqnarray}
  \modeiLambdai\bV_i\transp &=& \bzero, \quad \bU_i\transp\modeiLambdai = \bzero \label{eq:ncx-kkt1}\\
  \fold_i(\bU_i\bV_i) + \sbE &=& \sbB, \quad i = 1,\cdots,N \label{eq:ncx-kkt2}\\
  \sumOneToN\sbLambda_i &\in& \partial\|\sbE\|_1. \label{eq:ncx-kkt3}
\end{eqnarray}

\begin{lem}\label{lem:ncx-conv}
Let $\sbW := (\sbX_1,\cdots,\sbX_N,\sbE,\sbLambda_1,\cdots,\sbLambda_N)$ and $\setkOneToInf{\sbW\supIndk}$ be the sequence generated by Algorithm \ref{alg:horpca-ncx}.  Assume that $\setkOneToInf{\sbW\supIndk}$ is bounded and $\sbW\supIndkpone - \sbW\supIndk \rightarrow 0$.  Then, any accumulation point of $\setkOneToInf{\sbW\supIndk}$ satisfies the KKT conditions \eqref{eq:ncx-kkt1}-\eqref{eq:ncx-kkt3}.
\end{lem}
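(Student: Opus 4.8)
The plan is to pass to the limit, along a convergent subsequence, in the optimality conditions of the three subproblems solved in Algorithm \ref{alg:horpca-ncx}. Since $\setkOneToInf{\sbW\supIndk}$ is bounded it has an accumulation point $\sbW^* = (\sbX_1^*,\cdots,\sbX_N^*,\sbE^*,\sbLambda_1^*,\cdots,\sbLambda_N^*)$, attained along some subsequence $\sbW\supInd{k_j}\to\sbW^*$; the hypothesis $\sbW\supIndkpone-\sbW\supIndk\to\bzero$ then forces the shifted subsequence $\sbW\supInd{k_j+1}$ to the same limit $\sbW^*$, which is what lets limits be moved through a single iteration. For the feasibility condition \eqref{eq:ncx-kkt2}, I would rearrange the multiplier update into $\sbX_i\supIndkpone+\sbE\supIndkpone-\sbB=\mu(\sbLambda_i\supIndk-\sbLambda_i\supIndkpone)$; the right-hand side tends to $\bzero$ since consecutive multipliers share a limit, giving $\sbX_i^*+\sbE^*=\sbB$, which is \eqref{eq:ncx-kkt2} once $\modeiXi^*$ is written as $\bU_i\bV_i$ (the factorization is constructed at the end).

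For the subgradient condition \eqref{eq:ncx-kkt3}, the first-order optimality condition of the $\sbE$-subproblem (cf. Proposition \ref{prop:horpca-Esub-sol}) reads $\sumOneToN\sbLambda_i\supIndk-\frac{1}{\mu}\sumOneToN(\sbX_i\supIndkpone+\sbE\supIndkpone-\sbB)\in\partial\|\sbE\supIndkpone\|_1$. Substituting the multiplier update collapses the left-hand side to $\sumOneToN\sbLambda_i\supIndkpone$, so $\sumOneToN\sbLambda_i\supIndkpone\in\partial\|\sbE\supIndkpone\|_1$ holds at every iteration. Because the subdifferential of the convex function $\|\cdot\|_1$ has a closed graph (is outer semicontinuous), passing to the limit along the subsequence yields $\sumOneToN\sbLambda_i^*\in\partial\|\sbE^*\|_1$.

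The stationarity condition \eqref{eq:ncx-kkt1} is the delicate one, and the argument rests on the orthogonality built into the Eckart--Young projection. Let $\bU_i\supIndk$, $\bD_i\supIndk$, $\bV_i\supIndk$ collect the top-$r_i$ left singular vectors, singular values, and right singular vectors of $\bZ_i\supIndk$, so that $\modeiXi\supIndkpone=\bU_i\supIndk\bD_i\supIndk(\bV_i\supIndk)\transp$ and the residual $\bZ_i\supIndk-\modeiXi\supIndkpone$ is annihilated by $(\bU_i\supIndk)\transp$ on the left and by $\bV_i\supIndk$ on the right. Using the multiplier update I would rewrite this residual as $\bZ_i\supIndk-\modeiXi\supIndkpone=\mu\modeiLambdai\supIndkpone+(\modeLetter{\bE}{i}\supIndkpone-\modeLetter{\bE}{i}\supIndk)$; since the singular-vector matrices are bounded and $\modeLetter{\bE}{i}\supIndkpone-\modeLetter{\bE}{i}\supIndk\to\bzero$, the two orthogonality relations give $(\bU_i\supIndk)\transp\modeiLambdai\supIndkpone\to\bzero$ and $\modeiLambdai\supIndkpone\bV_i\supIndk\to\bzero$.

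The main obstacle is that I cannot conclude by continuity of the projection: when the $r_i$-th and $(r_i+1)$-th singular values of the limit of $\bZ_i\supIndk$ coincide, the truncated SVD, and hence the singular vectors $\bU_i\supIndk,\bV_i\supIndk$, need not converge. I would sidestep this with compactness rather than continuity. The matrices $\bU_i\supIndk$ and $\bV_i\supIndk$ have orthonormal columns and so lie in compact Stiefel manifolds, while $\bD_i\supIndk$ is bounded since $\bZ_i\supIndk=\unfold_i(\sbB+\mu\sbLambda_i\supIndk-\sbE\supIndk)$ converges along the subsequence; hence along a further subsequence $\bU_i\supIndk\to\bar{\bU}_i$, $\bD_i\supIndk\to\bar{\bD}_i$, $\bV_i\supIndk\to\bar{\bV}_i$ for every $i$ simultaneously. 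Passing to the limit gives $\modeiXi^*=\bar{\bU}_i\bar{\bD}_i\bar{\bV}_i\transp$ together with $\bar{\bU}_i\transp\modeiLambdai^*=\bzero$ and $\modeiLambdai^*\bar{\bV}_i=\bzero$. Setting $\bU_i:=\bar{\bU}_i$ and $\bV_i:=\bar{\bD}_i\bar{\bV}_i\transp$ realizes the factorization $\modeiXi^*=\bU_i\bV_i$ used earlier and yields $\bU_i\transp\modeiLambdai^*=\bzero$ and $\modeiLambdai^*\bV_i\transp=\bzero$, which is \eqref{eq:ncx-kkt1}. This establishes all three KKT conditions at the accumulation point.
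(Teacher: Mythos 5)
Your proof is correct, and its skeleton matches the paper's: extract a subsequence converging to an accumulation point, use $\sbW\supIndkpone - \sbW\supIndk \to \bzero$ to move the limit through one iteration, and recover the KKT conditions from the three update rules (feasibility \eqref{eq:ncx-kkt2} comes from the multiplier update in both proofs). The differences lie in how the other two conditions are extracted. For \eqref{eq:ncx-kkt3}, the paper rewrites the inclusion as the shrinkage fixed point \eqref{eq:ncx-kkt3b} via the monotone map $\calQ_\mu(\cdot) = \mu\partial\|\cdot\|_1 + \cdot$ with $\calQ_\mu\inv = \calS_\mu$, and checks that the limit satisfies it using Line \ref{line:horpca-c-E}; you instead keep the subgradient inclusion, observe that it holds exactly at every iterate once the multiplier update is substituted into the optimality condition of the $\sbE$-subproblem, and pass to the limit by closedness of the graph of $\partial\|\cdot\|_1$ --- the two arguments are equivalent in substance (the shrinkage fixed point is precisely the resolvent form of the inclusion), yours being the more standard phrasing. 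The genuine divergence is in \eqref{eq:ncx-kkt1}. The paper first passes to the limit inside the rank-$r_i$ projection to obtain the identity \eqref{eq:ncx-limit-svd}, and then argues, with a case analysis on whether $\modeiLambdai^*=\bzero$, that a factorization of $\modeiLambdai^*$ orthogonal to $(\bU_i^*,\bV_i^*)$ exists. You instead record the exact orthogonality relations the truncated SVD satisfies at every iteration, show the residual equals $\mu\modeiLambdai\supIndkpone$ plus a term that vanishes by hypothesis, and pass to the limit by compactness of the Stiefel manifolds. This buys rigor precisely where the paper is loosest: $\calP_{r_i}$ is discontinuous (indeed set-valued) when the $r_i$-th and $(r_i{+}1)$-th singular values of the limiting $\bZ_i$ coincide, so \eqref{eq:ncx-limit-svd} really needs an outer-semicontinuity argument the paper does not supply, whereas your compactness route never invokes continuity of the projection; it also renders the paper's case (i) unnecessary, since if $\modeiLambdai^*=\bzero$ then \eqref{eq:ncx-kkt1} holds trivially for that $i$.
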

\begin{proof}
See Appendix \ref{sec:app_conv_ncx}.
\end{proof}

\begin{rem}\label{rem:Tucker-adal}
From the discussion in Section \ref{sec:relationship-Tucker}, we can easily see that Algorithm \ref{alg:horpca-ncx} is indeed a variant of the Tucker decomposition that is robust to sparse gross corruptions.
Algorithm \ref{alg:horpca-ncx} also naturally leads to an alternative algorithm for the Tucker decomposition, which is traditionally solved by the ALS method as mentioned in the introduction.  Recall that the Tucker decomposition corresponds to the optimization problem
\begin{equation}\label{eq:tucker-problem}
  \min_{\sbX,\sbE} \;\; \bigg\{ \|\sbE\|^2 \;|\; \sbX + \sbE = \sbB, \;\; \textrm{rank}(\modeiX) \leq r_i, \quad i = 1,\cdots,N \bigg\}.
\end{equation}
By replacing Line \ref{line:horpca-c-E} in Algorithm \ref{alg:horpca-ncx} with $\sbE = \frac{1}{N+2\mu}\sumOneToN(\sbB+\mu\sbLambda_i-\sbX_i)$, which solves the augmented Lagrangian subproblem $\min_{\sbE}\;\left\{ \mu\|\sbE\|^2 + \frac{1}{2}\sumOneToN\|\sbE-\sbB-\mu\sbLambda_i+\sbX_i\|^2 \right\}$, we obtain an ADAL algorithm for the Tucker decomposition.  A similar convergence result to that given by Lemma \ref{lem:ncx-conv} applies to the derived algorithm.
\end{rem}

\begin{rem}\label{rem:horpca-c-rank-info}
Similar to the adaptive version of HoRPCA-S, HoRPCA-C requires prior information about the rank of the tensor unfoldings.  When this information is not available, we can apply HoRPCA-S or HoRPCA-M first to reveal the approximate tensor ranks.  HoRPCA-C can then be used as an effective post-processing step to refine the results.
\end{rem}

\section{Experiments}\label{sec:exp}
All the proposed algorithms and the experiments were run in Matlab R2011b on a laptop with a COREi5 2.40GHz CPU and 6G memory.  We used the PROPACK toolbox \cite{larsen2004propack} for SVD computations and the Tensor Toolbox \cite{bader2009efficient} for tensor operations and decompositions.  For running time comparisons, we report the number of iterations since the per-iteration work of all tensor-based algorithms involve $N$ SVDs and one shrinkage operation.  The amount of work performed by RPCA can also be estimated proportionally.

We considered two scenarios: 1) with fully observed data, and 2) with a fraction of data available.  The models and algorithms that we compared in our experiments are the Singleton model (HoRPCA-S and HoRPCA-SP), Mixture model (HoRPCA-M and HoRPCA-MP), and the constrained nonconvex model (HoRPCA-C).  RPCA (IALM) and TR-MALM were used as baselines in some experiments.  The number of iterations for TR-MALM was averaged over the $N$ RPCA instances.  A description of how the parameters $\lambda_1, \lambda_*, \textrm{and } \lambda_*^0$ were set for the algorithms can be found in Appendix \ref{sec:app_params} along with a discussion of stopping criteria.

\subsection{Synthetic Data}\label{sec:syn-data}
We generated a random rank-(5,5,5) tensor of size (50,50,20) using the approach described in \cite{tomioka2010estimation}, by drawing the core tensor of size (5,5,5) from the standard normal distribution and multiplying each mode of the core tensor by an orthonormal matrix of appropriate dimensions drawn from the Haar measure.\footnote{Generating random orthogonal matrices from the Haar measure is realized through calling the Matlab built-in function QMULT.}  All generated tensors were verified to have the desired Tucker rank.  A random fraction $\rho_n$ of the tensor elements were corrupted by additive i.i.d. noise from the uniform distribution $\calU(-M,M)$.  We then randomly selected a fraction $\rho_o$ of the noisy tensor elements to be the given observations $\sbB_\Omega$.

\subsubsection{Convex Models}
First, we fixed $\rho_n = 10\%$ and $M = 1$, and we applied the proposed algorithms described in Section \ref{sec:horpca} to the noisy tensor with $\rho_o$ varying from 5\% to 100\%.  We plot the recovery relative error $\frac{\|\sbX-\sbX_0\|}{\|\sbX_0\|}$, where $\sbX_0$ is the noiseless low-rank tensor, and the number of iterations required in Figure \ref{fig:syn-obs-n01}.  We repeated the experiment with $\rho_n = 25\%$ and plotted the results in Figure \ref{fig:syn-obs-n01}.  As a baseline, we ran Tucker decomposition on the data with $\rho_o = 100\%$ and report the results in the figure captions.

We see that HoRPCA-S had a significantly higher recovery accuracy than the other three algorithms.  In addition, for HoRPCA-S, there appeared to be a phase transition threshold in $\rho_o$ where the relative error of the algorithm improved drastically.  This threshold appeared to increase with $\rho_n$: it was about 35\% for $\rho_n = 10\%$, and it increased to 70\% for $\rho_n = 25\%$.  Interestingly, a similar phase transition phenomenon was not observed for the Lagrangian version HoRPCA-SP.

Next, we fixed $\rho_o$ at 100\% and varied $\rho_n$ from 1\% to 40\%.  The same set of metrics were reported and plotted in the left graph of Figure \ref{fig:syn-noise-obs100}.  Again, the Singleton model worked better than the Mixture model, which appeared to be very susceptible to increasing $\rho_n$'s.  There is also a threshold in $\rho_n$ (25\% in this case) below which HoRPCA-S could always recover the true low-rank tensor almost exactly.  The threshold for HoRPCA-SP was lower, at 15\%.

To further study the phase-transition behavior in the recovery performance of HoRPCA-S in terms of the fractions of observations and corruptions, we generated a set of 30 tensors of the same size (50,50,20) with different but low Tucker-ranks.  We consider exact recovery to be achieved when the relative error is less than or equal to 0.01. The results are summarized in Figure \ref{fig:syn-rank-spa-obs}.  In the first graph, we plot the threshold on the percentage of corruptions (as in Figure \ref{fig:syn-noise-obs100}) for each fraction of observations for three tensors of different Tucker-ranks.  Generally, as the Tucker rank sum increases, the level of corruptions that can be tolerated decreases.  For any given tensor, the transition from zero corruption tolerance to the maximum level of tolerance occurred over a small interval of observation percentages (e.g., from 50\% to 65\% observations for the rank-(2,5,5) tensor), and further availability of data did not improve the threshold on the percentage of corruptions.  The second graph plots the threshold on the percentage of corruptions for each tensor of a particular normalized rank \cite{tomioka2011statistical}, given a fixed fraction of observations.  Plots for three different fractions of observations (100\%, 80\%, and 60\%) are presented.  Generally, the threshold on the corruptions decreases approximately linearly as the normalized rank increases.  The recovery frontier shifts towards the right of the graph with a larger fraction of observations.  In the third graph, we investigated the minimum fraction of observations required for exact recovery (as in Figure \ref{fig:syn-obs-n01}) for each normalized rank, given a fixed percentage of corruptions.  Two different percentages of corruptions (1\% and 10\%) were considered.  Again, we observed an approximately linear relationship between the threshold on the observations and the normalized rank at both levels of corruptions.  This is consistent with the observation and the theory reported in \cite{tomioka2010estimation,tomioka2011statistical}.

We generated a second type of tensor of size (20,20,20,10) with Tucker rank (3,3,20,10), which is low-rank only in modes 1 and 2.  We fixed $\rho_n = 10\%$ and $M = 1$.  Our algorithms were applied to recover the low-rank tensor with varying $\rho_o$.  The right graph of Figure \ref{fig:syn-noise-obs100} shows the recovery results.  The Mixture model produced significantly smaller relative errors than the Singleton model.  However, we did not observe a phase transition threshold in $\rho_o$ for the Mixture model.  We also tested the adaptive version of HoRPCA-S \eqref{eq:horpca-single-adapt} with $\lambda_{*,1}$ and $\lambda_{*,2}$ set to 0.1.  The results were surprisingly good, and exact recovery was achieved with only 65\% of the data.  The experimental results on the synthetic data suggest that HoRPCA-S was the only algorithm that was able to achieve exact recovery with partial observations.  When the ground truth tensor was not low-rank in all modes, prior rank information was required for exact recovery.  When the rank information was not available, the Mixture model appeared to be a reasonable alternative with fully observed data.  The Tucker decomposition performed similarly to the non-adaptive Singleton model on fully observed data.

 In general, the ADAL-based algorithms were several times faster than the FISTA-based algorithms, and they achieved lower recovery error in most cases.  The difference in speed echoes an observation in \cite{lin2010augmented}.  The unconstrained formulation and the FISTA algorithms developed for them are more suitable for problems where data consistency is not required to be enforced strictly.  This general observation is similar to one made in \cite{lin2010augmented} for the RPCA problem.

\subsubsection{Nonconvex Model}
We repeated some of the experiments above using Algorithm \ref{alg:horpca-ncx} for different values of the Tucker-ranks to examine the recoverability property of the nonconvex constrained model \eqref{eq:horpca-ncx} (HoRPCA-C).  For the fully low-rank data, three Tucker-ranks were tested: [5,5,5], [6,6,6], and [8,8,8].  The values in the square parentheses correspond to $r_i$'s in \eqref{eq:horpca-ncx}.  We present recoverability results for (i) fixed $\rho_n = 25\%, M=1$ and varying $\rho_o$ and (ii) fixed $\rho_o = 100\%$ and varying $\rho_n$.  For the partially low-rank data, we tested the Tucker-ranks [3,3,20,10], [4,5,20,10], and [4,5,17,18] for fixed $\rho_n = 10\%, M=1$ and varying $\rho_o$.  The results are plotted in Figures \ref{fig:syn-ncx-lr} and \ref{fig:syn-ncx-plr}.

The first observation from these experimental results is that when the Tucker rank was correctly specified, HoRPCA-C yielded significantly better recovery performance in that it achieved near-exact recovery with much fewer observations (20\%) and was more robust to data corruption, up to 40\%.  In terms of speed, HoRPCA-C was comparable to HoRPCA-S, and the number of iterations required consistently decreased with increasing amount of observations (when the problem became easier and easier).  When the Tucker rank was over-estimated, the recovery performance of HoRPCA-C was impacted, but it was still comparable to that for HoRPCA-S.  On the other hand, there was a significant increase in the number of iterations.  Under-estimating the Tucker rank had a detrimental effect on the recovery performance, which is obvious from Figure \ref{fig:syn-ncx-plr}.

\begin{figure}
\begin{center}
    \hspace*{-0.1in}\includegraphics[trim = 0.6in 0in 0.5in 0in, clip, width=0.5\textwidth]{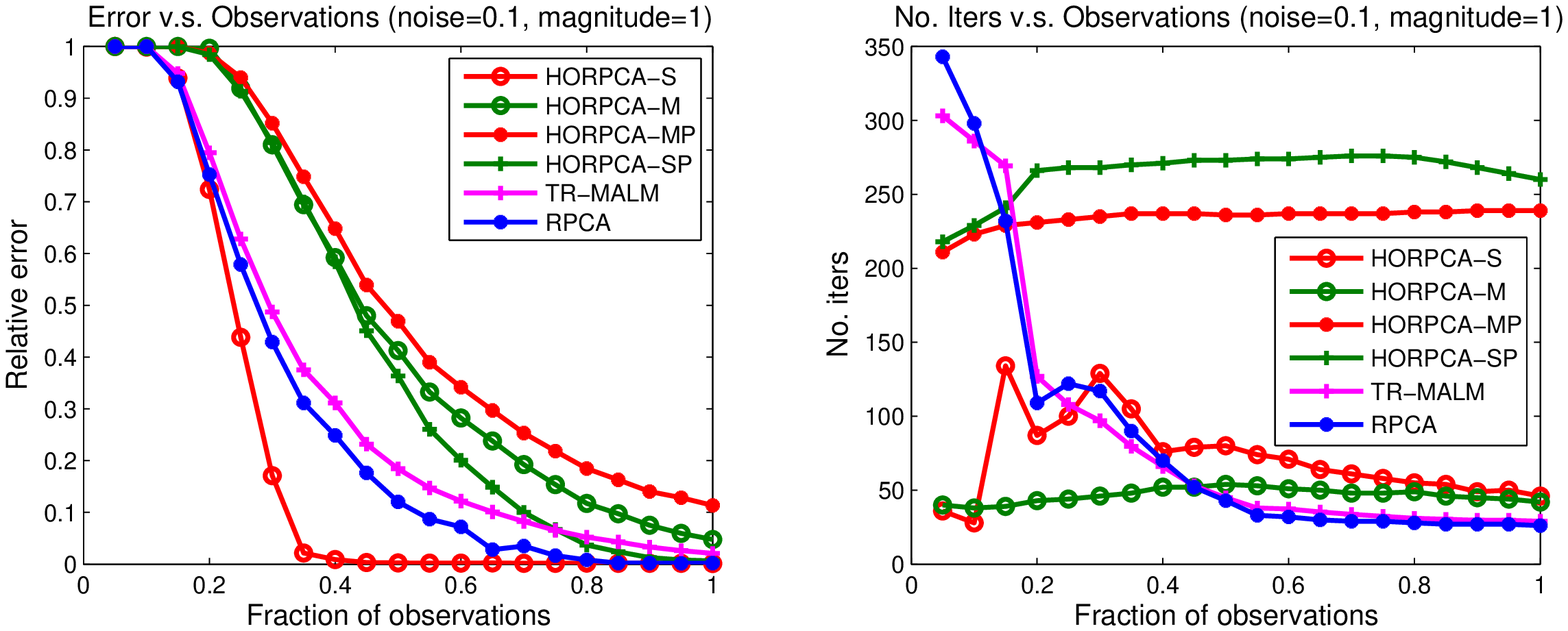}
    \hspace*{0in}\includegraphics[trim = 0.6in 0in 0.5in 0in, clip, width=0.5\textwidth]{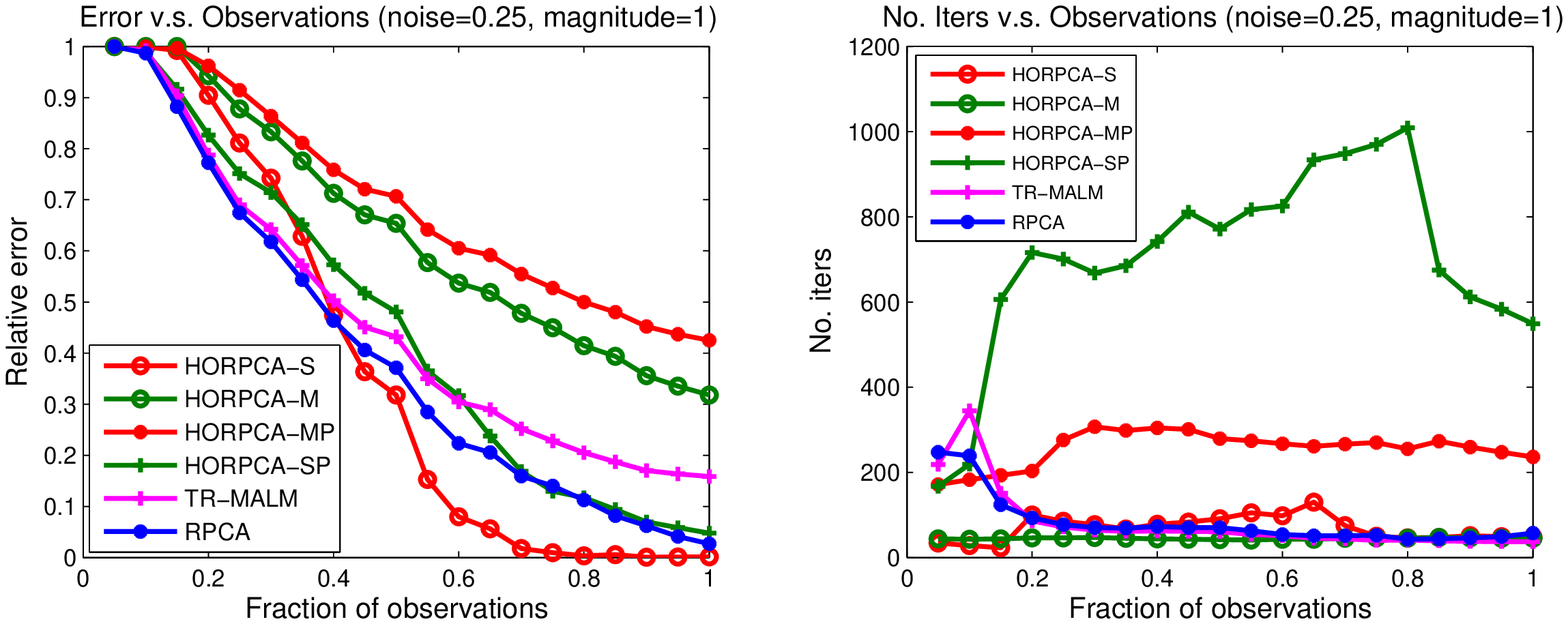}
    \caption{Recovery results for a synthetic rank-(5,5,5) tensor of dimensions (50,50,20) with varying fractions of observations.  $M = 1$.  Left: 10\% data corrupted.  Tucker error = 1.4.  Right: 25\% data corrupted.  Tucker error = 1.9}
    \label{fig:syn-obs-n01}
\end{center}
\end{figure}


\begin{figure}
\begin{center}
    \hspace*{-.1in}\includegraphics[trim = 0.6in 0in 0.5in 0in, clip, width=0.5\textwidth]{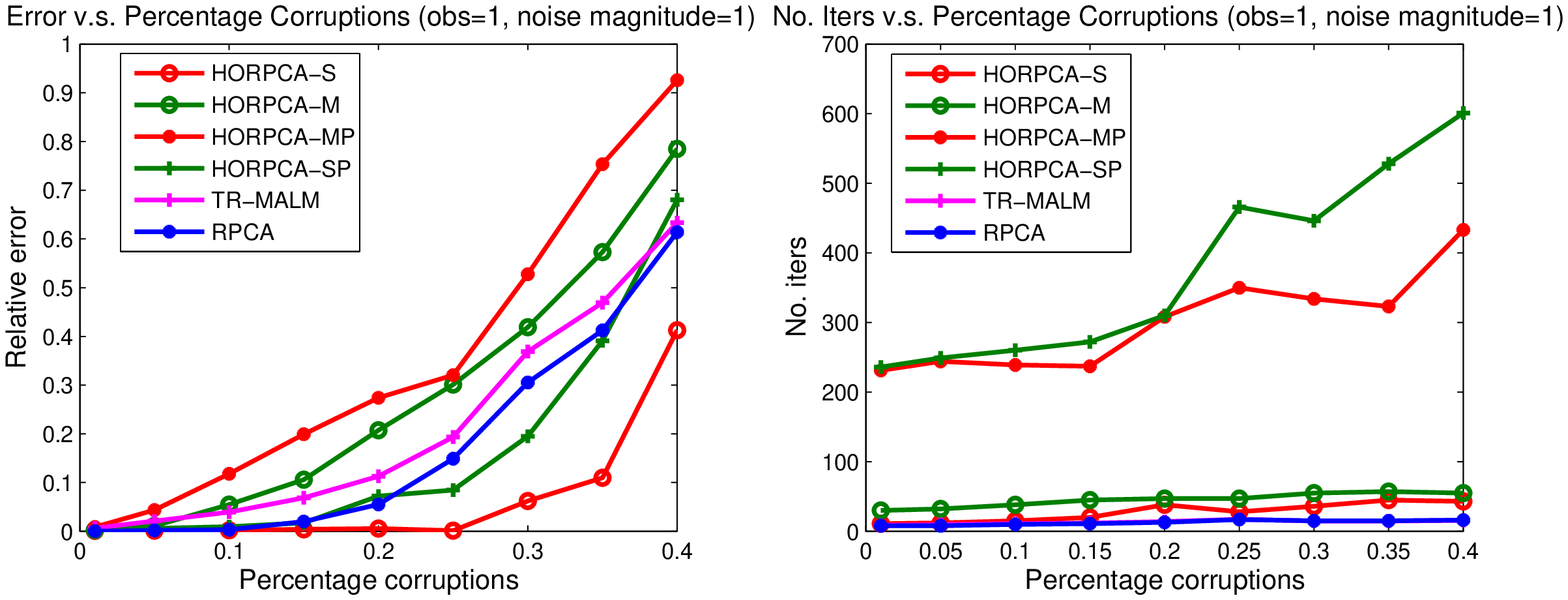}
    \hspace*{0in}\includegraphics[trim = 0.6in 0in 0.5in 0in, clip, width=0.5\textwidth]{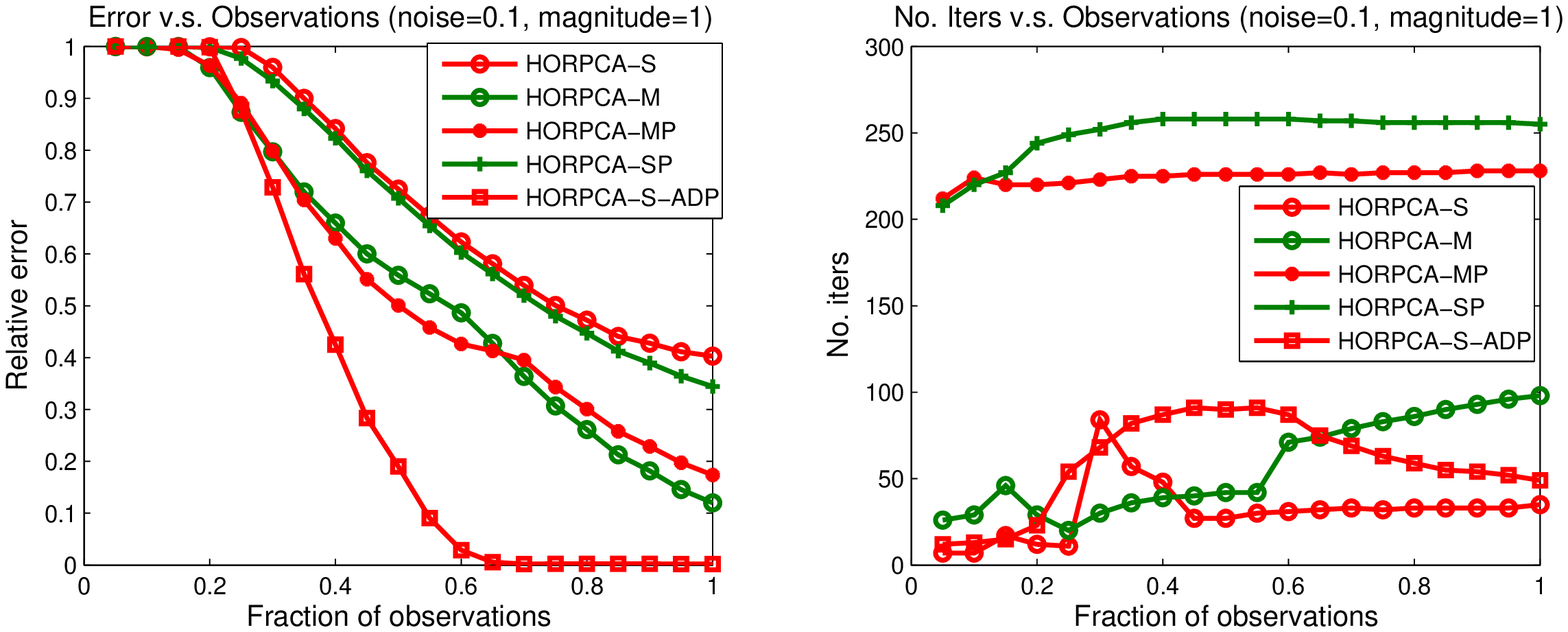}
    \caption{Left: Recovery results for a synthetic rank-(5,5,5) tensor of dimensions (50,50,20) with varying percentage of corruptions.  Full observations were given. $M = 1$.  Right: Recovery results for a synthetic rank-(3,3,20,10) tensor of dimensions (20,20,20,10) with varying fractions of observations.  10\% data corrupted.  $M = 1$.  Tucker error = 0.45.}
    \label{fig:syn-noise-obs100}
\end{center}
\end{figure}

\begin{figure}
\begin{center}
    \hspace*{-0.15in}\includegraphics[width=0.33\textwidth]{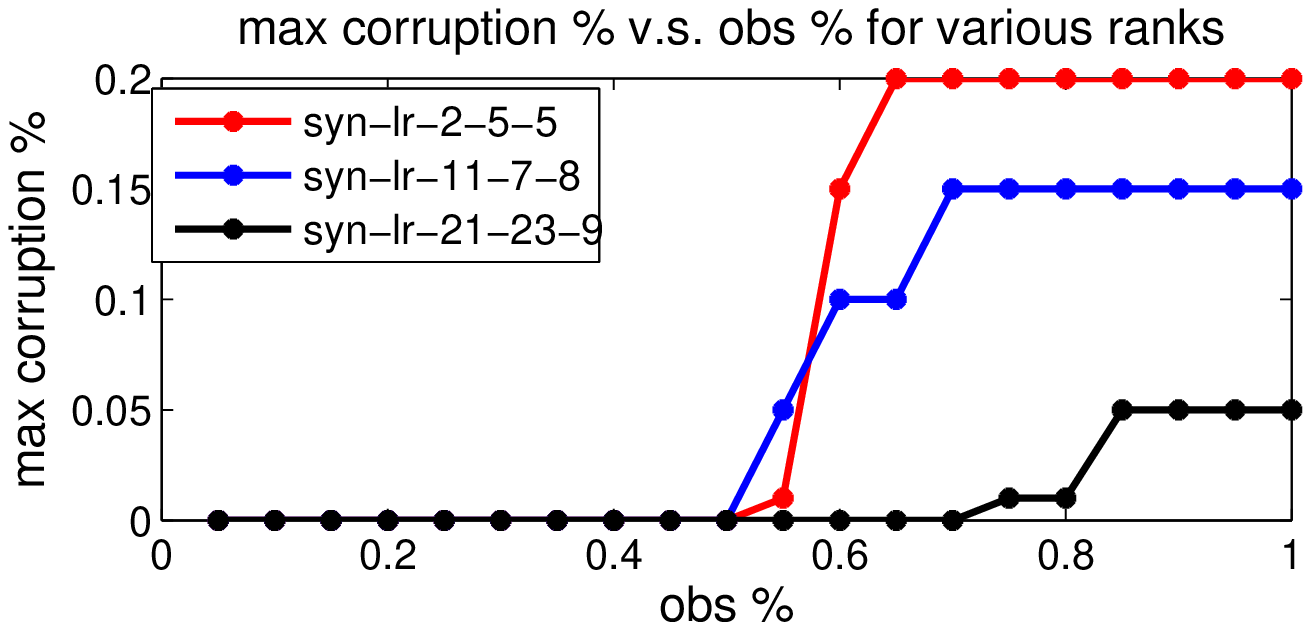}
    \hspace*{0in}\includegraphics[width=0.33\textwidth]{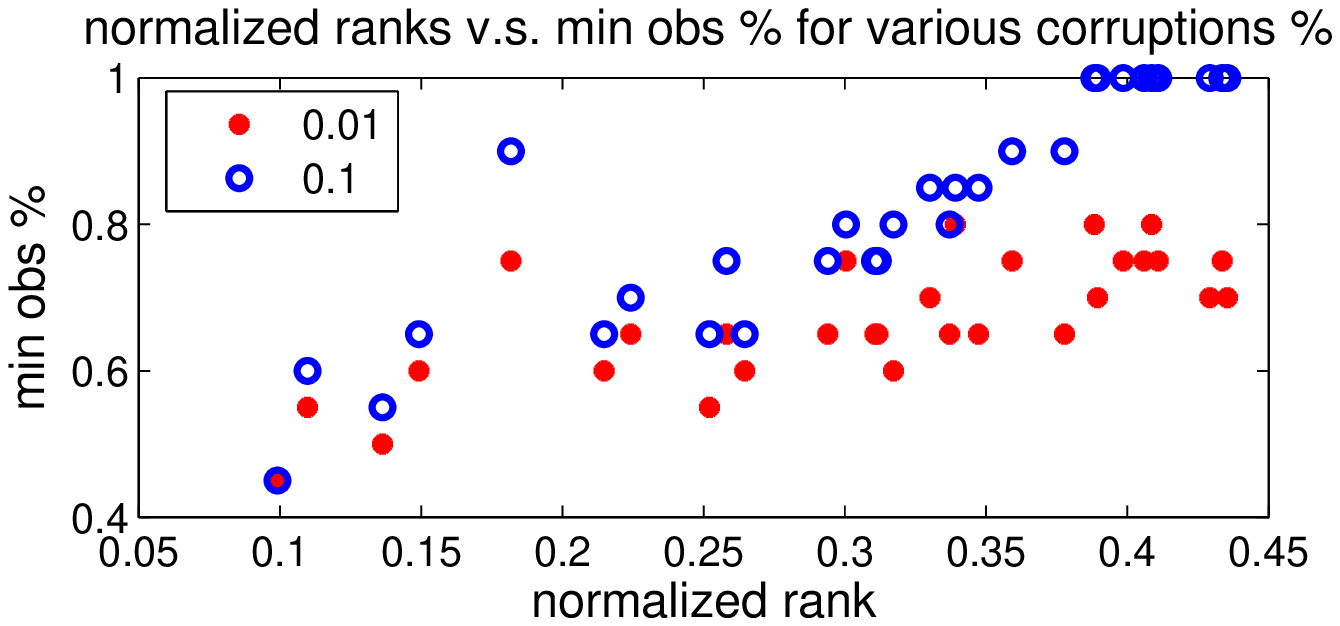}
    \hspace*{0in}\includegraphics[width=0.33\textwidth]{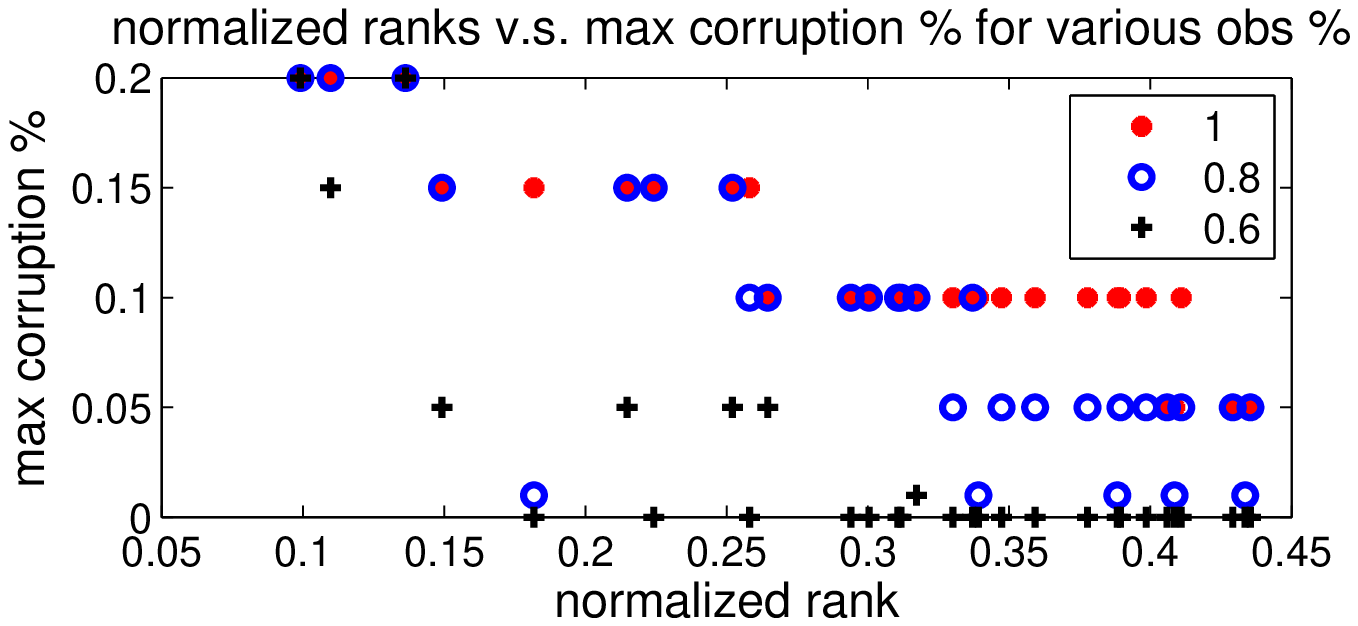}
    \caption{HoRPCA-S recoverability with respect to fractions of observations, sparsity of corruptions, and Tucker rank sums}
    \label{fig:syn-rank-spa-obs}
\end{center}
\end{figure}


\begin{figure}
\begin{center}
    \hspace*{-0.1in}\includegraphics[trim = 0.6in 0in 0.5in 0in, clip, width=0.5\textwidth]{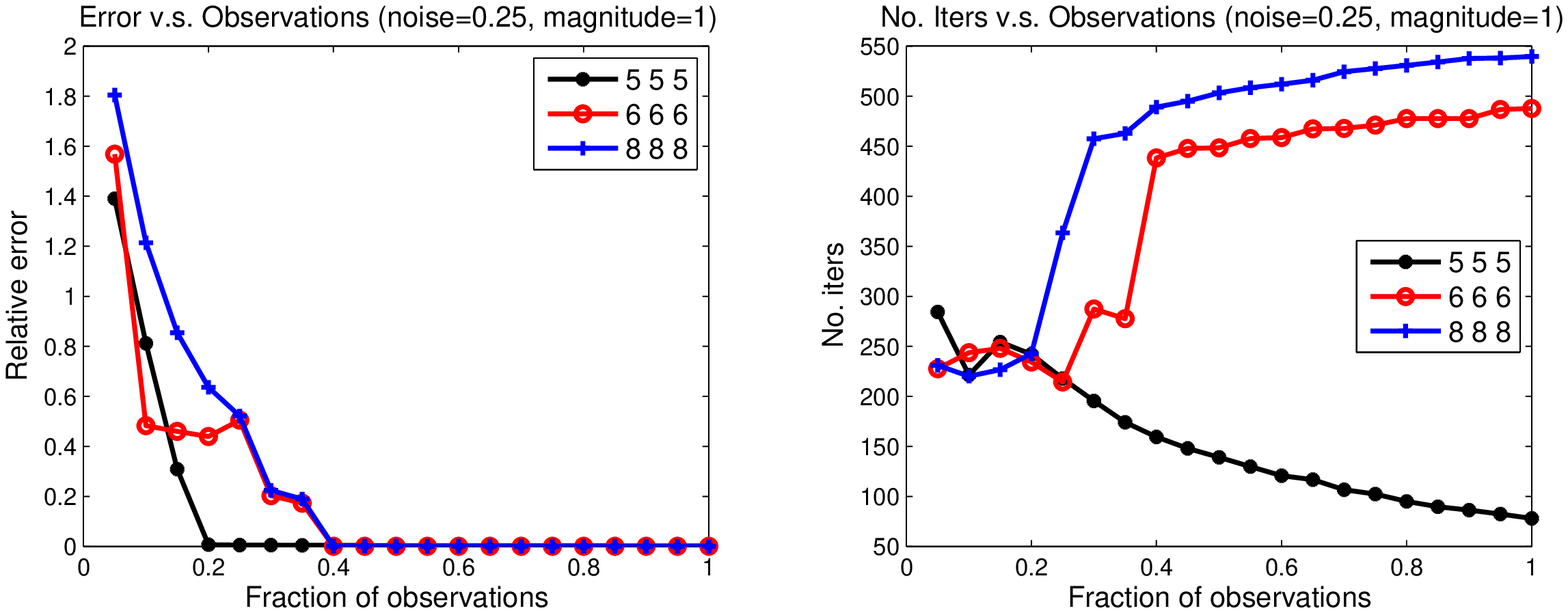}
    \hspace*{0in}\includegraphics[trim = 0.6in 0in 0.5in 0in, clip, width=0.5\textwidth]{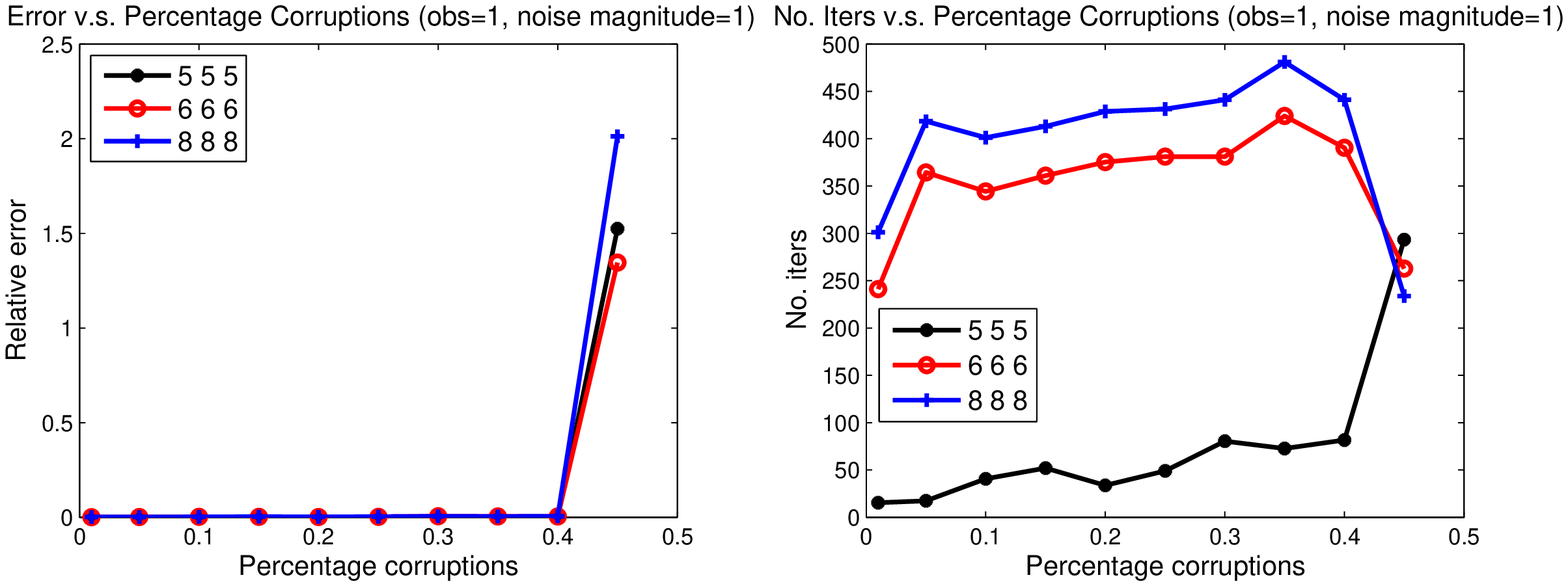}
    \caption{Recovery results for a synthetic rank-(5,5,5) tensor of dimensions (50,50,20), using HoRPCA-C.  $M = 1$.  Left: 25\% data corrupted, with varying percentage of observations.  Right: full observations, with varying percentage of data corrupted.}
    \label{fig:syn-ncx-lr}
\end{center}
\end{figure}

\begin{figure}
\begin{center}
    \includegraphics[trim = 0.6in 0in 0.5in 0in, clip, width=0.5\textwidth]{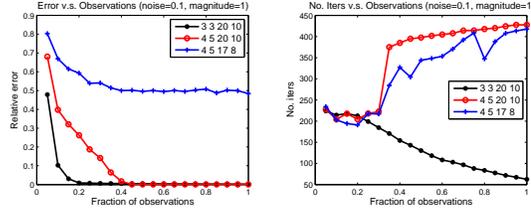}
    \caption{Recovery results for a synthetic rank-(3,3,20,10) tensor of dimensions (20,20,20,10) with varying fractions of observations, using HoRPCA-C.  $M = 1$.  10\% data corrupted.}
    \label{fig:syn-ncx-plr}
\end{center}
\end{figure}

\subsection{3D MRI data}\label{sec:mri3d-dn}
We use a 3D MRI data set INCISIX obtained from the OsiriX repository\footnote{DICOM sample image sets repository. http://www.osirix-viewer.com, http://pubimage.hcuge.ch:8080}, which is also analyzed in \cite{gandy2011tensor}.  We extracted a section of this data set containing 50 slices across a human brain, each having a size of 128 $\times$ 128.  We randomly corrupted 10\% of the data with $M$ = 0.5.  To serve as a baseline, we also applied RPCA to each frontal slice of the data set, and the number of iterations were averaged.  The values of $\lambda_1$ (and $\lambda_*$ in the case of the Lagrangian versions) were tuned for the best perceptual results, i.e. a good balance noise reduction (few bright spots in the black background) and detail retention.  Although the relative error provides a general idea about the quality of the recovered results, we did not use it for tuning the parameters because we found in our experiments that for a given method, the solutions with the lowest relative error tend to appear too noisy to be acceptable.  The value of $\mu$ was set to 40$\times$std(vec($\sbB_\Omega$)), except for RPCA where $\mu =$ 5$\times$std(vec($\sbB_\Omega$)).  We present the recovery results in Figures \ref{fig:incisix-horpca-05} and \ref{fig:incisix-horpca-tc-05}.


The Mixture models (HoRPCA-M and HoRPCA-MP) performed considerably better than the Singleton models and the existing methods in recovering the details of the MRI images from gross corruptions.  This can probably be explained by the fact that this MRI tensor is not truly low-rank in all modes,
reinforcing what we observed in Section \ref{sec:syn-data}.  The nonconvex model, using the Tucker rank information revealed by the Mixture model, tended to retain more details than the other models, but it also included more noise in the partial data case.

The results obtained by the Lagrangian versions are in general inferior to those produced by the exact (constrained) versions due to the relaxed consistency term.   The impact is especially apparent in the case of partial observations, where artifacts due to consistency errors are visible in the recovered images.  In terms of running time, the FISTA-based algorithms generally required more iterations to converge than the ADAL-based algorithms.


\begin{figure}
\begin{center}
    \hspace*{0in}\includegraphics[trim = 1in 0.5in 0in 0.3in, clip,width=0.8\textwidth]{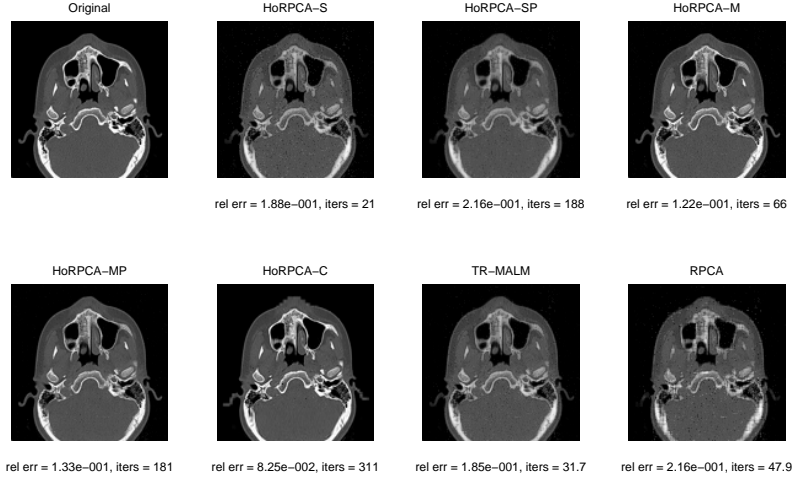}
    \caption{Recovery results for slice 30 of the INCISIX data with full observations.  $M$ = 0.5.}
    \label{fig:incisix-horpca-05}
\end{center}
\end{figure}

\begin{figure}
\begin{center}
    \hspace*{-0.4in}\includegraphics[trim = 1in 0.1in 0in 0.2in, clip,width=0.95\textwidth]{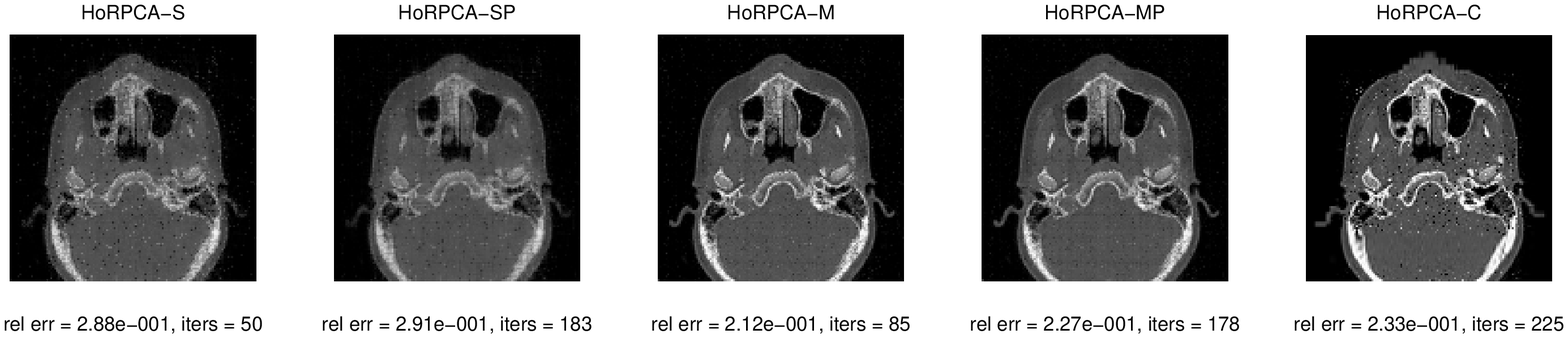}
    \caption{Recovery results for slice 30 of the INCISIX data with 50\% observations.  $M$ = 0.5.}
    \label{fig:incisix-horpca-tc-05}
\end{center}
\end{figure}

%

\subsection{Amino Acid Fluorescence Data}\label{sec:amino}
The amino acid data set is described in Appendix \ref{sec:app_amino_acid}.  It was chosen because it is free of outliers, and hence, the low-rank ground truth is available for reference.  There are two types of outliers defined in the chemometrics literature.  The first type of outliers are observations that have a deviating profile compared to the majority of the observations and are called \textit{outlying samples}.  The second type is called \textit{scatter}, which can be considered as individual outlying data elements (e.g. due to corruption).  Other data elements of the same sample may be fine in this case.  In our experiment, we set maximum magnitude of corruption $M$ = 100, and the fraction corrupted to be 10\%.  This is equivalent to adding scatters.  We did not create outlying samples since the number of original samples was small.

We report the relative errors for HoRPCA variants (see Table \ref{tab:amino}) and plot CP factors of the low-rank tensors obtained by the two algorithms that yielded the smallest relative errors.  For RPCA, we applied the algorithm to each of the three unfoldings of the tensor and report the results corresponding to the mode-3 unfolding, which yielded the lowest relative errors among all three unfoldings.  As described in \cite{tomioka2010estimation}, the CP factors of the Singleton model can be obtained by applying CP decomposition to the core tensor, whereas for the other models, CP decomposition has to be applied directly to the solution tensor.  We also applied CP decomposition with rank 3 directly to the original noiseless tensor and plotted its factors to serve a reference for comparison.

The Tucker rank of the core was obtained by keeping the singular values of the mode-$i$ unfolding of the solution $\sbX$ that are larger than a threshold factor of the largest singular value, for $i = 1,\cdots,N$.  When the threshold was 0.01, the Tucker-ranks of the solutions obtained by HoRPCA-S and HoRPCA-SP were (5,4,3) and (5,5,4).  At a threshold of 0.05, both algorithms identified the correct Tucker rank, (3,3,3).  Even with 50\% data missing, HoRPCA-S was still able to recover the true Tucker rank.  The CP loading factors obtained are plotted in Figures \ref{fig:amino-cp-plots} and \ref{fig:amino-cp-tc-plots}, which show the faithful reconstruction of the CP factors.  The results presented here also shows that the Singleton model is better than the Mixture model at recovering tensors that are low-rank in all modes.  The nonconvex model HoRPCA-C with supplied Tucker rank (3,3,3) also performed well, especially when only partial observations were available.  The CP decomposition applied to the corrupted tensor directly with the correct rank supplied yielded an error not much larger than HoRPCA-S and HoRPCA-C, benefiting from the fact that the maximum magnitude of the errors was not large compared to the magnitude of the uncorrupted entries.

\begin{table}
\begin{center}
\small{
\begin{tabular}{|c|c|c|c|c|}
  \hline
  Fraction data & 100\% &  & 50\% &  \\
  \hline
  Algorithms & Rel. err. & Iters & Rel. err. & Iters \\
  \hline
  HoRPCA-S & 0.0100 & 92 & 0.0332 & 115 \\
  HoRPCA-SP & 0.0098 & 256 & 0.2006 & 255 \\
  HoRPCA-M & 0.0933 & 206 & 0.3097 & 259 \\
  HoRPCA-MP & 0.0896 & 224 & 0.2891 & 219 \\
  HoRPCA-C & 0.0262 & 121 & 0.0268 & 105 \\
  TR-MALM & 0.0472 & 103.7 & 0.2096 & 52.3 \\
  RPCA-3 & 0.0111 & 54 & 0.0596 & 99\\
  PARAFAC & 0.0292 & - & - & - \\
  \hline
\end{tabular}
}
\caption{Reconstruction results for the amino acids data.}
\label{tab:amino}
\end{center}
\end{table}

\begin{figure}
\begin{center}
    \hspace*{0in}\includegraphics[trim = 0.3in 0.5in 0.6in 0.1in, clip,width=0.9\textwidth]{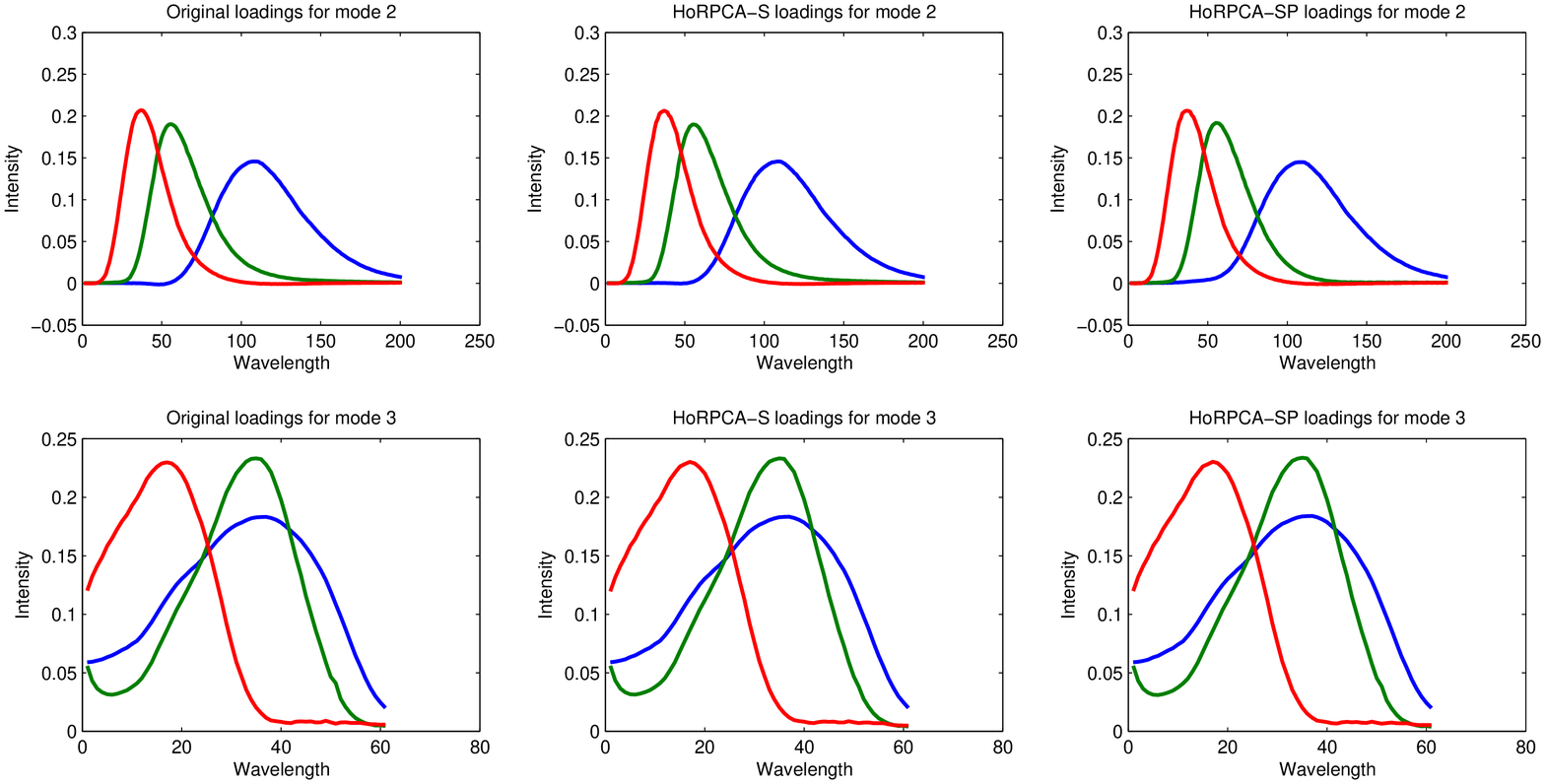}
    \caption{Plots of the original and reconstructed CP factors for the amino acids data.  $M$ = 100.}
    \label{fig:amino-cp-plots}
\end{center}
\end{figure}

\begin{figure}
\begin{center}
    \hspace*{0in}\includegraphics[trim = 0.3in 0.5in 0.6in 0.1in, clip,width=0.7\textwidth]{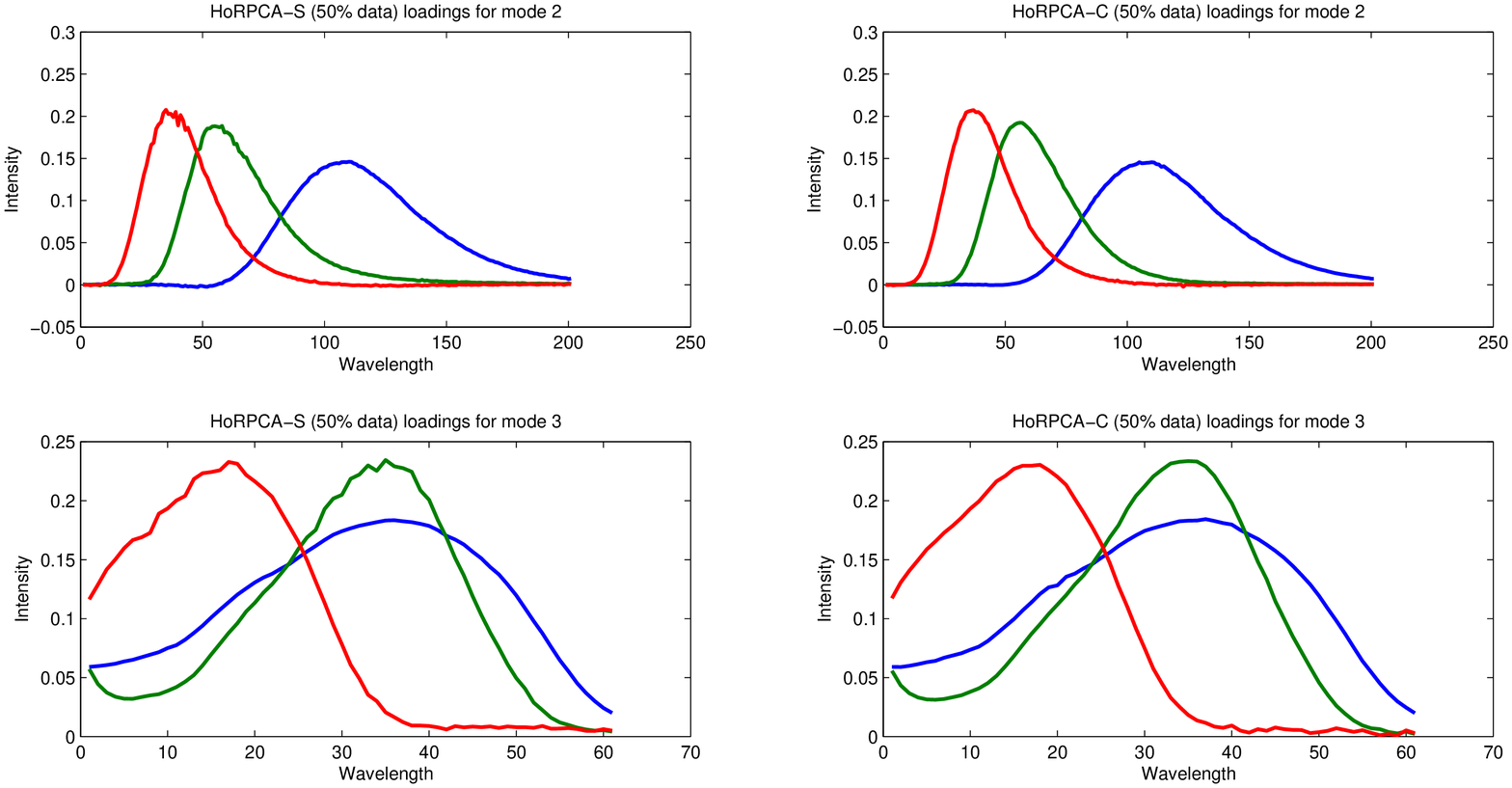}
    \caption{Plots of the reconstructed CP factors for the amino acids data.  $M$ = 100. 50\% observations.}
    \label{fig:amino-cp-tc-plots}
\end{center}
\end{figure}

\subsection{Four-component Fluorescence Data with Scattering and Outliers (The Dorrit Data)}\label{sec:dorrit}
The description of this fluorescence EEM data set is in Appendix \ref{sec:app_dorrit_data}.  Since the Singleton and the nonconvex models of HoRPCA showed better recovery results than the Mixture model for low-rank fluorescence data in the previous example, we focus on these two models for the Dorrit data.  We first tested the Singleton model (HoRPCA-S and HoRPCA-SP) and HoRPCA-C in the presence of scattering only.  Four samples which prior research had identified as outliers \cite{riu2003jack} were excluded from the data set.  We applied the three algorithms to the resulting data set and computed the CP loadings by the same procedures as in the previous example.  Figures \ref{fig:dorrit-cp-plots} and \ref{fig:dorrit-cp-tc-plots} shows the CP loadings corresponding to the emission and the excitation modes for the original noisy data and the recovered CP loadings obtained by our proposed algorithms.  We considered both full and partial observations cases.  Comparing with the pure component loadings plot (see, e.g., \cite{baunsgaard1999phd,engelen2009fully}), we can see that the recovered low-rank tensors obtained by our algorithms achieved a high degree of faithfulness, whereas the results of the classical CP were severely corrupted by scattering.  Our recovery results are also comparable to those obtained by the state-of-the-art robust techniques for removing scatter in fluorescence EEM data \cite{engelen2007automatically}.

We also tested the robustness of the algorithms to both scattering and outliers in the data by using all 27 samples.  The Singleton and the nonconvex HoRPCA models were able to remove the adverse effects of the two types of noise to a tangible extent, but the results were not as compelling as in the previous case.  Due to the small number of samples, the presence of the four outliers considerably changed the global properties of the data, which had a negative impact on the performance of HoRPCA.  While the $R_1$-based robust tensor factorization method \cite{huang2008robust} should be able to eliminate the outliers, it does not take into account the low-rank structure in the first mode. Hence, there was no tangible improvement in the recovery results from this method.

\begin{figure}
\begin{center}
    \hspace*{-0in}\includegraphics[trim = 0.6in 0in 0.5in 0in, clip,width=1\textwidth]{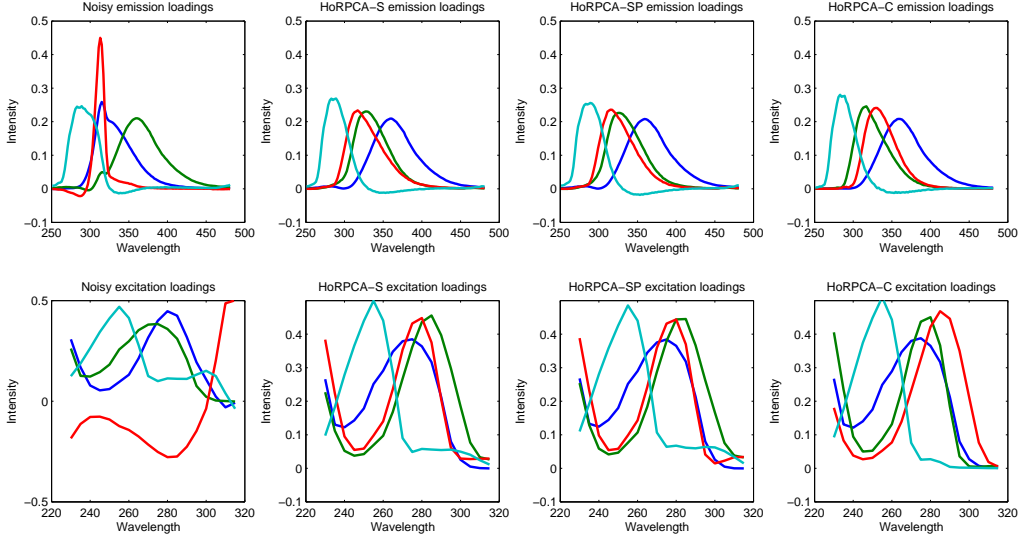}
    \caption{Plots of the original and reconstructed CP factors for the Dorrit data with scattering only.}
    \label{fig:dorrit-cp-plots}
\end{center}
\end{figure}

\begin{figure}
\begin{center}
    \hspace*{0in}\includegraphics[trim = 0.7in 0in 0.4in 0in, clip,width=0.9\textwidth]{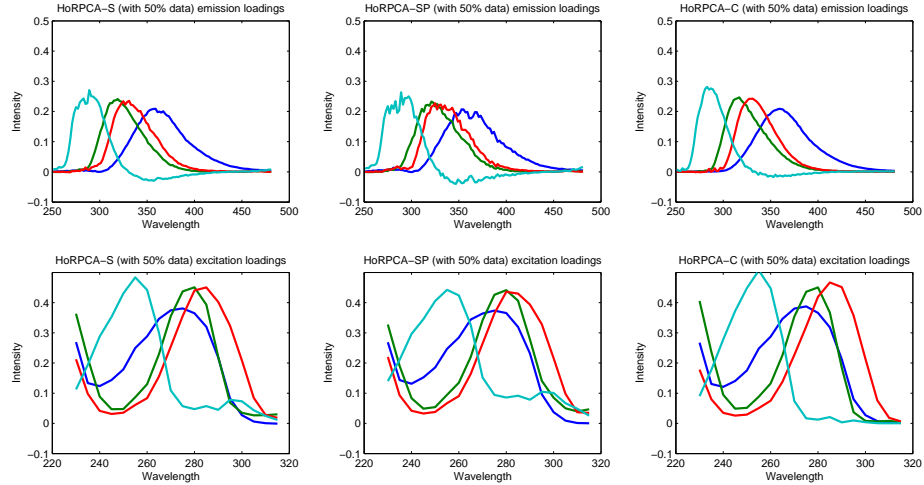}
    \caption{Plots of the reconstructed CP factors for the Dorrit data with scattering only.  50\% observations.}
    \label{fig:dorrit-cp-tc-plots}
\end{center}
\end{figure}


\subsection{Face Representation}\label{sec:YaleB}
We describe the YaleB database in Appendix \ref{sec:app_YaleB}.  In \cite{vasilescu2003multilinear}, it is shown that strategic dimensionality reduction in illuminations can be achieved through the truncated Tucker decomposition, and the resulting compression quality is significantly better than using PCA.  Here, we demonstrate that by recovering a low-Tucker rank tensor, we can essentially reconstruct the low-rank Tucker decomposition where illuminations are automatically reduced.

We corrupted 10\% of the pixels with $M = $1.   As in Section \ref{sec:mri3d-dn}, we also used the ``perceptual error'' to tune the parameters and measure recovery performance.  Good perceptual results in this case means low noise level, reduced shadow, and good retention of other details.  Figure \ref{fig:YaleB-20} presents the reconstruction results.  The number of iterations required by the algorithms are reported below each image.  We can see from Figure \ref{fig:YaleB-20} that the constrained nonconvex model rendered the best performance in terms of noise removal and shadow reduction, followed by the Singleton model.  (See the reduction in the shadow of the nose, for example.)  However, HoRPCA-S was much faster than HoRPCA-C, echoing its stronger convergence guarantee.  RPCA was also quite effective in reducing the shadow, but the area where the shadow used to be and the left eye appear noisier.


\begin{figure}
\begin{center}
    \hspace*{0in}\includegraphics[trim = 1in 0.2in 1in 0.2in, clip,width=0.8\textwidth]{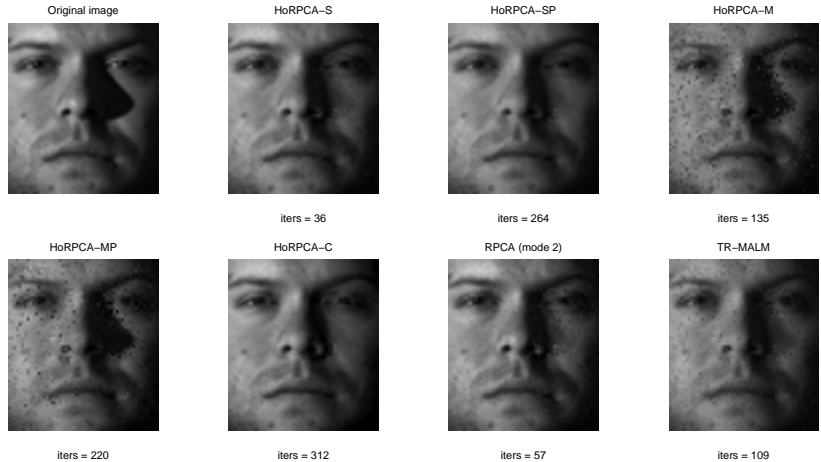}
    \caption{Reconstruction results for the YaleB face image ensemble with full observations.  $M$ = 1.  The images are from illumination 20 of person 1.}
    \label{fig:YaleB-20}
\end{center}
\end{figure}

\subsection{Low-rank Static Background Reconstruction (The Game Data)}\label{sec:game}
In this experiment, we investigated the capability of HoRPCA (HoRPCA-S and HoRPCA-C in particular) to remove the dynamic foreground objects and reconstruct the game background with only a small fraction of pixels available.  The data set is described in Appendix \ref{sec:app_game_data}. As a baseline for comparison, we also implemented an ADAL algorithm for RPCA with missing data estimation and applied it to the mode-4 unfolding of the data tensor.  Note that recovering the foreground objects is not the goal of this experiment because as discussed in Section \ref{sec:missing-data}, recovering the sparse term exactly is not possible with partial observations.

In Figure \ref{fig:game-tc-03-horpca}, we compare the recovery performance of HoRPCA-S, HoRPCA-C, and RPCA with 20\% pixels available.  For each of the algorithms, we tuned the parameter $\lambda_1$ for the best perceptual results.  For HoRPCA-S, we used the adaptive weights for the nuclear norms as in Remark \ref{rem:adaptive_single}.  All three methods were able to separate the roaches and the sandal from the table.  HoRPCA-S performed visibly better than RPCA in estimating the missing pixels and reconstruct the low-rank background.  In particular, the pixels for the table were estimated almost perfectly.  The good reconstruction performance of HoRPCA-S is due to the fact that the background of the frames is low-rank not only in the temporal space but also in the spacial sense.  Hence, even with a very small fraction of pixels, HoRPCA-S was able to filter out the foreground objects and reconstruct the background to an high accuracy.  Based on the Tucker rank information revealed by HoRPCA-S, the nonconvex model HoRPCA-C was able to obtain equally good recovery result, with even more background details retained.


\begin{figure}
\begin{center}
    \hspace*{0in}\includegraphics[trim = 4in 2in 2.5in 0.2in, clip,width=0.37\textwidth]{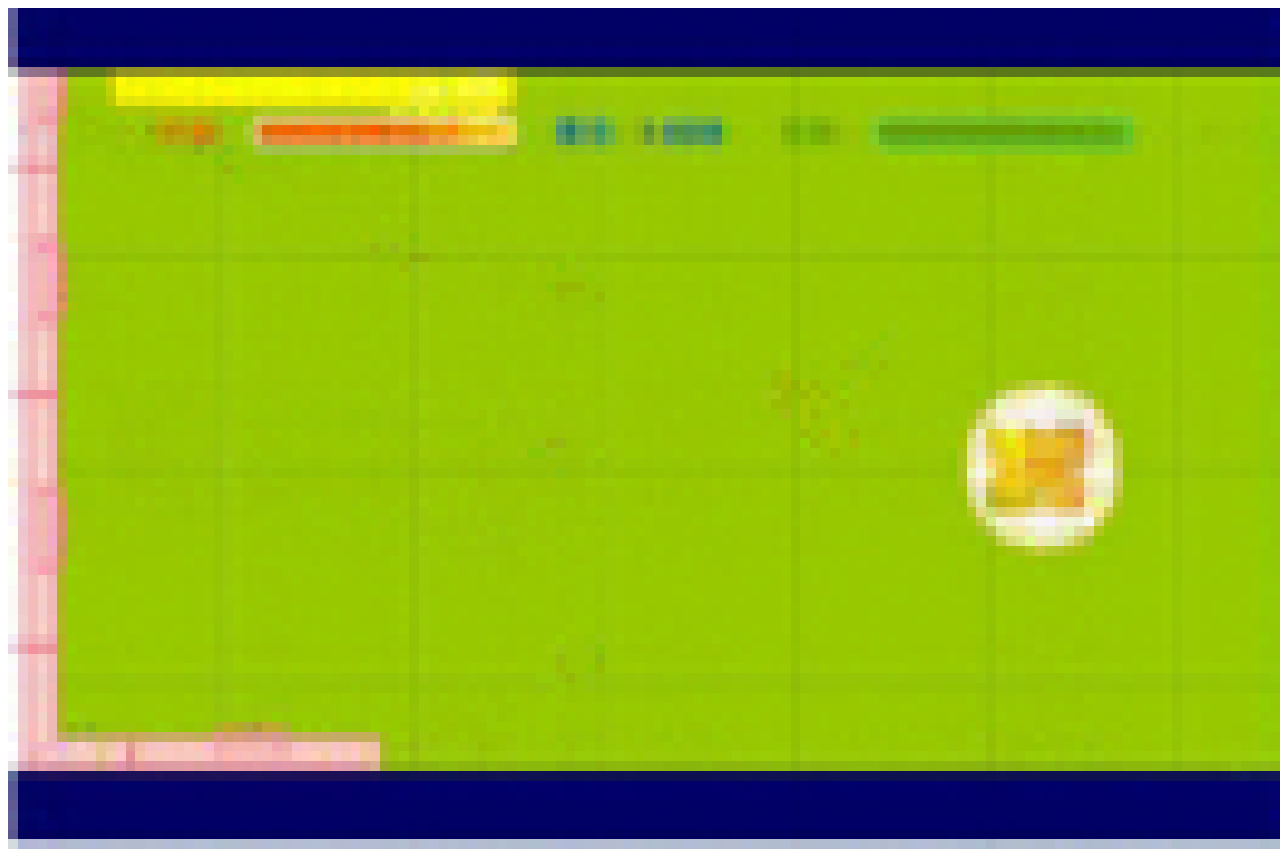}
    \hspace*{-0.5in}\includegraphics[trim = 4in 2in 2.5in 0.2in, clip,width=0.37\textwidth]{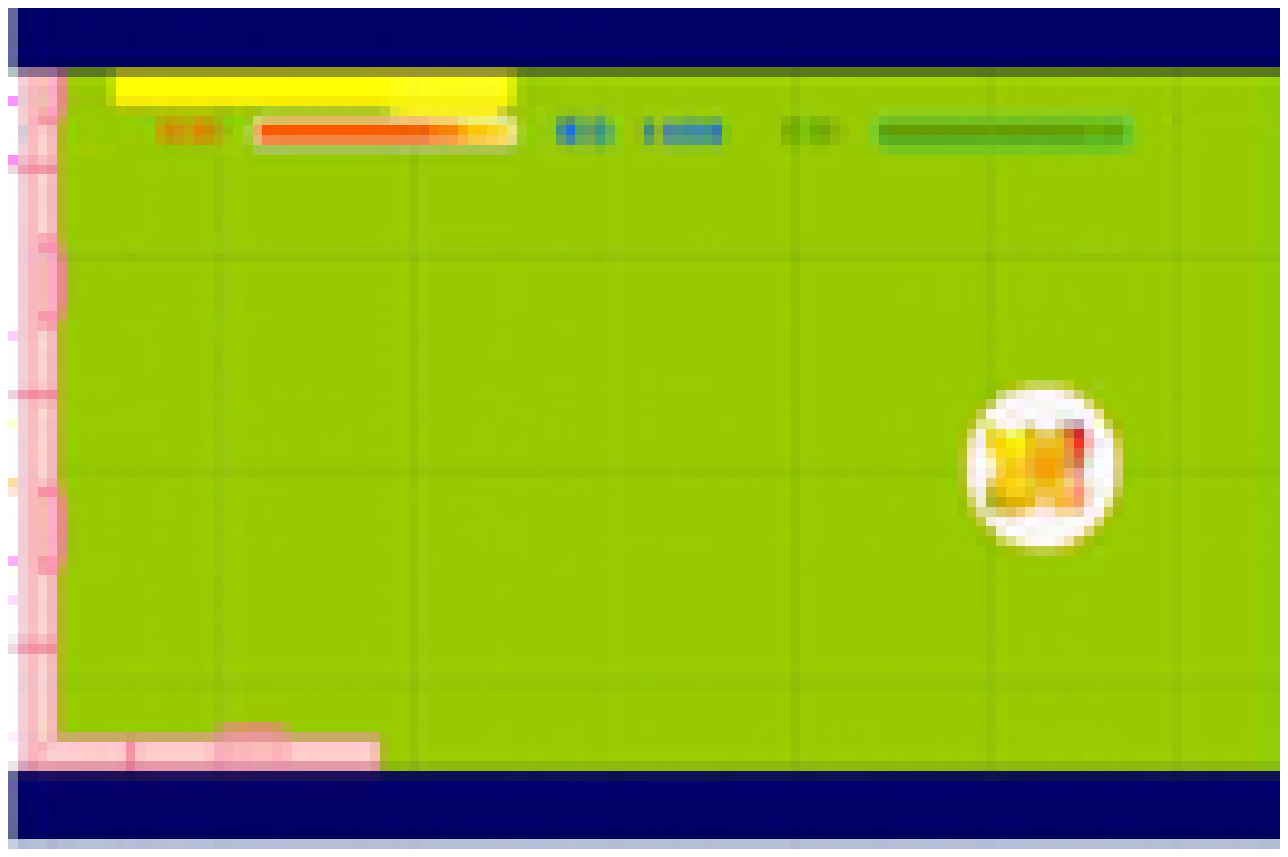}
    \hspace*{-0.5in}\includegraphics[trim = 4in 2in 2.5in 0.2in, clip,width=0.37\textwidth]{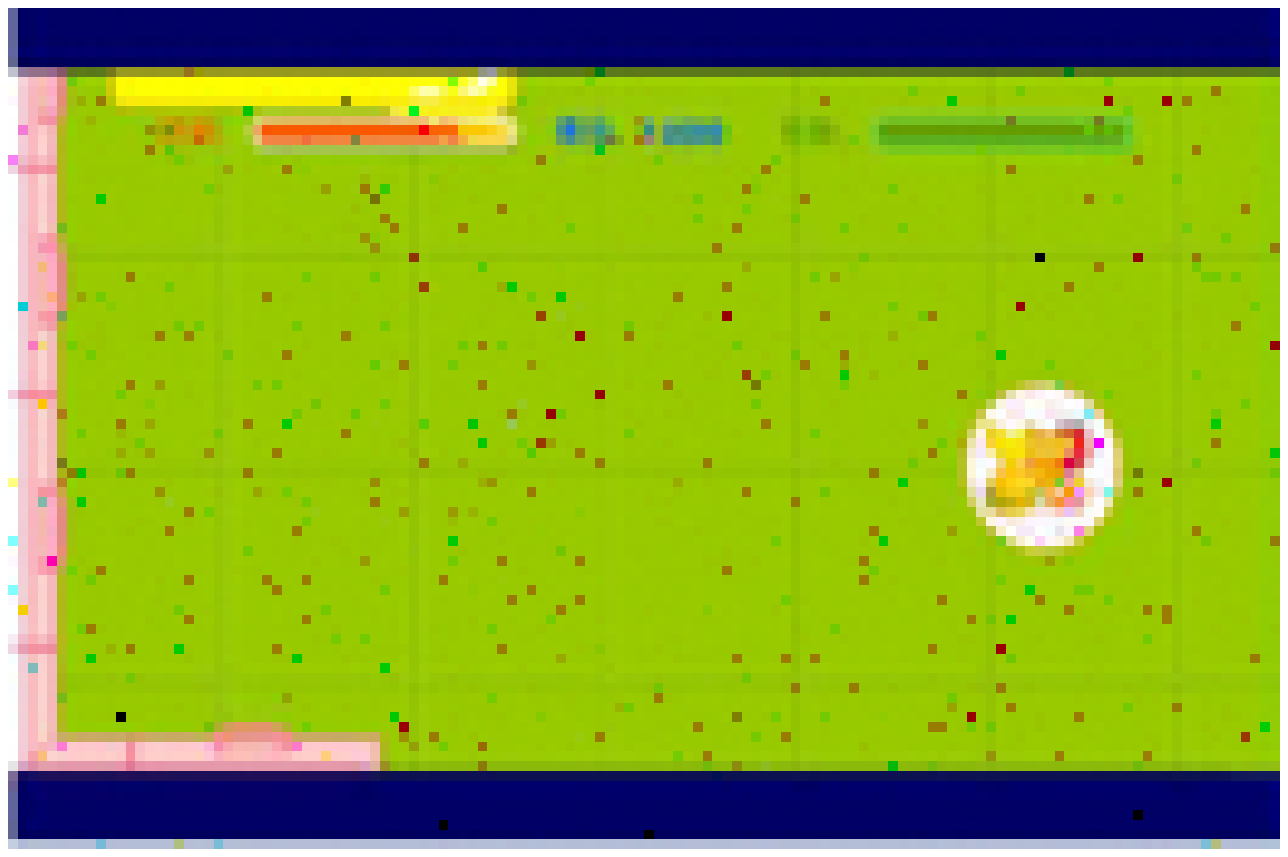}
    \caption{The reconstruction results for the 10th frame obtained by HoRPCA-S (left), HoRPCA-C (center) and RPCA (right) using 20\% noiseless data.}
    \label{fig:game-tc-03-horpca}
\end{center}
\end{figure}


\section{Conclusion}
Robust low-rank tensor recovery plays an instrumental role in robustifying tensor decompositions, and it is also useful in its own right.  The problem of recovering a low-rank tensor from sparse gross corruptions and missing values has not been previously studied in depth either theoretically or computationally.  In this paper, we have focused on the computational aspect of this problem and presented two models in a convex optimization framework HoRPCA, one of which naturally leads to a robust version of the Tucker decomposition.  Both the constrained and the Lagrangian formulations of the problem were considered, and we proposed efficient optimization algorithms with global convergence guarantees for each case.  We also presented a nonconvex model, which can potentially take advantage of more precise rank information.

Our computational experiments cover a rich set of synthetic and real data.  We analyzed the empirical conditions under which exact recovery of a low-rank tensor is possible for the Singleton model of HoRPCA, and we have demonstrated that this model performed the best among the convex models in terms of recovery accuracy when the underlying tensor was low-rank in all modes, whereas the Mixture model performed the best when the tensor was low-rank in only some modes.  When more precise rank information (possibly from the results obtained from the convex models) is available, the nonconvex model exhibited tangibly better recovery performance than the convex relaxations.  From our computational experience, a general strategy is to apply HoRPCA-S first, when no prior rank information is available.  If the revealed ranks indicate that the data may be partially low-rank, HoRPCA-S-ADP or HoRPCA-M should be used instead.  Finally, HoRPCA-C can be used as a refinement step based on the more precise rank information revealed.

Future research directions include studying conditions for exact recoverability in terms of the fraction of observations, the sparsity of the errors, and the Tucker rank; automatic learning of the penalty weights for the adaptive version of the HoRPCA Singleton model \eqref{eq:horpca-single-adapt}; and applying the methods to applications where matrix-based approaches are insufficient.

\section*{Acknowledgments}
This research was supported in part by NSF Grant DMS-1016571, ONR Grant N00014-08-1-1118, and DOE Grant DE-FG02-08ER25856.

\bibliographystyle{abbrv}
\bibliography{tony_bib}

\newpage
\appendix
\section{Convergence Analysis of Algorithm \ref{alg:adal-horpca-mix}}\label{sec:app_conv_iadal}
The proof for Theorem \ref{thm:horpca_m_conv} follows closely the one given by Ma et. al. \cite{ma2012admm}, which is a generalization of the proof by Yang and Zhang for an inexact ADAL method for compressed sensing \cite{yang2009alternating}.  Hence, we give only a sketch of the proof and establish the connection with the previous work.
\begin{proof}
First, we introduce some notation and definitions that are required for the proof.  We define $\sbU := \textrm{TArray}(\sbXbar, \sbLambda)$, and $\sbU\supIndk := \textrm{TArray}(\sbXbar\supIndk, \sbLambda\supIndk)$.  The linear operator $\calA: \bbR^{\NIoneTimesToIN}\rightarrow\bbR^{\IoneTimesToIN}$ is the summation operator defined on $\sbXbar$, i.e. $\calA(\sbXbar) := \sumOneToN\sbX_i$.  We recall from Section \ref{sec:notation} that the matrix $\bA \in \bbR^{I_1\times N\cdot I_1}$ corresponding to $\calA$ is $\left(
                                                                                                          \begin{array}{ccc}
                                                                                                            \bI & \cdots & \bI \\
                                                                                                          \end{array}
                                                                                                        \right)
$, and it can be verified that the maximum eigenvalue of $\bA^T\bA$ equals $N$.
The linear operator $\calH: \bbR^{\xTimesToX{I_1}{I_{N+1}}}\rightarrow\bbR^{\xTimesToX{I_1}{I_{N+1}}}$ is defined as $\calH(\sbU) := \textrm{TArrau}(\frac{1}{\eta\mu}\sbXbar, \mu\sbLambda)$.  We define the norm $\|\sbU\|^2_{\calH} := \langle\sbU,\calH(\sbU) \rangle$ and the inner product $\langle\sbU,\sbV \rangle_{\calH} := \langle\sbU,\calH(\sbV)\rangle$, where $\sbV$ is another tensor array with the same size as $\sbU$.  We also define two convex functions $F(\cdot)$ and $G(\cdot)$ as $F(\sbE) := \lambda_1\|\sbE\|_1, \quad\quad G(\sbXbar) := \sumOneToN\|\modeiXi\|_*$.
Now, problem \eqref{eq:horpca-mix}, HoRPCA with the Mixture model can be written as
\begin{eqnarray}\label{eq:horpca-mix-simplified}
  \min_{\sbXbar,\sbE} \left\{ F(\sbE) + G(\sbXbar) \vbar \sbE + \calA(\sbXbar) = \sbB \right\},
\end{eqnarray}
and one iteration of Algorithm \ref{alg:adal-horpca-mix} is
\begin{equation}\label{eq:horpca-mix-one-iter}
    \left\{
      \begin{array}{ll}
        \sbXbar\supIndkpone &:= \hbox{$\arg\min_{\sbXbar}\left\{G(\sbXbar) + \frac{1}{\eta\mu}\|\sbXbar - (\sbXbar\supIndk - \eta\calA^*(\sbE\supIndk + \calA(\sbXbar) - \sbB - \mu\sbLambda\supIndk))\|^2 \right\}$;} \\
        \sbE\supIndkpone &:= \hbox{$\arg\min_{\sbE}\left\{F(\sbE) - \langle \sbLambda\supIndk, \sbE+\calA(\sbXbar\supIndkpone)-\sbB \rangle + \frac{1}{2\mu}\|\sbE+\calA(\sbXbar\supIndkpone)-\sbB\|^2 \right\}$;} \\
        \sbLambda\supIndkpone &:= \hbox{$\sbLambda\supIndk - \frac{1}{\mu}(\sbE\supIndkpone+\sbXbar\supIndkpone-\sbB)$.}
      \end{array}
    \right.
\end{equation}

Now, with the above definitions and notation in place, we can prove that the iterates $\sbU\supIndk$ satisfy
    $\|\sbU\supIndk - \sbU^*\|^2_{\calH} - \|\sbU\supIndkpone - \sbU^*\|^2_{\calH} \geq \alpha\|\sbU\supIndk - \sbU\supIndkpone\|^2_{\calH}$,
where $\alpha$ is a positive scalar, following the proof in \cite{ma2012admm} and \cite{yang2009alternating}.
Then, it follows that the lemma below holds.
\begin{lem}\label{lem:iadal-lem}
Let $(\sbXbar^*,\sbE)$ be the optimal solution to problem \eqref{eq:horpca-mix-simplified}, $\sbLambda^*$ be the optimal Lagrange multiplier associated with the equality constraint, and $\sbU^* := \textrm{TArray}(\sbXbar^*, \sbLambda^*)$.  Assuming that the proximal gradient step size $\eta$ satisfies $0 < \eta < \frac{1}{\lambda_{\textrm{max}}(\bA^T\bA)} = \frac{1}{N}$, we have the following results regarding the sequence $\{\sbU\supIndk\}$ generated by Algorithm \ref{alg:adal-horpca-mix}:
\begin{itemize}
  \item $\|\sbU\supIndk - \sbU\supIndkpone\|_{\calH} \rightarrow 0$;
  \item $\{\sbU\supIndk\}$ lies in a compact region;
  \item $\|\sbU\supIndk - \sbU^*\|^2_{\calH}$ is monotonically non-increasing and thus converges.
\end{itemize}
\end{lem}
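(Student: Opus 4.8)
The plan is to regard all three assertions as consequences of the fundamental monotonicity inequality
\[
\|\sbU\supIndk - \sbU^*\|^2_{\calH} - \|\sbU\supIndkpone - \sbU^*\|^2_{\calH} \;\geq\; \alpha\,\|\sbU\supIndk - \sbU\supIndkpone\|^2_{\calH}, \qquad \alpha > 0,
\]
stated just above the lemma, and to reserve the substantive effort for verifying that inequality. A preliminary observation I would record is that $\calH$ is positive definite: since $\|\sbU\|^2_{\calH} = \frac{1}{\eta\mu}\|\sbXbar\|^2 + \mu\|\sbLambda\|^2$ with $\eta,\mu>0$, the form $\langle\cdot,\cdot\rangle_{\calH}$ is a genuine inner product equivalent to the standard one, so $\|\cdot\|_{\calH}$ is a norm and each $\|\sbU\supIndk-\sbU\supIndkpone\|^2_{\calH}\ge 0$.

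Granting the fundamental inequality, the three bullets follow directly. For the third, nonnegativity of its right-hand side gives $\|\sbU\supIndkpone-\sbU^*\|^2_{\calH}\le\|\sbU\supIndk-\sbU^*\|^2_{\calH}$, so the scalar sequence $\{\|\sbU\supIndk-\sbU^*\|^2_{\calH}\}$ is monotonically non-increasing and, being bounded below by $0$, converges. For the second bullet, this same monotonicity bounds $\|\sbU\supIndk-\sbU^*\|_{\calH}$ by $\|\sbU\supInd{0}-\sbU^*\|_{\calH}$ for every $k$; because the ambient space is finite-dimensional and $\|\cdot\|_{\calH}$ is equivalent to the Euclidean norm, $\{\sbU\supIndk\}$ remains in a closed bounded, hence compact, region. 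For the first bullet I would telescope the fundamental inequality: summing over $k=0,\dots,K$ yields
\[
\alpha\sum_{k=0}^{K}\|\sbU\supIndk-\sbU\supIndkpone\|^2_{\calH}\;\le\;\|\sbU\supInd{0}-\sbU^*\|^2_{\calH}-\|\sbU^{(K+1)}-\sbU^*\|^2_{\calH}\;\le\;\|\sbU\supInd{0}-\sbU^*\|^2_{\calH},
\]
so letting $K\to\infty$ shows the series $\sum_{k}\|\sbU\supIndk-\sbU\supIndkpone\|^2_{\calH}$ is finite, whence its summands tend to $0$ and $\|\sbU\supIndk-\sbU\supIndkpone\|_{\calH}\to 0$.

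The \emph{main obstacle} is the fundamental inequality itself. I would establish it by writing out the first-order optimality (subgradient inclusion) conditions for the three steps in \eqref{eq:horpca-mix-one-iter}: the proximal-linearized $\sbXbar$-update places a specific element in $\partial G(\sbXbar\supIndkpone)$, the $\sbE$-update places one in $\partial F(\sbE\supIndkpone)$, and the multiplier update lets me rewrite the residual $\sbE\supIndkpone+\calA(\sbXbar\supIndkpone)-\sbB$ as $\mu(\sbLambda\supIndk-\sbLambda\supIndkpone)$. Pairing these inclusions against the KKT conditions satisfied by $(\sbXbar^*,\sbE^*,\sbLambda^*)$ and invoking monotonicity of $\partial F$ and $\partial G$ (from convexity of $F$ and $G$) produces cross terms that reassemble, after completing squares, into the $\calH$-weighted distances on the left. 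The delicate point is controlling the error incurred by linearizing $f(\sbXbar)=\tfrac12\|\calA(\sbXbar)+\sbD\|^2$ in the proximal $\sbXbar$-step: the surviving quadratic form is governed by $\tfrac{1}{\eta}\,I-\calA^*\calA$, whose smallest eigenvalue equals $\tfrac{1}{\eta}-\lambda_{\max}(\bA\transp\bA)=\tfrac{1}{\eta}-N$. This is exactly where the hypothesis $0<\eta<\tfrac1N$ enters, since it forces this quantity to be strictly positive and thereby yields a strictly positive constant $\alpha$; following the bookkeeping of \cite{ma2012admm,yang2009alternating}, I would assemble these pieces and exhibit an explicit such $\alpha$.
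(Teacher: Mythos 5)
Your proposal is correct and follows essentially the same route as the paper: both treat the contraction inequality $\|\sbU\supIndk - \sbU^*\|^2_{\calH} - \|\sbU\supIndkpone - \sbU^*\|^2_{\calH} \geq \alpha\|\sbU\supIndk - \sbU\supIndkpone\|^2_{\calH}$ as the crux, defer its detailed derivation to the arguments of \cite{ma2012admm,yang2009alternating}, and obtain the three bullets by the standard Fej\'er-monotonicity consequences (monotone convergence, boundedness, and telescoping). The only difference is that you spell out these routine deductions and the role of the step-size condition $\eta < 1/N$ explicitly, whereas the paper leaves them implicit with ``it follows that the lemma holds.''
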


Lemma \ref{lem:iadal-lem} implies that the sequence $\{(\sbU\supIndk,\sbE\supIndk)\}$ has a subsequence that converges to $(\hat{\sbU},\hat{\sbE})$.  By considering the optimality conditions of the two subproblems in \eqref{eq:horpca-mix-one-iter}, we can then show that any limit point $(\hat{\sbU},\hat{\sbE})$ of the sequence $\{(\sbU\supIndk,\sbE\supIndk)\}$ satisfies the KKT conditions for problem \eqref{eq:horpca-mix-simplified}.  Hence, any limit point of $\{(\sbXbar\supIndk,\sbE\supIndk)\}$ is an optimal solution to problem \eqref{eq:horpca-mix}.
\end{proof}

\section{Lagrangian Version of HoRPCA via FISTA: global convergence rate}\label{sec:app_fista}
Observe that both problems \eqref{eq:horpca-unc} and \eqref{eq:horpca-mix-unc} are in the generic form
\begin{equation}\label{eq:horpca-mix-unc-generic}
    \min_{\sbXtild} \quad l(\sbXtild) + R(\sbXtild),
\end{equation}
where $R(\sbXtild) := \lambda_*\sumOneToN \|\modeiXi\|_* + \lambda_1\|\sbE\|_1$, and $l(\sbXtild) := \frac{1}{2}\|\calA(\sbXtild) - \tilde{\sbB}\|^2$.    The loss function $l(\cdot)$ is thus Lipschitz continuous, and the Lipschitz constant $L(l)$ associated with the gradient $\nabla l$ is $N+1$ for both models.  Hence, no line-search is required for the FISTA Algorithm \ref{alg:fista-horpca}.  In addition, the regularization function $R(\cdot)$, i.e. the sum of the trace norms and the $L_1$ norm, is decomposable in the decision variables, and the proximal operator associated with each of those norms can be computed in closed-form, i.e. $\calT_\mu(\cdot)$ for the trace norms and $\calS_\mu(\cdot)$ for the $L_1$ norm.

It has been shown that this class of algorithms achieves the optimal global convergence rate (in terms of iteration complexity) of $\calO\left(\frac{1}{\sqrt{\epsilon}}\right)$ among all the first-order methods, where $\epsilon$ is the target distance between the obtained solution and the optimal solution.

Small values of $\lambda_*$ and $\lambda_1$ often lead to slow convergence of FISTA.  To alleviate that problem, we adopt a fast continuation scheme which has been shown to be effective in \cite{lin2009fast,toh2010accelerated}.  We express $\lambda_1$ as $r\lambda_*$, where $r$ is a given parameter.  The key is to start the algorithm with a large initial $\lambda_*^0$ and decrease $\lambda_*$ by a factor of $\eta$ after each iteration until it reaches the desired minimum value $\bar{\lambda_*}$.  In our experiments, we found that the speedup gained from using this scheme as opposed to applying FISTA directly with a small $\lambda_*$ could be more than an order of magnitude.

Since the function $l(\cdot)$ in problem \eqref{eq:horpca-mix-unc-generic} is continuously differentiable with a Lipschitz-continuous gradient, we have the following global convergence rate result for Algorithm \ref{alg:fista-horpca} as a consequence of Theorem 4.4 in \cite{beckteboulle}.  The continuation scheme does not affect the validity of the proof since $\lambda_*$ converges to $\bar{\lambda_*} > 0$ in the long run \cite{lin2009fast}.
\begin{thm}\label{thm:horpca_p_conv}
Define $F(\sbXtild) := l(\sbXtild) + R(\sbXtild)$, the objective function of problem \eqref{eq:horpca-mix-unc-generic}.  Let $\{\sbXtild\supIndk,\sbYtild\supIndk\}$ be the sequence generated by Algorithm \ref{alg:fista-horpca}, and $\bar{k} = \log_\eta\left( \frac{\lambda_*^0}{\bar{\lambda_*}} \right)$.  Then for any integer $k>\bar{k}$, $F(\sbXtild\supIndk) - F(\sbXtild^*) \leq \frac{2\left( \sumOneToN\|\sbX_i^{(k)}-\sbX_i^*\|^2 + \|\sbE^{(k)}-\sbE^*\|^2 \right)}{(k+1-\bar{k})^2}$,
where $\sbXtild^*$ is the optimal solution of problem \eqref{eq:horpca-mix-unc-generic}.
\end{thm}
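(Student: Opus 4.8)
The plan is to recognize problem \eqref{eq:horpca-mix-unc-generic} as an instance of the composite convex minimization problem analyzed by FISTA in \cite{beckteboulle}, and then to apply Theorem 4.4 of that paper to the \emph{tail} of the iteration sequence, after the continuation schedule on $\lambda_*$ has terminated. First I would verify the structural hypotheses of the FISTA framework. The smooth part $l(\sbXtild) = \frac{1}{2}\|\calA(\sbXtild) - \sbBtild\|^2$ is convex and continuously differentiable, and, as already noted in this appendix, its gradient is Lipschitz continuous with constant $L = N+1$ for both the Singleton and Mixture instances; since Algorithm \ref{alg:fista-horpca} uses the fixed step size $\mu = \frac{1}{N+1} = \frac{1}{L}$, no backtracking line search is required. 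The nonsmooth part $R(\sbXtild) = \lambda_*\sumOneToN\|\modeiXi\|_* + \lambda_1\|\sbE\|_1$ is proper, closed, and convex, and it is separable across the blocks $\sbX_1,\dots,\sbX_N,\sbE$, so its proximal map factors into the singular-value thresholding operators $\calT_{i,\mu\lambda_*}$ and the shrinkage operator $\calS_{\mu\lambda_1}$. This confirms that one pass of the inner steps of Algorithm \ref{alg:fista-horpca} implements exactly one FISTA proximal-gradient step together with the Nesterov extrapolation governed by the weights $t_k$ from the recursion $t_{k+1} = \frac{1+\sqrt{1+4t_k^2}}{2}$.

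The one feature that distinguishes Algorithm \ref{alg:fista-horpca} from plain FISTA is the continuation scheme. The key observation is that the update $\lambda_* \gets \max(\eta\lambda_*, \bar{\lambda_*})$ decreases $\lambda_*$ geometrically from $\lambda_*^0$ and reaches the floor $\bar{\lambda_*}$ after exactly $\bar{k} = \log_\eta\left(\frac{\lambda_*^0}{\bar{\lambda_*}}\right)$ iterations. Consequently, for every integer $k > \bar{k}$ the weights $\lambda_*$ and $\lambda_1 = r\lambda_*$ are frozen, so the objective $F = l + R$ is a single fixed convex function, and the iterates $\{\sbXtild\supIndk, \sbYtild\supIndk\}_{k > \bar{k}}$ are precisely those produced by standard FISTA applied to this fixed problem, started from the warm configuration $(\sbXtild^{(\bar{k})}, \sbYtild^{(\bar{k})})$. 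I would then invoke Theorem 4.4 of \cite{beckteboulle} for this stationary-objective tail: re-indexing so that step $\bar{k}$ plays the role of the initial iterate, the FISTA estimate yields a bound of the form $F(\sbXtild\supIndk) - F(\sbXtild^*) \le \frac{2L\,\|\sbXtild^{(\bar{k})}-\sbXtild^*\|^2}{(k+1-\bar{k})^2}$, which gives the stated displayed bound after expanding $\|\sbXtild-\sbXtild^*\|^2 = \sumOneToN\|\sbX_i-\sbX_i^*\|^2 + \|\sbE-\sbE^*\|^2$.

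The main obstacle is justifying that the continuation does \emph{not} spoil the optimal $\calO\!\left(\frac{1}{k^2}\right)$ rate. The delicate point is the transient phase $k \le \bar{k}$, during which $F$ is itself changing, so the monotone estimate-sequence inequality underlying the Beck--Teboulle proof need not hold for the evolving objective. The clean resolution, which is exactly the one used in \cite{lin2009fast}, is that because $\lambda_*$ converges to $\bar{\lambda_*} > 0$ in finitely many ($\bar{k}$) steps, one simply discards the transient iterations and applies the convergence estimate only to the stationary-objective tail; this accounts for the shift $k \mapsto k - \bar{k}$ in the denominator. The remaining technical check is that the momentum weights $t_k$ accumulated through the transient phase still satisfy the inequalities $t_k \ge \frac{k+1}{2}$ required by the estimate sequence argument (equivalently, that carrying over the inflated $t_{\bar{k}}$ rather than resetting it can only improve the bound), which I would verify by induction on the FISTA recursion. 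Establishing this compatibility of the continuation bookkeeping with the Beck--Teboulle estimate is the crux; once it is in place, the remaining steps are the routine verification already sketched above.
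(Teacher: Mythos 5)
Your proposal is correct and takes essentially the same route as the paper: the paper's proof consists precisely of verifying that $l$ has a Lipschitz-continuous gradient with constant $N+1$ and that the proximal maps of $R$ are computable in closed form, invoking Theorem 4.4 of \cite{beckteboulle}, and dismissing the continuation scheme on the grounds that $\lambda_*$ reaches its floor $\bar{\lambda_*}>0$ after finitely many iterations \cite{lin2009fast}, which is exactly your tail-of-FISTA argument with the $k \mapsto k-\bar{k}$ shift. Your closing concern about the non-reset momentum weights $t_k$ flags a genuine subtlety that the paper's one-line justification glosses over entirely, so your version is, if anything, the more careful rendering of the same proof.
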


\section{Proof of Lemma \ref{lem:ncx-conv}}\label{sec:app_conv_ncx}
\begin{proof}
First, we use a technique from \cite{shen2011augmented} to convert condition \eqref{eq:ncx-kkt3} into a more useful form.  Note that \eqref{eq:ncx-kkt3} is equivalent to
\begin{equation}\label{eq:ncx-kkt3a}
   N\sbE + \mu\sumOneToN\sbLambda_i \in \mu\partial\|N\sbE\|_1 + N\sbE \equiv \calQ_\mu(N\sbE).
\end{equation}
We can verify that $\calQ_\mu(\cdot) := \mu\partial \|\cdot\|_1 + \cdot$ is monotone element-wise and $\calQ\inv_\mu(\cdot) \equiv \calS_\mu(\cdot)$.  Applying $\calQ^{-1}_\mu(\cdot)$ to both sides of \eqref{eq:ncx-kkt3a} and considering $\fold_i(\bU_i\bV_i) + \sbE = \sbB, i = 1,\cdots,N$, we obtain
\begin{equation}\label{eq:ncx-kkt3b}
    \sbE = \frac{1}{N}\calS_\mu\left( \sumOneToN\sbB + \mu\sbLambda_i - \fold_i(\bU_i\bV_i) \right).
\end{equation}

Since the sequence $\setkOneToInf{\sbW\supIndk}$ is bounded, there exists a subsequence $\setjOneToInf{\sbW\supInd{k_j}}$ that converges to $\sbW^*$, a limit point of $\setkOneToInf{\sbW\supIndk}$.  Let $\modeiXi^* = \bU_i^*\bV_i^*$, obtained from the SVD of $\modeiXi^*$.  By Line \ref{line:horpca-c-E} in Algorithm \ref{alg:horpca-ncx} and the identity $\modeiXi^* = \bU_i^*\bV_i^*$, it is easy to see that $\sbW^*$ satisfies condition \eqref{eq:ncx-kkt3b}.  By Line \ref{line:horpca-c-Lambda} and $\sbLambda_i\supIndkpone - \sbLambda_i\supIndk \rightarrow \bzero$, we have $\fold_i(\bU_i\supIndk\bV_i\supIndk) + \sbE\supIndk - \sbB \rightarrow \bzero$.  Hence, $\sbW^*$ satisfies condition \eqref{eq:ncx-kkt2}.

Now, we focus on condition \eqref{eq:ncx-kkt1}.  By Line \ref{line:horpca-c-X}, we have
\begin{equation}\label{eq:ncx-limit-svd}
    \bU_i^*\bV_i^* = \calP_{r_i}(\bU_i^*\bV_i^* + \mu\modeiLambdai^*).
\end{equation}
We discuss two cases for $\modeiLambdai^*$.  (i) If $\modeiLambdai^* = \bzero$, then $\sbE^* = \bzero$ by condition \eqref{eq:ncx-kkt3}.  In this case, however, $\bU_i^*$ and $\bV_i^*$ cannot satisfy condition \eqref{eq:ncx-kkt2} unless $(r_1,\cdots,r_N)$ is equal to the Tucker rank of $\sbB$, following \eqref{eq:ncx-limit-svd}.  This contradicts our assumptions.  (ii) $\sbLambda_i^*$ is non-zero.  Then, by \eqref{eq:ncx-limit-svd}, it is not hard to see that there exists a factorization $\modeiLambdai^* = \bU_{\sbLambda_i}^*\bV_{\sbLambda_i}^*$ (e.g, by SVD) such that $(\bU_{\sbLambda_i}^*)\transp\bU_i^* = \bzero$ and $(\bV_i^*)\transp\bV_{\sbLambda_i}^* = \bzero$.  It immediately follows that $\bU_i^*,\bV_i^*,\sbLambda_i^*, i=1,\cdots,N,$ satisfy condition \eqref{eq:ncx-kkt1}.
\end{proof}

\section{Data Sets}
\subsection{Amino Acid Fluorescence Data}\label{sec:app_amino_acid}
The amino acid fluorescence excitation-emission (EEM) data was originally prepared by Bro and Andersson to demonstrate the PARAFAC decomposition.  It was also used in \cite{tomioka2010estimation} for tensor completion experiments.  The data measures the fluorescence of five solutions which have different compositions of three amino acids.  The data tensor is of a size 5$\times$201$\times$61, and the three dimensions correspond to emission wavelength, excitation wavelength, and samples, respectively.  Since the underlying factors are the three amino acids, this data set has an approximately low rank of 3 (or a Tucker rank of (3,3,3)).

\subsection{The (partial) YaleB Database}\label{sec:app_YaleB}
The YaleB database is a face image ensemble consisting of different poses and angles of illumination.  We used the cropped version of the database and selected the face images of five people, each has 40 different illuminations and one pose.  Each image is 64$\times$56 grey-scale and vectorized.  The resulting data tensor is 3584$\times$40$\times$5.  Similar face ensembles have been studied using Tucker decomposition, and the factors extracted are used to construct the so-called `Tensor Faces'.

\subsection{The Dorrit Data}\label{sec:app_dorrit_data}
This fluorescence EEM data set was produced by Dorrit Baunsgaard for fundamental studies on PARAFAC \cite{baunsgaard1999phd}.  Synthetic samples containing mixture of four analytes (hydroquinone, tryptophan, phenylalanine and dopa) at different concentrations were measured in a Perkin-Elmer LS50 B fluorescence spectrometer. The preprocessed data \cite{riu2003jack} consists of 27 fluorescence landscapes corresponding to 27 samples, each contains 116 emission wavelengths from 251nm to 481nm at 2nm intervals and 18 excitation wavelengths from 230nm to 315 nm taken every 5 nm.  Hence, the dimension of the data tensor is $27\times 116\times 18$.  Ideally, a four-way CP decomposition is sufficient to analyze this data.  However, this data set is known to contain both scattering and outlying samples \cite{engelen2009fully}, which can adversely degrade the results of CP.  The Dorrit data has often been used for testing robust tensor decomposition algorithms \cite{engelen2011detecting,engelen2009fully,rousseeuw2006robustness}.

\subsection{The Game Data}\label{sec:app_game_data}
We captured a series of screen-shots of a Flash$^\circledR$ game at a frequency of one frame per second.  The goal of the game is to protect the food at the center of the table from emerging roaches.  The player is to use the mouse (which appears as a sandal) to click (hit) on the moving roaches.  We collected 27 colored frames, each of which has a resolution of 86$\times$130, forming a 86$\times$130$\times$3$\times$27 tensor.  Figure \ref{fig:game-orig} shows two frames from the original data.  The dynamic foreground (the roaches and the sandal) occupies a small portion of the frame, and the static background (the table and the major part of the statistics bars) has a simple structure.  Hence, the background of this data set is a typical low-rank tensor.

\begin{figure}
\begin{center}
    \hspace*{0in}\includegraphics[width=0.2\textwidth]{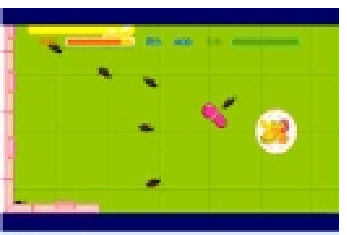}
    \hspace*{0.2in}\includegraphics[width=0.2\textwidth]{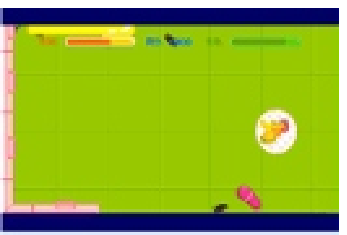}
    \caption{The original 10th and 20th frames of the Game data set.}
    \label{fig:game-orig}
\end{center}
\end{figure}

\section{Parameter Set-up for Algorithms}\label{sec:app_params}
For HoRPCA-S, HoRPCA-M, and RPCA, the parameter to tune is $\lambda_1$ ($\lambda_* = 1)$; for HoRPCA-SP and HoRPCA-MP, we need to tune $\lambda_1$ and $\lambda_*$.  Let $I_\textrm{max} = \max(I_1,\cdots,I_N)$ for both matrices and tensors.  \cite{candes2009robust} suggests that a good choice for $\lambda_1$ is $r \equiv \frac{1}{\sqrt{I_\textrm{max}}}$ for RPCA.  In our experiments, we follow a similar heuristic and set $\lambda_1 = \alpha r\lambda_*$, where $\alpha$ is to be tuned. Our experience is that for the Singleton model, $\alpha$ is usually around 1, and the value of $a$ for the Mixture model is about $\frac{1}{N}$ of that for the Singleton model.  In theory, we need not tune $\lambda_*$ for the Lagrangian version of the models since we let $\lambda_* \rightarrow \bar{\lambda_*}\approx 0$ through continuation.  In practice, we found that when the percentage of corruptions is larger than a threshold (e.g. 20\% for the synthetic data), a very small $\lambda_*$ (and also $\lambda_1$) often produced inferior recovery accuracy.  So in that case, we had to tune $\lambda_*$ as well and kept the minimum value $\bar{\lambda_*}$ higher.  We kept $\mu$ constant for the ADAL-based convex algorithms for simplicity and set $\mu = 10\textrm{std}(\rmvec(\sbB))$ unless otherwise specified.  Note that further tuning of $\mu$ with more sophisticated updating scheme could improve the computational performance of the ADAL algorithms.  For HoRPCA-C, we started with $\mu=1$ and decreased it by a factor of 0.9 every 10 iterations with the minimum value of $1e-4$.  The Tucker-ranks were set to the true values in Sections \ref{sec:amino} and \ref{sec:dorrit} and to the Tucker-ranks obtained from the best-performing convex algorithms for the other sets of real data.  The setting of the Tucker-ranks for the synthetic data was detailed in Section \ref{sec:syn-data}.

The parameters for the continuation scheme of the FISTA-based algorithms were chosen as follows: We set the initial value $\lambda_*^0 = 0.99 \|\sbB\|$ (or $\|\sbB_\Omega\|)$.  The default value of $\bar{\lambda_*}$ was $\delta\lambda_*^0$, where $\delta = $1e-5.  The factor for decreasing $\lambda_*$ in each iteration was $\eta = 0.97$.

We employed the following stopping criteria to determine if an algorithm has converged.  For the ADAL-based algorithms, we monitored the relative primal and dual residuals as in \cite{boyd2010distributed,qin2011structured} and stopped the algorithms when the maximum of the two quantities decreased to below 1e-3. For the FISTA-based algorithms, we stopped the algorithms when the relative primal residual and relative difference between two consecutive iterates were both below 1e-4.  This threshold value was chosen to match the recovery results of the FISTA algorithms roughly with those of the ADAL algorithms, given the same values for $\lambda_1$.


\end{document}